\documentclass{article}

\usepackage{latest_iclr,times}

\usepackage[utf8]{inputenc} 
\usepackage[T1]{fontenc}    
\usepackage{hyperref}       
\usepackage{url}            
\usepackage{booktabs}       
\usepackage{amsfonts}       
\usepackage{nicefrac}       
\usepackage{microtype}      
\usepackage{xcolor}         
\usepackage{algorithm}
\usepackage{algpseudocode}
\usepackage[export]{adjustbox}
\usepackage{caption}
\usepackage{subcaption}
\usepackage{multirow}
\usepackage{enumerate}
\usepackage{makecell}
\usepackage{tikz}
\usepackage{dsfont}
\usepackage{mathtools}
\usepackage{amsmath, amssymb, amsthm}
\usepackage{enumitem}
\usepackage[most]{tcolorbox}
\usepackage{wrapfig}
\usepackage{float}

\usepackage{array}

\definecolor{lavender}{RGB}{230, 230, 249} 
\definecolor{aliceblue}{RGB}{242, 248, 255} 
\definecolor{blu}{RGB}{153, 204, 255} 
\definecolor{purple}{RGB}{149, 115, 239} 
\definecolor{darkpurple}{RGB}{115, 109, 160} 
\definecolor{pink}{RGB}{255, 114, 208}

\newcommand{\DarkPurpleComment}[1]{\hfill{\color{darkpurple}$\triangleright$~#1}}

\newcommand{\E}{\mathbb{E}}

\tcbset{
  mybase/.style={
    coltext=black,
    fonttitle=\bfseries,
    boxrule=0pt,
    arc=2.5mm,
    left=4pt,
    right=4pt,
    top=-4pt,
    bottom=2pt,
    width=0.88\linewidth,
  }
}

\newtcolorbox{optionbox1}[1][]{
  mybase,
  colframe=blu!75!black,
  colback=blu!20!white,
  before=\vspace{1pt},
  after=\vspace{-2pt},
  width=0.65\linewidth,
  #1
}

\newtcolorbox{optionbox2}[1][]{
  mybase,
  colframe=pur!75!black,
  colback=pur!50!white,
  before=\vspace{0pt},
  after=\vspace{0pt},
  width=0.65\linewidth,
  #1
}
\newtcolorbox{optionbox3}[1][]{
  mybase,
  colframe=white,
  colback=purple!6!white,
  before=\vspace{-2pt},
  after=\vspace{-2pt},
  width=\linewidth,
  #1
}

\newtcolorbox{optionbox4}[1][]{
  mybase,
  colframe=orange!75!black,
  colback=orange!50!white,
  before=\vspace{0pt},
  after=\vspace{-2pt},
  width=0.4\linewidth,
  #1
}
\newtheorem{theorem}{Theorem}[section]

\newtheorem{lemma}[theorem]{Lemma}
\newtheorem{assumption}{Assumption}

\newcommand\numberthis{\addtocounter{equation}{1}\tag{\theequation}}

\title{
Principled Top-$k$ Selection for Language\\Models with Hybrid Gradients
}

\author{Xuchen Gong, Junfei Sun, Tian Li \\
University of Chicago\\
\texttt{\{xuchengo,junfeisun,litian\}@uchicago.edu} \\
}

\iclrfinalcopy
\begin{document}

\maketitle
\begin{abstract}
    Selecting the best $k$ items out of $m$ candidates is a critical component of modern large language model systems, such as document selection in Retrieval-Augmented Generation (RAG)  and expert routing in Mixture-of-Experts (MoEs). However, training these selection modules remains challenging due to weak gradient signals and suboptimal exploration-exploitation tradeoffs. Furthermore, prior works often rely on heuristics, lacking principled objectives and approaches that explicitly model and solve the top-$k$ selection problem. In this work, we propose a principled objective for training selection modules, whose gradient naturally provides richer training signals in a hybrid form---containing a supervised-gradient component and a policy-gradient component. We show that the selection problem becomes harder as $m$ increases, and our algorithm converges at rate $O(1/\sqrt{T})$, with the optimal upper bound achieved by balancing between bias and variance. Practically, we apply our method to a set of tasks involving top-$k$ selection, including synthetic regression problems, RAG, and MoE systems, showing that our method outperforms the baselines in next-token prediction perplexity and QA accuracies.
\end{abstract}
\section{Introduction} \label{sec:intro}

In machine learning systems, selecting $k$ items from a total of $m$ candidates is a common and critical component. We consider the following abstraction. Given a training data $(x,y) \in \mathcal{D}$, suppose a model (or part of a model) is composed of a \textit{selection module} $\theta$, which outputs a score of each item $s_{\theta}(i|x)$, and a \textit{prediction module} $\phi$, which takes the selected item $i$ and runs the forward pass to obtain predictions $h_{\phi}(x,i)$. Following the conditional computation graph, if we were able to consider the outputs of all the items, the loss can be formulated as
\begin{align}
 \min_{\theta, \phi} F(\theta,\phi) = \frac{1}{|\mathcal{D}|} \sum_{(x,y)\in\mathcal{D}} \mathcal{L} \left(y, \sum_{i=1}^m p_{\theta}(i|x)  h_{\phi}(x,i)\right), \quad p_{\theta}(i|x) = \frac{e^{s_\theta (i|x)}}{\sum_{j=1}^m e^{s_\theta (j|x)}}.
 \label{eq:true_obj}
\end{align}
Here, $\mathcal{L}$ can be an arbitrary loss such as mean-squared-error in regression and cross-entropy in classification. 
In practice, however, this objective can be impractical and expensive to optimize. The candidate pool size $m$ continuously increases, such as the increasing number of experts in MoE and the increasing size of candidate document pools in RAG systems. As it is infeasible to incorporate all the $m$ candidates during training or inference, we typically select the most promising $k$ items to generate predictions by training top-$k$ selection modules, such as routers in MoE and rerankers in RAG. A common compromise is to simply apply the summation over (noisy) top-$k$ items \citep{in-context,lewis2020retrieval,realm}, i.e., selecting  $I_x = \arg\text{top-}k_{i\in[m]}(s_{\theta}(i|x))$, and optimizing 
\looseness=-1
{\setlength{\abovedisplayskip}{4pt}
\setlength{\belowdisplayskip}{4pt}
\begin{align}
 \min_{\theta, \phi} \frac{1}{|\mathcal{D}|} \sum_{(x,y)\in\mathcal{D}} \mathcal{L} \left(y, \sum_{i\in I_x} p_{\theta}(i|x)  h_{\phi}(x,i)\right), \quad p_{\theta}(i|x) = \frac{e^{s_\theta (i|x)}}{\sum_{j\in I_x} e^{s_\theta (j|x)}}.
 \label{eq:existing_obj}
\end{align}
}

However, several challenges in top-$k$ selection still remain. There exist numerous updating rules that do not explicitly model the selection procedure into the optimization problem, but rather rely on generic end-to-end training losses to update all trainable variables~\citep{outrageously,deepseekmoe,switchtransformer}. Understanding the dynamics of the underlying selection mechanisms remain open. 
Even if the selection objective exists, the connection between existing formulations (such as Eq. \eqref{eq:existing_obj}) and the true objective Eq. \eqref{eq:true_obj} is unclear.  {In addition}, the top-$k$ operation is generally not differentiable, and the gradient signal for training the selection module can be suboptimal. For instance, in Eq.~\eqref{eq:existing_obj}, the gradient of the selection module $\theta$ flows via the aggregation weights in the truncated summation over $k$ items. Therefore, the gradient over $\theta$ teaches how to adjust weights of the \textit{already selected} items in $I_x$, but does not directly supervise \textit{what items to select among all $m$ options}. 
This lack of training signals for selection has been a well-known bottleneck of training selection modules \citep{liu2023sparse,panda2025dense}, and the issue is more pronounced as $m$ increases, leading to a growing search space $O(m^k)$. \looseness=-1

When the loss $\mathcal{L}$ is the negative log-likelihood,  
the per-sample objective can equivalently be expressed in terms of maximizing $p(y|x)$. Introducing either an item-level latent variable $i$ or a subset-level latent variable $I$, we can decompose
$p(y|x)$ as $ p(y|x) = \sum_{i} p(y|x,i) p(i|x) = \sum_{I} p( y|x,I) p(I|x)$. This partially motivates our proposed set-level objective. Additionally, unbiased sampling following an item-level distribution may result in selecting the same item more than once, which does not match common practical implementations. 
Therefore, we propose the following surrogate objective where we perform top-$k$  selection at the \textbf{set level}, as opposed to the item level: 
{\setlength{\abovedisplayskip}{5pt}
\setlength{\belowdisplayskip}{5pt}
\begin{align}
\label{eq:our_obj}
    J(\theta,\phi)  =  \frac{1}{|\mathcal{D}|}\sum_{(x,y)\in\mathcal{D}} J_x(\theta,\phi) =  \frac{1}{|\mathcal{D}|}\sum_{(x,y)\in\mathcal{D}} \mathbb{E}_{I \sim \pi_{\theta}(\cdot|x)} \left[ \mathcal{L}(y, f_{\theta,\phi}(x, I)) \right],\quad |I|=k,
\vspace{-1em}
\end{align}
\noindent where $J_x$ denotes the per-sample loss over $(x,y)$,
}
$\pi_{\theta}(\cdot|x)$ is a distribution \textit{over subsets with cardinality $k$}, and $f_{\theta,\phi}(x, I)$ denotes a general weighted aggregation function to combine selected items in $I$. In Section~\ref{sec:problem:obj}), we discuss some specific requirements on $\mathcal{L}$ and $f_{\theta,\phi}(x, I)$ such that  Objective~\eqref{eq:our_obj} can be viewed as an upper bound of Objective~\eqref{eq:true_obj}. 

\begin{wrapfigure}{r}{0.25\textwidth}
  \centering     
  \vspace{-0.5cm}
  \includegraphics[width=\linewidth]{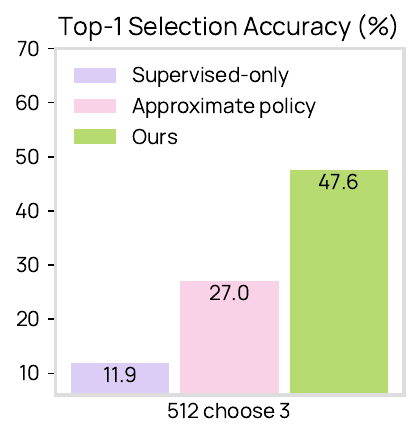}
  \caption{Our method outperforms prior ones that abandon or approximate the policy gradient. 
  } 
  \label{fig:toy_512c3}
  \vspace{-0.5cm}
\end{wrapfigure}
We develop an algorithm to optimize this objective. The unbiased gradient of  Objective~\eqref{eq:our_obj} naturally comprises two hybrid terms that learn both selection and weight adjustments over the selected subset. As we detail in Section~\ref{sec:problem:obj}, the first term corresponds to a policy gradient, and the second corresponds to a classic  supervised gradient. As the policy gradient could have high variance, we additionally introduce a scalar $\lambda$ in front of the policy gradient to adjust the  tradeoff between the two terms. In Figure \ref{fig:toy_512c3}, our method achieves an improved bias/variance tradeoff and outperforms existing approaches with only the supervised gradient   \citep{supervisedonly} or approximating the policy gradient \citep{liu2023sparse,liu2024grin}  by more than 20\% in top-1 selection accuracy (details in Section \ref{sec:regression}). 
\looseness=-1

Theoretically, we prove that our training method converges to  stationary points of our objective and to neighborhoods of stationary points of the true objective \eqref{eq:true_obj} at a rate of $O(1/\sqrt{T})$. The bias/variance tradeoff and thus the convergence upper bound is adjusted by $\lambda$, whose optimal value is problem-dependent. 
We explore a synthetic regression problem to illustrate the difficulty of learning the selection module and the superiority of our method in exploring more subsets and improving the final performance. 
Furthermore, we apply our method to language models, including finetuning rerankers for in-context RAG and training MoE models from scratch. We observe that our method consistently outperforms the existing methods under all candidate pool sizes $m$. Our contributions are summarized as follows. 

\textbf{Contributions.}
    We formalize the problem of sparse top-$k$ selection in language models and propose a surrogate objective that directly optimizes over selected subsets of items, which also serves as an upper bound of the true objective for certain functions. This objective naturally yields hybrid gradients with two components (policy and supervised parts) that offer richer gradient signals for  top-$k$ selection than prior approaches. We establish convergence guarantees of our algorithm, and analyze the  optimal bias/variance tradeoff.
     We evaluate our approach on different losses and tasks, including mean-squared error regression and cross-entropy loss on language tasks. Particularly, we fine-tune rerankers in RAG systems for language modeling and question answering (QA), and we train MoE models for different numbers of experts for language modeling from scratch. Our approach consistently outperforms strong baselines in terms of perplexity and task-specific metrics. 


\section{Related Work} \label{sec:related}
\vspace{-0.5em}
\textbf{Top-$k$ selection.} Selecting a sparse subset of $k$ items from a large candidate pool is a challenging problem due to the non-differentiable top-$k$ operator. A line of work proposes differentiable ranking and relaxations of discrete selection, such as stochastic softmax tricks \citep{paulus2020gradient}, optimal-transport-based approximations \citep{xie2020differentiable}, PiRank \citep{swezey2021pirank}, Laplace-based relaxations \citep{struski2025lapsum}, and linear-time top-$k$ operators for recommendation \citep{zhu2025differentiable}. While these methods improve gradient-based optimization, they typically require dense forward passes over all candidates or violate strict sparsity constraints \citep{rosenbaum2017routing,bengio2015conditional}, making them incompatible with applications considered in this work such as strict $k$-expert MoE routing. 
\looseness=-1

\textbf{Retrieval-augmented generative models.} Classical RAG training optimizes a marginal likelihood over latent documents, typically approximated using deterministic top-$k$ selection \citep{izacard2023atlas,lewis2020retrieval,guu2020retrieval,borgeaud2022improving,izacard2021leveraging}. More recent approaches improve retrieval via reranking \citep{nogueira2019multi,nogueira2019passage,rankt5,qin2024large} or alignment with language model signals \citep{replug,in-context}. Some work optimizes retrieval with reinforcement learning signals such as answer correctness or heuristic feedback \citep{li2025r3,ma2023query,zhang2024reinforcement,basu2024statistical}. In contrast, our method yields an unbiased hybrid gradient in which the policy gradient component corresponds to a single-sample REINFORCE estimator \citep{williams1992simple,sutton1999policy}.
\looseness=-1

\textbf{Mixture-of-experts.}
MoE architectures scale model capacity via sparse activation of experts \citep{jacobs1991adaptive,jordan1994hierarchical,outrageously,switchtransformer,lepikhin2020gshard,lewis2021base,zhou2022mixture,zoph2022st,deepseekmoe,mixtralmoe}, but training the router remains challenging due to suboptimal gradient signals flowing through discrete top-$k$ selection. Prior work relies on load-balancing losses \citep{riquelme2021scaling,lepikhin2020gshard,outrageously,switchtransformer,deepseekmoe,lewis2021base} or noisy gating \citep{outrageously,supervisedonly,switchtransformer,lepikhin2020gshard,zoph2022st,riquelme2021scaling} for improved exploration/exploitation tradeoffs in expert selection. More recent work improves gradient flow through the router, including proxy gradients \citep{panda2025dense} and higher-order approximations to the policy gradient \citep{liu2023sparse,liu2024grin}. However, many of these approaches remain heuristics, lack gradient signals for unselected items,  or follow objectives that are inconsistent with practical deployment by sampling each expert independently. In contrast, our framework explicitly models subset selection as stochastic sampling over unordered $k$-sized sets and derives a principled objective and unbiased hybrid gradient composed of policy and supervised parts. 
\looseness=-1

\section{Objective and Hybrid Gradients} \label{sec:problem}

Let $\mathcal{D}$ be the training set containing a collection of samples $\{(x, y)\}$ and $\mathcal{L}(y,\hat y)$ be a general per-sample loss, such as mean squared error in regression or negative log-likelihood in language modeling. 
In various language models involving selecting $k$ items, the optimization variables consist of two groups: the selection parameters $\theta$ (e.g.,  rerankers in RAG or the router in MoE), and the remaining prediction parameters $\phi$ (e.g., the language model in RAG or the experts in MoE). The selection module $\theta$ produces a score $s_{\theta}(i|x), i\in[m]$ for each  item, from which we obtain a \textit{probability distribution over subsets} $\pi_\theta(I|x)$ where $|I|=k$. We then sample a subset $I$ from $\pi_\theta(I|x)$ and feed it to the weighted aggregation function $f_{\theta,\phi}(x,I)$. 
In Figure \ref{fig:framework}, we illustrate the forward pass of one selection decision and prediction (left) and its instantiations in RAG (middle) and MoE (right). 


\begin{figure}[h!]
    \centering
    \includegraphics[width=\linewidth]{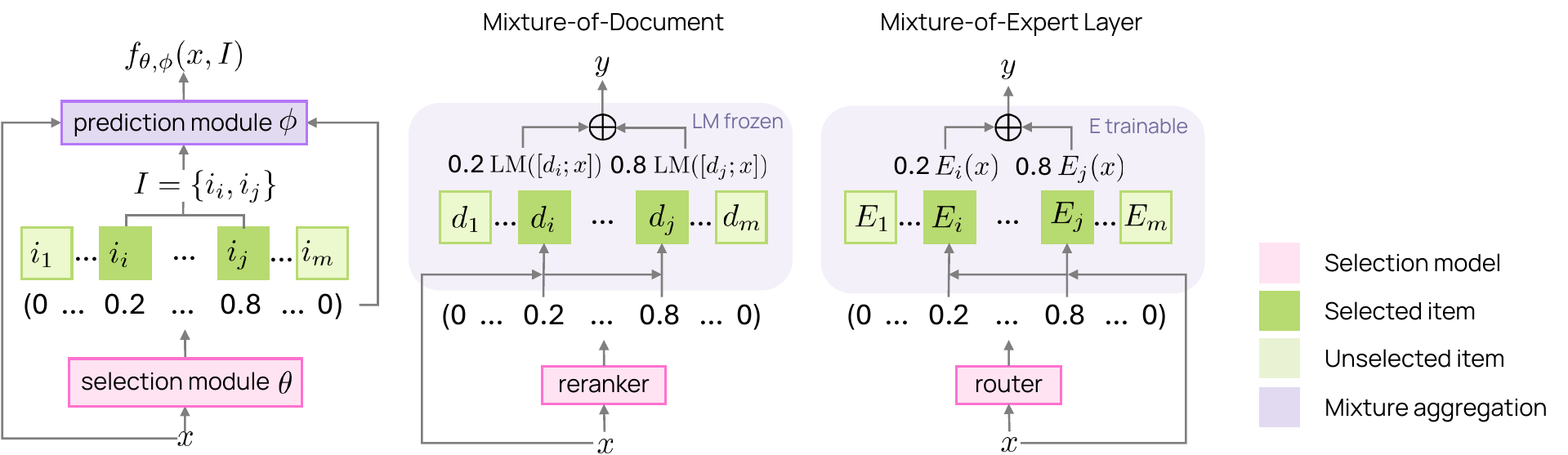}
    \caption{Our framework (left) and its applications to RAG and MoE. Middle: In RAG, given input query $x$, the reranker selects a subset of the most relevant documents to augment $x$ to predict the target $y$. RAG has one selection and prediction module in total, with the prediction module (i.e., LM) frozen in our setting. Right: In each MoE layer, given the input hidden state $x$, the router selects a subset of expert models to compute the output hidden state $y$. The full MoE model contains multiple MoE layers of selection and aggregation, with both the selection and prediction modules trainable. }
    \vspace{-1em}
    \label{fig:framework}
\end{figure}

In the following, we describe the general form of our proposed surrogate objective and its gradients in Section \ref{sec:problem:obj}, discuss methods of sampling from  $\pi_{\theta}$ in Section \ref{sec:problem:selection}, and show how the prediction module performs the subsequent forward computation Section \ref{sec:problem:prediction}. 

\vspace{-0.5em}
\subsection{Surrogate Objective and Hybrid Gradients} \label{sec:problem:obj}
\vspace{-0.5em}

In Eq.~\eqref{eq:our_obj}, we present the objective assuming the number of selection modules is one. In this section, we first examine this objective in detail, discuss its gradients, and then extend it to the case of having multiple selection modules in the model architecture.

Recall that $ p(y|x) = \sum_{i=1}^m p_{\theta}(i|x) p(y|x,i) = \sum_{I} p(I|x) p( y|x,I) $. When the output $ f_{\theta,\phi}(x,I)$ defines a consistent conditional probability given $I$ such that $ f_{\theta,\phi}(x, I)=p(y|x, I)$ 
and loss $\mathcal{L}$ is convex, such as negative log-likelihood in language modeling, our objective (Eq.~\eqref{eq:our_obj}) can be viewed as an upper-bound of the true objective (Eq. \eqref{eq:true_obj}). Specifically, we apply $\sum_{i=1}^m p_{\theta}(i|x) h_{\phi}(x,i) = \sum_{I} \pi_{\theta}(I|x) f_{\theta,\phi}(x,I)$ to Eq. \eqref{eq:true_obj} and by Jensen's inequality, for each sample $x$, we have 
\begin{align}
    \mathcal{L} \left(y, \sum_{I} f_{\theta,\phi}(x,I) \cdot \pi_{\theta}(I|x) \right) \leq \sum_{I} \mathcal{L} (y, f_{\theta,\phi}(x,I)) \cdot \pi_{\theta}(I|x) = \mathbb{E}_{I\sim\pi_{\theta}} [\mathcal{L} (y, x,I;\theta,\phi) ],
    \label{eq:jensen}
\end{align}
where we slightly switch notations in the last step, using $\mathcal{L}(y, x, I; \theta, \phi)$ and $\mathcal{L}(y, f_{\theta,\phi} (x,I))$ interchangeably. 
Since the per-sample loss $J_x(\theta, \phi)=\mathbb{E}_{I \sim \pi_{\theta}(\cdot|x)} \left[ \mathcal{L}(y, x, I; \theta, \phi)\right]$ and  the prediction module $\phi$ is independent of $\theta$, we have 
\begin{align*}
 \nabla_{\phi} J_x(\theta,\phi)  = \sum_{I} \pi_{\theta}(I|x) \nabla_{\phi} \mathcal{L}(y, x, I;\theta,\phi) = \mathbb{E}_{I \sim \pi_{\theta}} \left[ \nabla_{\phi} \mathcal{L}(y, x, I;\theta,\phi)\right].
\end{align*}
In contrast, the selection module $\theta$ affects the per-sample loss $J_x(\theta, \phi)$ in two ways: (i) via subset sampling $\pi_{\theta}(I|x)$ and (ii) via aggregation $f_{\theta,\phi}(x,I)$. We define leave-one-out batch loss $b_x$ as
\looseness=-1
\begin{align}
    b_x := \frac{1}{|\mathcal{B}|-1} \sum_{(x', y')\in \mathcal{B},~ (x', y')\neq (x,y)} \mathcal{L}(y',x',I;\theta,\phi) \quad  \text{for each }x.\label{eq:baseline}
\end{align}
Then applying the product rule and the log-derivative trick yields

\begin{optionbox3}
\begin{align*}
    \nabla_{\theta} J_x(\theta,\phi)
    &= \mathbb{E}_{I \sim \pi_{\theta}} \left[ \underbrace{(\mathcal{L}(y, x, I;\theta,\phi)-b_x) \cdot \nabla_{\theta} \log \pi_{\theta}(I|x)}_{\text{policy gradient}} + \underbrace{\nabla_{\theta} \mathcal{L}(y, x, I;\theta,\phi)}_{\text{supervised gradient}} \right]. \numberthis \label{eq:grad}
\end{align*}
\end{optionbox3}

In the above, we use the leave-one-out batch loss as the baseline $b_x$, and the full derivation and unbiasedness of Eq. \eqref{eq:grad} is proved in Appendix \ref{sec:app:method}. The gradient is composed of two components, the policy gradient and the supervised gradient. While such decomposition is standard for stochastic computation graphs~\citep{kool2020estimating} (named score functions and pathwise term in a different context), in our setting the two terms play complementary roles. The policy gradient provides training signals for \textit{`what items to select'} and learns to explore unselected items. The supervised gradient, in contrast, provides signals for \textit{`how to weigh each item given the selected set'} and improves exploitation.  
\looseness=-1

Although Eq. (\ref{eq:grad}) is the unbiased gradient, the policy gradient is usually missing \citep{dselect,outrageously} or approximated \citep{liu2023sparse,liu2024grin} in existing literature due to the high variance of the raw policy gradient \citep{clark2022unified,rosenbaum2019routing}. In this work, we introduce a scalar $\lambda$ in front of the policy gradient to adjust the bias/variance tradeoff: As $\lambda \to 0$, our gradient is biased with respect to the objective but of low variance; as $\lambda\to 1$, our gradient is unbiased but its variance increases. In our experiments, we conduct a small grid search over  $\lambda$ and show that the optimal test performance is achieved at $\lambda\in (0,1)$.
\looseness=-1

\textbf{Implementation.} Our approach is easy to implement in practice. Denoting \text{SG}($\cdot$) as the stop-gradient operator, we can perform backpropagation on the following per-sample loss (given input $(x, y)$) to obtain the hybrid gradients with hyperparameter $\lambda$:
\begin{align*}
    \texttt{loss} \coloneqq \lambda \;\text{SG}(\ell_x - b_x) \cdot \log \pi_{\theta}(I|x)  + \ell_x, \quad \text{where} \;\; \ell_x = \mathcal{L}(y,x,I;\theta,\phi), \; I\sim\pi_{\theta}(\cdot|x).
\end{align*}

\textbf{Extending to $L$ layers.} We denote $L$ as the number of selection and prediction modules (or layers) in the model architecture. The general form of extending of our proposed Objective~\eqref{eq:our_obj} to $L$ selection modules is as follows. We propose to find the selection parameters $\theta^{(1:L)}$ and prediction parameters $\phi^{(1:L)}$ such that the average loss under stochastic choices of the subsets $I^{(1:L)}$ is minimized:
{\setlength{\abovedisplayskip}{4pt}
\setlength{\belowdisplayskip}{4pt}
\begin{align}
\label{eq:our_obj_multilayer}
    \min_{} \frac{1}{|\mathcal{D}|}\sum_{(x,y)\in\mathcal{D}} \mathbb{E}_{I^{(1:L)} \sim \pi_{\theta^{(1:L)}}(\cdot|x)} \left[ \mathcal{L}(y, x, I^{(1:L)};\theta^{(1:L)},\phi^{(1:L)}) \right].
\end{align}
}
\par\noindent The corresponding gradients and implementation for $L>1$ are similar to the case of $L=1$, and we present the exact format in Section \ref{sec:moe}. 

\vspace{-0.5em}
\subsection{Sampling $I \sim \pi_{\theta}(\cdot | x)$} \label{sec:problem:selection} 
\vspace{-0.5em}

We now describe how we define and sample from the set-level distribution
$\pi_\theta(\cdot | x)$ over subsets $I \subseteq [m]$ of size $k$. More generally, such subset distributions have been studied previously in the context of sequential sampling without replacement~\citep{plackett1975analysis,kool2020estimating}. Given item scores
$s_\theta(i|x)$, we first define a categorical distribution over individual items with temperature $\tau$:
$p_{\theta}(i|x) = \frac{\exp(s_{\theta}(i|x)/\tau)}{\sum_{j=1}^m \exp(s_{\theta}(j|x)/\tau)}$.
 To obtain $k$ distinct items, we sample sequentially from
$p_\theta(\cdot | x)$ without replacement, obtaining an ordered list $L=\{L_1,\ldots,L_k\}$. The probability of sampling this particular order is $
    p_{\theta}(L_1, \ldots, L_k | x) = \prod_{t=1}^k \left[ \frac{ p_{\theta}(L_t|x) }{ 1 - \sum_{s<t} p_{\theta}(L_s|x) } \right]$.
Since our applications use the selected items as an unordered set, we define the probability of obtaining an unordered set $I$ by summing the
probabilities of all possible orderings of its elements.  
Let $\mathcal{S}$ denote the set of all permutations of $L$. Then we have 
{\setlength{\abovedisplayskip}{4pt}
\setlength{\belowdisplayskip}{4pt}
\begin{align}
 \pi_{\theta}(I | x) = \sum_{L'\in \mathcal{S}} p_{\theta}(L'_1, \ldots, L'_k | x). \label{eq:unordered}
\end{align}}
Lemma \ref{lemma:logpi} (proof in  Appendix \ref{sec:app:method}) below formalizes this construction.

\begin{lemma}[\cite{kool2020estimating}]\label{lemma:logpi}
    Sampling $I$ from $\pi_{\theta}$ in Eq. (\ref{eq:unordered}) can be implemented by \texttt{L=multinomial($p_{\theta}(\cdot|x)$, $k$, replacement=False)} and then using $L$ as a set.
\end{lemma}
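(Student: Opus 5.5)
The plan is to pin down what the call \texttt{multinomial($p_{\theta}(\cdot|x)$, $k$, replacement=False)} does, and then show that forgetting the order of its output gives exactly Eq.~\eqref{eq:unordered}.

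\textbf{Step 1: the law of the ordered output.} The sampler draws $L_1$ from $p_\theta(\cdot|x)$. For $t\ge 2$, it draws $L_t$ from $p_\theta(\cdot|x)$ restricted to $[m]\setminus\{L_1,\dots,L_{t-1}\}$ and renormalized. So the conditional probability of $L_t=\ell$ given the earlier draws is $p_\theta(\ell|x)/(1-\sum_{s<t}p_\theta(L_s|x))$ when $\ell$ is unused, and $0$ otherwise. By the chain rule of probability, the probability of any ordered list of $k$ distinct items is the product $p_\theta(L_1,\dots,L_k|x)$ from Section~\ref{sec:problem:selection}. I will also note two facts about this law:
\begin{itemize}[leftmargin=*]
\item any list with a repeated item has probability zero;
\item the probabilities of all length-$k$ lists of distinct items sum to one. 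This follows by summing out $L_k$, then $L_{k-1}$, and so on, because each conditional factor sums to one over the remaining items.
\end{itemize}

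\textbf{Step 2: the law of the unordered set.} Define the map $\sigma$ that sends an ordered list of distinct items to its underlying $k$-subset. Fix a subset $I$ with $|I|=k$. Its preimage $\sigma^{-1}(I)$ is exactly the $k!$ orderings of the elements of $I$, which is the set $\mathcal{S}$ in Eq.~\eqref{eq:unordered}. Distinct orderings are disjoint events, so
\[
\Pr[\sigma(L)=I]=\sum_{L'\in\mathcal{S}}p_\theta(L'_1,\dots,L'_k|x)=\pi_\theta(I|x).
\]
Summing over all $I$ gives one, by the normalization in Step 1. Hence $\pi_\theta(\cdot|x)$ is a genuine distribution on $k$-subsets, and it coincides with the law of the sampler's output read as a set.

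\textbf{Main obstacle.} The only real subtlety is Step 1: one must confirm that the library routine's semantics are exactly sequential renormalized sampling, with no reweighting that depends on the order in which items were drawn. If this is stated as the definition of sampling without replacement, the rest is the bookkeeping above. An alternative route would be Gumbel-top-$k$: take the top $k$ of $s_\theta(i|x)/\tau+G_i$ with independent Gumbel noise $G_i$. I would only mention this route, as in \cite{kool2020estimating}, and not rely on it.
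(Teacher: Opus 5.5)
Your proposal is correct and follows the same route as the paper: the paper's proof is exactly your Step 2, which marginalizes the ordered sequential-sampling law over the $k!$ permutations in $\mathrm{perm}(I)$ to recover Eq.~\eqref{eq:unordered}. Your Step 1 (deriving the ordered law by the chain rule and checking normalization) only makes explicit what the paper takes as the definition of sampling without replacement.
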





\subsection{Application to Language Models} \label{sec:problem:prediction}
\vspace{-0.5em}

After the selection module chooses a subset of items $I$, the prediction module combines the outputs associated with the selected items. The form of this aggregation depends on the application.
In our framework, RAG corresponds to a single selection stage with $L=1$. A reranker samples a set of documents following Eq.~\eqref{eq:unordered}, and a frozen language model computes a prediction conditioned on each selected document. These document-conditioned predictions are then aggregated based on the probabilities of each document (details in Section~\ref{sec:rag}). In contrast, an MoE model contains multiple selection stages with $L>1$. At each MoE layer, a router selects a subset of experts, and their outputs are combined to form the hidden state passed to the next layer. We discuss detailed MoE procedures in Section~\ref{sec:moe}. Thus, RAG performs output-level aggregation once, whereas MoE performs hidden-state aggregation for $L$ times throughout the network.

In the following, we discuss the proposed training procedure in Algorithm \ref{alg:MoT} for $L=1$, and we present the natural extension for $L>1$ in Section \ref{sec:moe}. In line 7, $\mathcal{L}$ represents the MSE loss for regression tasks (Section \ref{sec:regression}) or the cross-entropy loss for RAG and MoE applications (Sections \ref{sec:rag}$-$\ref{sec:moe}).

{\setlength{\textfloatsep}{6pt}
\setlength{\intextsep}{6pt}
\begin{algorithm}
\caption{Training the Selection and Prediction Module with Hybrid Gradients}
\label{alg:MoT}
\begin{algorithmic}[1]
\State \textbf{Input:} Selection parameters $\theta$, prediction parameters $\phi$, $\eta$, $\lambda$, $\tau$
\For{each batch $\mathcal{B}$}
    \For{$(x,y)\in \mathcal{B}$}
        \State $p_{\theta}(\cdot|x) \gets \text{softmax}(s_{\theta}(\cdot|x)/ \tau)$
        \State Sample $k$ items $I \sim \text{Multinomial}(p_{\theta}(\cdot|x))$ without replacement \DarkPurpleComment{Selection (\S \ref{sec:problem:selection})}
        
        \State Compute the probability of the set $\pi_{\theta}(I|x)$ using Eq. (\ref{eq:unordered})
        \State $\ell_{x} \gets \mathcal{L}(y,x,I;\theta,\phi)$ \DarkPurpleComment{Aggregation and compute loss (\S \ref{sec:regression}$-$\ref{sec:moe})}
    \EndFor
    \State Compute $b_x \gets$ leave-one-out loss average for each $x\in\mathcal{B}$ by Eq. \eqref{eq:baseline}
    \State $\ell_\mathcal{B}
    \gets \frac{\lambda}{|\mathcal{B}|} \sum_{x\in \mathcal{B}}\text{SG}\left(\ell_x - b_x\right) \cdot \log \pi_{\theta}(I|x) + \frac{1}{|\mathcal{B}|} \sum_{x\in \mathcal{B}} \ell_{x}$ 
    \State $[\theta,\phi] \gets [\theta,\phi] - \eta\nabla_{[\theta,\phi]} \ell_\mathcal{B}$
\EndFor
\end{algorithmic}
\end{algorithm}
}
\section{Convergence Guarantees} 
\label{sec:theory}
\vspace{-0.5em}

In this section, we prove the convergence of our method to our surrogate objective $J$ (Eq. \eqref{eq:our_obj}) and the true objective $F$ (Eq. \eqref{eq:true_obj}) at rate $O(1/\sqrt{T})$, with the optimal upper-bound obtained by adjusting $\lambda$ for bias/variance tradeoff. We consider optimization over all the parameters $w:=(\theta,\phi)$ with $L=1$. The selection module produces logits $z=s_\theta(x)$ and the prediction module outputs $h_{\phi}(x,i), i\in[m]$, which we abbreviate as $h_{i,\phi}(x)$. In the following, we present the assumptions and main theorems, with the proof detailed in Appendix \ref{sec:app:theory}. 

\begin{assumption}[Compact domain]
\label{ass:compact-domain}
There exists a compact set $\Omega$ s.t. $w_t=(\theta_t,\phi_t)\in \Omega, \forall t\ge 0$, in which the selection module is discriminative, i.e., the minimal logit difference between the selected and unselected item along the optimization trajectory is positive: $$s_{(k)}(x;\theta_t)-s_{(k+1)}(x;\theta_t)\geq \gamma_{\min}>0, \forall x.$$
\end{assumption}

\begin{assumption}[Bounded input]
\label{ass:bounded-experts-inputs} 
The input samples have compact support: $\|x\| \leq C_x, \forall x\in \mathcal{D}$.
\end{assumption}

\begin{assumption}[Loss regularity]
\label{ass:loss-smooth}
$\mathcal{L}(y,\hat y)$ is $\bar L$-Lipschitz and $\bar\beta$-smooth in $\hat y$.
\end{assumption}

\begin{assumption}[Model smoothness]
\label{ass:model-smoothness} $s_{\theta}$ and $h_{i,\phi}, i\in[m]$ are smooth in $\theta,\phi$ and their inputs.
\end{assumption}

We remark that standard transformer blocks with smooth activations~\citep[e.g.,][]{gelu, silu, swish} satisfy Assumption \ref{ass:model-smoothness}. In addition, under Assumptions \ref{ass:compact-domain}$-$\ref{ass:model-smoothness}, we prove in Appendix \ref{sec:app:theory} that the following constants are bounded: $\sigma_S^2$ (stochastic supervised gradient variance bound), $\sigma_P^2$ (stochastic policy gradient variance bound), $\epsilon$ (upper-bound of the full-batch gradient bias $\|\nabla_w J - \nabla_w F\|$ from objective truncation), and $R_{\max}$ (the maximal full-batch policy gradient norm on the optimization trajectory).

\begin{theorem}[Convergence]
\label{thm:convergence}
Recall that $J$ denotes our surrogate objective (Eq. \eqref{eq:our_obj}) and $F$  denotes the true objective (Eq. \eqref{eq:true_obj}). Under Assumptions~\ref{ass:compact-domain}-\ref{ass:model-smoothness}, let $\eta = 1/(\beta\sqrt{T})$ and $T\geq 4$.
For any fixed $\lambda\in[0,1]$, Algorithm~1 satisfies
{\setlength{\abovedisplayskip}{4pt}
    \setlength{\belowdisplayskip}{4pt}
\begin{align}
\frac{1}{T}\sum_{t=0}^{T-1}\E\|\nabla J(w_t)\|^2
&\leq
\frac{2\beta(J(w_0)-J^*) + 2\sigma_S^2 + 2\lambda^2\sigma_P^2}{\sqrt{T}}
+3(1-\lambda)^2R_{\max}^2,
\label{eq:conv-J}\\
\frac{1}{T}\sum_{t=0}^{T-1}\E\|\nabla F(w_t)\|^2
&\leq
\frac{2\beta(F(w_0)-F^*) + 2\sigma_S^2 + 2\lambda^2\sigma_P^2}{\sqrt{T}}
+3\bigl(\epsilon+(1-\lambda)R_{\max}\bigr)^2.
\label{eq:conv-F}
\end{align}}
\end{theorem}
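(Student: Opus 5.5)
The proof is a biased-SGD descent argument on the $\beta$-smooth function $J$. The update direction is the stochastic estimate $g_t = S_t + \lambda P_t$, where $S_t$ is the stochastic supervised gradient and $P_t$ the stochastic policy gradient with the leave-one-out baseline.

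\textbf{Smoothness and decomposition.} Under Assumptions~\ref{ass:compact-domain}--\ref{ass:model-smoothness}, $J$ is $\beta$-smooth on $\Omega$ for some constant $\beta$. The compact domain, bounded inputs, smooth $s_\theta,h_{i,\phi}$ and smooth $\mathcal{L}$ give bounded second derivatives of both $\mathcal{L}(y,f_{\theta,\phi}(x,I))$ and $\log\pi_\theta(I|x)$. Since $\pi_\theta(I|x)$ is a finite sum of smooth functions of the softmax, it is smooth, and it is bounded away from $0$ on $\Omega$. Writing the full gradient as $\nabla J = \bar S + \bar P$, unbiasedness of Eq.~\eqref{eq:grad} gives $\E[S_t|w_t]=\bar S(w_t)$ and $\E[P_t|w_t]=\bar P(w_t)$, because the leave-one-out baseline is independent of $I$ for sample $x$. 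Hence $\E[g_t|w_t] = \nabla J(w_t) - (1-\lambda)\bar P(w_t)$. The bias is therefore $b_t := (1-\lambda)\bar P(w_t)$, with $\|b_t\|\le (1-\lambda)R_{\max}$. The second moment satisfies $\E\|g_t-\E g_t\|^2 \le 2\sigma_S^2+2\lambda^2\sigma_P^2$ by $\|a+b\|^2\le 2\|a\|^2+2\|b\|^2$.

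\textbf{Descent step.} By the descent lemma, $J(w_{t+1})\le J(w_t) - \eta\langle \nabla J, g_t\rangle + \frac{\beta\eta^2}{2}\|g_t\|^2$. Taking conditional expectations and using $\|g_t\|^2$ split into mean and variance, the mean part is $\|\nabla J - b_t\|^2 \le 2\|\nabla J\|^2 + 2\|b_t\|^2$. The cross term is handled by Young's inequality, $\langle \nabla J, b_t\rangle \le \frac14\|\nabla J\|^2 + \|b_t\|^2$. With $\eta = 1/(\beta\sqrt T)$ and $T\ge4$ we have $\beta\eta\le 1/2$, so the coefficient of $\|\nabla J\|^2$ is at least $\eta(1-\frac14-\beta\eta)\ge \eta/4$, or $\eta/2$ under a tighter split. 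Telescoping over $t$ and dividing by $\eta T/2$ gives $\frac{2\beta(J(w_0)-J^*)}{\sqrt T}$ and the variance term $\beta\eta(2\sigma_S^2+2\lambda^2\sigma_P^2)\cdot$const, which becomes $\frac{2\sigma_S^2+2\lambda^2\sigma_P^2}{\sqrt T}$ after tuning the constants. The bias contributes a constant multiple $3(1-\lambda)^2R_{\max}^2$. The constant 3 comes from choosing the Young split so that the bias coefficients add to $2+\beta\eta\cdot 2\le 3$. This gives \eqref{eq:conv-J}.

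\textbf{Bound for $F$.} Run the same argument with $F$ in place of $J$, assuming $F$ is also $\beta$-smooth on $\Omega$, which holds by the same reasoning. The only change is the bias: $\E[g_t|w_t]-\nabla F(w_t) = (\nabla J-\nabla F) - (1-\lambda)\bar P$, whose norm is at most $\epsilon + (1-\lambda)R_{\max}$. Repeating the descent computation with this bias yields \eqref{eq:conv-F}.

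\textbf{Main obstacle.} The delicate part is establishing that the constants are finite, especially $\epsilon$ and the smoothness of $\log\pi_\theta$. For $\epsilon$, I would use the margin $\gamma_{\min}$ in Assumption~\ref{ass:compact-domain}: the mass and gradient contributions of subsets outside the top region, and the discrepancy between $\sum_I\pi_\theta f$ and the full mixture, are controlled by terms like $e^{-\gamma_{\min}/\tau}$ times bounded derivatives. I would bound $\|\nabla J-\nabla F\|$ by differentiating both expressions and applying the Lipschitz and smoothness constants of $\mathcal{L}$ from Assumption~\ref{ass:loss-smooth}. The inequalities in the theorem are then simply carried in terms of $\epsilon$.
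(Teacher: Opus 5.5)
Your proposal follows the paper's proof essentially step for step: the same biased-SGD master descent inequality, with bias $(1-\lambda)P_\theta(w_t)$ (plus the truncation bias of norm at most $\epsilon$ when you run the argument on $F$), noise bound $2\sigma_S^2+2\lambda^2\sigma_P^2$, Young's inequality $|\langle\nabla J,b_t\rangle|\le\frac14\|\nabla J\|^2+\|b_t\|^2$, and telescoping with $\eta=1/(\beta\sqrt T)$. The one thing to fix is the bookkeeping: your loose split $\|\nabla J-b_t\|^2\le2\|\nabla J\|^2+2\|b_t\|^2$ only leaves $\frac{\eta}{4}\|\nabla J\|^2$ when $\beta\eta=\frac12$ (i.e.\ $T=4$), so to get exactly the stated constants for all $T\ge4$ you should expand the square exactly as the paper does, and then it works out as follows: the $\|\nabla J\|^2$ term carries $\eta(1-\frac{\beta\eta}{2})\ge\frac{3\eta}{4}$, the cross term carries $\eta(1-\beta\eta)$, and Young leaves at least $\frac{\eta}{2}\|\nabla J\|^2$ against at most $\frac{3\eta}{2}\|b_t\|^2+\frac{\beta\eta^2}{2}\sigma^2$ per step, which telescopes to the stated bounds with no further constant tuning.
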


Full proofs are given in Appendix~\ref{sec:app:theory}. $\lambda=1$ corresponds to the unbiased gradient (Eq. (\ref{eq:grad})), which gives convergence to the stationary point of $J$. The hybrid gradient  provides a smooth bias-variance tradeoff controlled by $\lambda$: On the one hand, increasing $\lambda$ reduces the bias and therefore lowers the asymptotic bias floor in the stationarity bound. On the other hand, increasing $\lambda$ increases the variance of the policy gradient. In practice, the policy gradient variance can be very large, so we empirically search in a grid of three to four values in our experiments, observing that optimal $\lambda$'s tend to be small. 
\vspace{-0.5em}
\section{Regression on Synthetic Data} \label{sec:regression}

To rigorously study the properties of the top-$k$ selection problem, we design a stylized synthetic regression problem with ground-truth labels indicating which items should be selected, and instantiate $\mathcal{L}$ as a mean squared error (MSE) loss. This allows us to measure selection accuracy and diagnose the selection mechanism. 
Assume there are $m$ candidate items in total, each represented by a fixed vector $v_i\in\mathbb{R}^h$ for $i\in[m]$.  We generate the ground-truth selection parameters $\theta^{\star}\in\mathbb{R}^{m\times h}$, all the items $\{v_i\}_{i \in [m]}$, and each input feature $x\in \mathbb{R}^h$ from Gaussian distributions. For each $x$, we obtain the ground-truth subset indices $I_x=\arg\text{top-}k(\theta^{\star}x)$. We then generate optimal weights $p=\text{softmax}(\text{mask-topk}(\theta^{\star} x)) \in \mathbb{R}^k$ where $\text{mask-topk}(a)= a$ if $a\in\text{top-}k$ and $ -\infty$ otherwise. Finally, the label $y$ for input $x$ is generated via a linear combination of these $k$ items, i.e., $y =\sum_{i\in I_x} p_i  v_i \in\mathbb{R}^h$. 
Given $\{v_i\}_{i \in [m]}$ and $\mathcal{D}=\{(x,y)\}$, we aim to learn the selection parameters $\theta$ so as to minimize the empirical MSE loss $\frac{1}{|\mathcal{D}|} \sum_{(x,y)\in\mathcal{D}}\|y - \sum_{i\in  I_{x}}  p_i v_i \|^2$, $|I_x|=k$.

\textbf{Data, models, and baselines.} We take $h=128, k=3$, and $m\in\{64,128,256,512\}$. We generate 10K training samples, 1K validation samples, and 1K test samples. We use the linear selection model $s_\theta: x\to \theta x$ and aggregate the selected hidden states via
$
f_{\theta,\phi}(x, I) = \sum_{i\in I_x}  \frac{e^{s_\theta (i|x)}}{\sum_{j\in I_x} e^{s_\theta (j|x)}}  v_i.
$
We compare with five classic baselines in the top-$k$ selection literature, which are also common methods in MoE work. Details of methods and hyperparameter tuning are in Appendix \ref{sec:app:toy}. The baselines are: (1) Deterministic top-$k$ which selects the $k$ items based on the raw scores,  (2) Gaussian top-$k$ \citep{outrageously}, which adds Gaussian noise to item scores with learnt noise variance, (3) DefaultMoE \citep{panda2025dense}, which uses each expert's running average of outputs for aggregation when the experts are not selected, (4) SparseMixer \citep{liu2023sparse}, which uses high-order derivatives to approximate the policy gradient, and (5) Supervised-only, which is implemented by the classic Gumbel-Top-$k$ trick \citep{kool2019stochastic} and corresponds to our method with $\lambda=0$.

We report the MSE loss and the selection accuracies on this toy problem, as we have access to the ground-truth selected items. In Figure \ref{fig:opt_difficulty},  we present the average performance of the five selection baselines under different $m$'s. We observe that as $m$ increases, the optimization becomes more challenging: The test top-1 selection accuracy drops drastically, and test MSE loss increases. Furthermore, in Figure \ref{fig:toy}, we show that our method outperforms all baselines in terms of top-1 selection accuracy and MSE loss under the optimal $\lambda$. The exact numerical values are reported in Table  \ref{tab:regression_result}, and an ablation on $\lambda$ is presented in Figure \ref{fig:regression_ablation} in the appendix. 


\begin{figure}[!h]
\centering
  \begin{minipage}[t]{0.35\textwidth}
    \centering
    \includegraphics[width=0.86\textwidth]{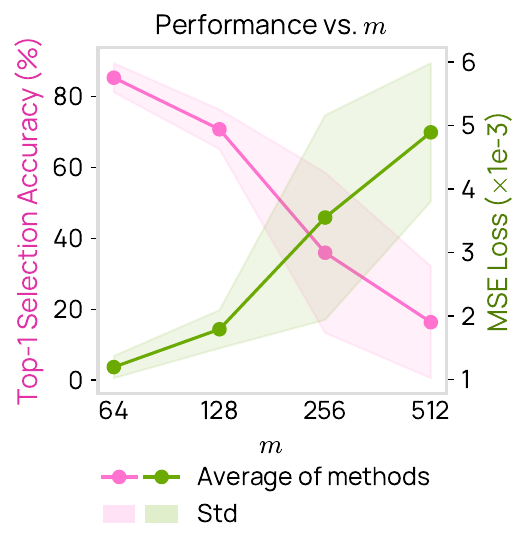}
    \vspace{-0.5em}
    \caption{Selection module is harder to optimize as $m$ increases.}
    \label{fig:opt_difficulty}
  \end{minipage}
  \hfill
  \begin{minipage}[t]{0.6\textwidth}
    \centering
    \includegraphics[width=0.83\textwidth]{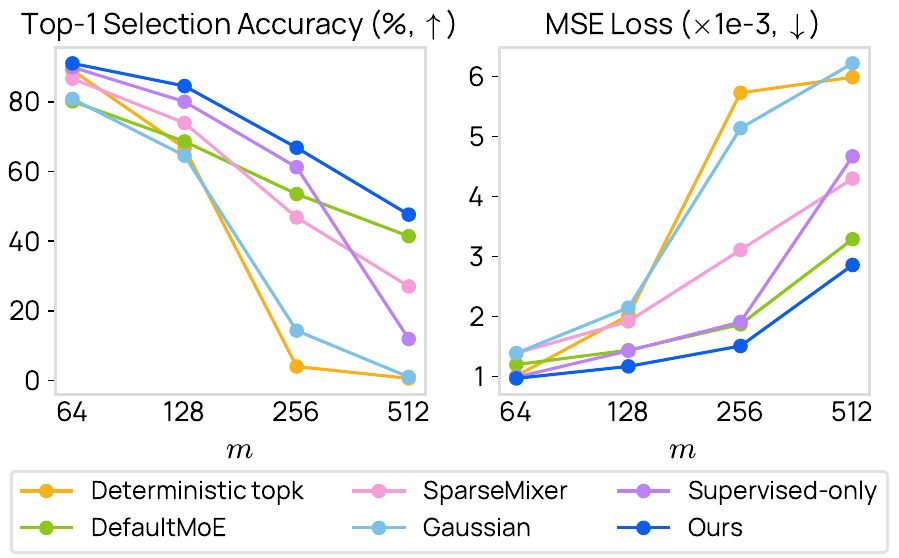}
    \vspace{-0.5em}
    \caption{Our method outperforms baselines on the regression problem in test top-1 selection accuracy and MSE loss.}
    \label{fig:toy}
  \end{minipage}
\vspace{-0.5em}
\end{figure}

We further visualize the evolution of $\theta$'s selection decision (on two test samples) during optimization with different methods (Figure \ref{fig:1D_landscape}). Our method traverses more diverse selection decisions than the deterministic top-$k$ and supervised-only baselines, which illustrates the benefits of the policy gradient.


\vspace{-0.5em}
\section{Document Selection for Retrieval-Augmented Generation} \label{sec:rag}
\vspace{-0.5em}

In this section, we apply our method to in-context RAG, which generally involves three model components: a retriever that selects $m$ documents from a datastore of size $M$, a reranker that selects $k$ documents out of $m$, and a language model that performs next-token (or next-sequence) prediction based on the concatenation of the original query and the selected document(s). We consider the common setting of training the reranker $\theta$ only \citep{replug,in-context}.

Denote $x$ as the query sequence and $y$ as the ground-truth continuation of $x$. Given $(x,y)$, the reranker produces a score $s_{\theta}(i|x)$ for each document $i\in[m]$, indicating the relevance of each document to the query $x$. The reranker next chooses a set of documents $I=\{I_1, \ldots, I_k\}$ via sampling following Eq. (\ref{eq:unordered}). For each selected document $I_i$, we concatenate it with query $x$ to form $[I_i;x]$ and compute $p_{\text{LM}}(y|x,I_i)$, the probability of the target continuation given input $[I_i;x]$. Then the output $f_{\theta,\phi}(x,I)$ is the probability of the continuation $y$ computed by 
{\setlength{\abovedisplayskip}{2pt}
\setlength{\belowdisplayskip}{2pt}
\begin{align}
p(y|x,I) = \sum_{i=1}^k a_i(x,I) \cdot p_{\text{LM}}(y|x,I_i), \quad a_i(x,I) = \frac{\sum_{\substack{L:\{L_1,\ldots,L_k\}=I \\ L_1=i}} \pi_{\theta}(L|x)}{\pi_{\theta}(I|x)}.
\label{eq:inequality}
\end{align}}
The proof that our choice of $a_i(x,I)$ in Eq. \eqref{eq:inequality} makes our objective upper-bound the true objective is in Appendix \ref{sec:app:rag}. In our experiments, we follow prior work \citep{replug,in-context} to take both $x$ and $y$ as a sequence of $s$ tokens for retrieval efficiency. In language modeling, we use $s=128$, so the task is to predict the next-chunk based on the previous chunk, consistent with \citet{replug}. In question answering, $x$ is the question and $y$ is the answer. 

We fine-tune the reranker MS Marco\footnote{https://huggingface.co/cross-encoder/ms-marco-MiniLM-L12-v2}, whose selected documents are concatenated with the original query and fed as input to generative models, including Qwen-2.5-1.5B \citep{qwen2.5} and Llama-3.2-3B \citep{llama3}. We evaluate our method and the baselines on Wikitext-103 \citep{wikitext103} and open-domain QA datasets, including Natural Questions (NQ) \citep{nq} and TriviaQA \citep{triviaqa}. Following prior work \citep{realm,replug}, we use documents from Wikipedia (2018, December) as our retrieval datastore and use BM25 \citep{BM25} as the retriever that pre-selects the 32 most relevant passages for each training query.
\looseness=-1

We compare our method with the following baselines, whose algorithms are detailed in Appendix \ref{sec:app:rag}: (1) Frozen reranker that yields zero-shot performance; (2) The classic ICRALM objective (Eq. (\ref{eq:classic_rag})) \citep{in-context,lewis2020retrieval,realm}, which performs deterministic top-$k$; (3) REPLUG \citep{replug}, which matches the reranker's score distribution with LM’s perplexity distribution conditioned on each selected document (Eq. (\ref{eq:replug})); (4) Policy-only \citep{basu2024statistical}, which samples i.i.d. for $k$ times from the policy defined on `items' and updates with the policy gradient (Eq. (\ref{eq:statistical_framework})); (5) Supervised-only, which is a new method in RAG and a special case of our method with $\lambda=0$.
In Figure \ref{fig:rag}, we report the test performance on Wikitext-103 (in perplexity) and NQ and TriviaQA (in exact match accuracy). We observe more performance gains as $m$ increases, and our method outperforms all the baselines under all $m\in\{8,16,32\}$ for both Qwen-2.5-1.5B and Llama-3.2-3B. Furthermore, we evaluate the document recall on NQ and TriviaQA (Figure \ref{fig:rag_recall}), observing up to a 3.5\%  increase from our method compared to methods in existing literature.


\begin{figure}[h!]
    \centering
    \includegraphics[width=0.95\linewidth]{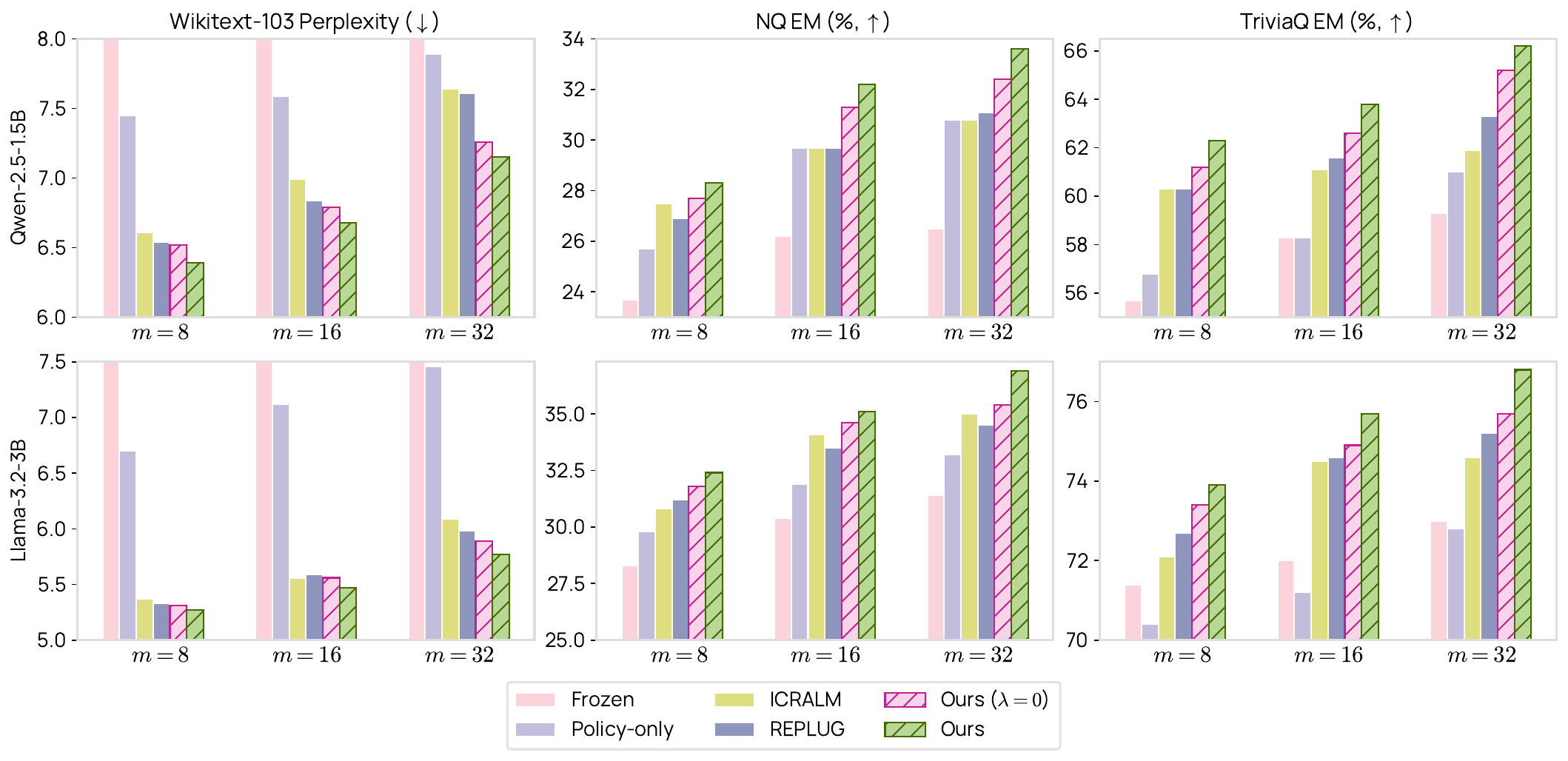}
    \vspace{-1em}
    \caption{Performance of retrieval-augmented Qwen-2.5-1.5B and Llama-3.2-3B with the reranker trained via different methods. We observe improved performance as $m$ increases, and our method outperforms all the baselines. The detailed metric values are reported in Tables \ref{tab:rag_wiki}-\ref{tab:rag_tq_recall}.}
    \label{fig:rag}
    \vspace{-1em}
\end{figure}

\section{Expert Selection for Mixture of Experts} \label{sec:moe}
\vspace{-0.5em}

In this section, we apply our method to MoE models, which contain multiple MoE layers and each layer has one router and $m$ experts. In the $l$-th MoE layer, we have an input $x^{(l)} \in \mathbb{R}^{d^{(l)}}$ where $d^{(l)}$ is the hidden dimension, the experts $h_1^{(l)}, \dots, h_m^{(l)}$ parameterized by weights $\phi^{(l)}$, and a router $\theta^{(l)}$. Given input $x^{(l)}$, the router outputs score $s_{\theta^{(l)}}(i|x^{(l)})$ for all experts $i\in[m]$. Denote $I^{(l)}$ as the sampled set of the indices of the selected unique experts from policy $\pi_{\theta^{(l)}}$. Then this MoE layer outputs
\looseness=-1
$$y^{(l)} = \sum_{i\in I^{(l)}} \frac{\exp({s_{\theta^{(l)}}(i|x^{(l)})})}{\sum_{j\in I^{(l)}} \exp(s_{\theta^{(l)}}(j|x^{(l)}))} \; h_i^{(l)}(x^{(l)}). $$
Therefore, for any sample $(x,y)$ where $x$ is the input sequence and $y$ is the target token, we optimize the router parameters $\theta \coloneqq \{\theta^{(1)}, \ldots, \theta^{(L)} \}$ and expert parameters $\phi \coloneqq \{\phi^{(1)}, \ldots, \phi^{(L)} \}$ for a total of $L$ MoE layers via minimizing the loss Eq. \eqref{eq:our_obj_multilayer}. We have that for an arbitrary layer $l$,
{\setlength{\abovedisplayskip}{4pt}
    \setlength{\belowdisplayskip}{4pt}
    \small
\begin{align*}
    \nabla_{\theta^{(l)}} J_x(\theta) = \mathbb{E}_{I^{(1:L)}} \left[ ( \mathcal{L}(y, x, I^{(1:L)}) -b_x) \cdot \sum_{n=l}^L \nabla_{\theta^{(n)}} \log \pi_{\theta^{(n)}}(I^{(n)}|x^{(n)}) + \nabla_{\theta^{(l)}} \mathcal{L}(y, x, I^{(1:L)}) \right].
\end{align*}}
That is, to compute an unbiased estimate of the gradient, we only need to sample $I^{(1)}\sim\pi_{\theta^{(1)}}$, $\ldots$, $I^{(L)}\sim\pi_{\theta^{(L)}}$ and compute one forward pass. We present the full proof and the multi-layer version of Algorithm \ref{alg:MoT} as Algorithm \ref{alg:multilater} in Appendix \ref{sec:app:method}.

\textbf{Data, models, and baselines.} We train MoE models of nine architectures on Wikitext-103 \citep{wikitext103} from scratch, including the total expert numbers $m\in \{8,16,32,64,128\}$ with one or two MoE layers. To ensure that our observed improvements are systematic, we evaluate our method in a over-training setting (820M tokens, with around 230 tokens-per-activated-parameter) to mitigate spurious gains, providing a rigorous setup for algorithmic comparison.

We compare our method with the same baselines in Section \ref{sec:regression}, whose methods and hyperparameters are tuned based on Appendix \ref{sec:app:exp}: (1) Deterministic top-$k$;  (2) Gaussian MoE \citep{outrageously}; (3) Default MoE \citep{panda2025dense}; (4) SparseMixer \citep{liu2023sparse}; (5) Supervised-only \citep{supervisedonly} (our method with $\lambda=0$). We observe that scaling up $m$ improves model performance, and our method outperforms all the baselines under all $m$. In addition, our method is related to Supervised-only, which is our special case, and SparseMixer, which approximates the policy gradient with high-order derivatives to reduce variance. We show that our $\lambda$ is an effective knob for bias/variance tradeoff: the best $\lambda\in(0,1)$ outperforms Supervised-only ($\lambda=0$) by 2.0 points and outperforms the unbiased update rule ($\lambda=1$) by 7.0 points in perplexity in the case of `128 choose 3', and this observation is consistent in other $m$'s, as presented in Figure \ref{fig:moe_ablation}.

\begin{figure}[h!]
    \centering
    \includegraphics[width=0.52\linewidth, valign=t]{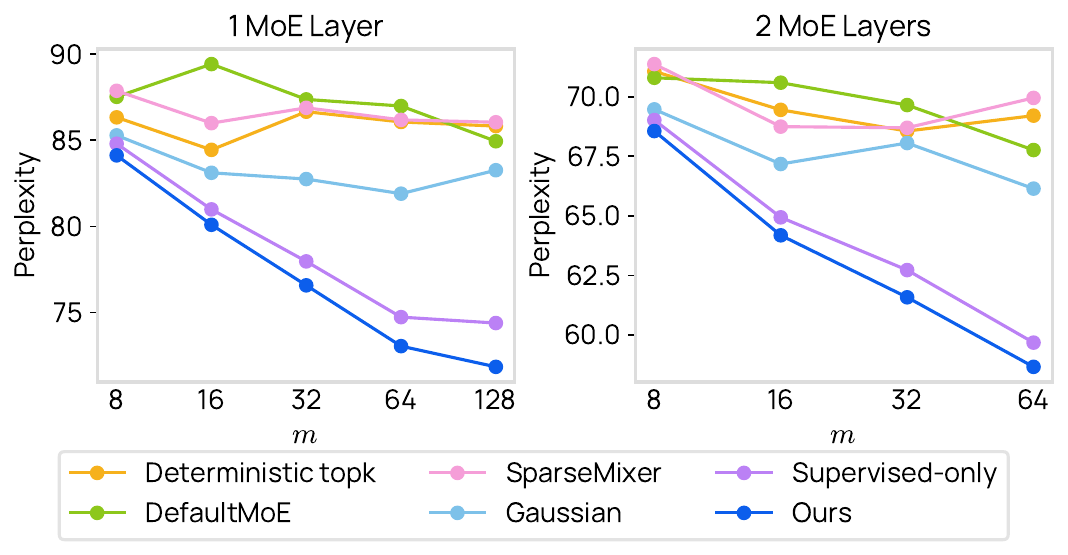}
    \hspace{2.5em}
    \includegraphics[width=0.28\linewidth, valign=t]{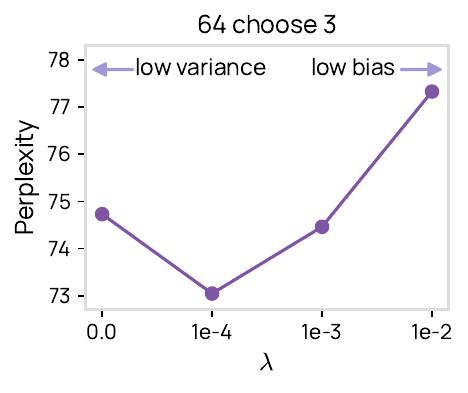}
    \vspace{-0.5em}
    \caption{Test perplexity of different methods after training from scratch on Wikitext-103. Left: Our method outperforms baselines under different numbers of total experts. Right: The best bias/variance tradeoff is achieved by adjusting $\lambda$. The exact numerical values are reported in Table \ref{tab:moe_wiki}.}
    \vspace{-0.5em}
    \label{fig:moe_performance}
\end{figure}


\vspace{-0.5em}
\section{Conclusion and Limitations} \label{sec:conclusion}
\vspace{-0.5em}

We study the problem of top-$k$ selection and propose a principled surrogate objective that models subset selection and upper-bounds the true objective under certain conditions. Our unbiased gradient naturally decomposes into two complementary components: the supervised gradient improves aggregation within the selected subset, and the policy gradient provides direct learning signals on which items to select. 
We provide convergence results, and demonstrate consistent improvements upon strong baselines across regression, RAG, and MoE training in language modeling and QA tasks. 
We identify two limitations of this work. First, while our analysis characterizes the bias/variance tradeoff, selecting $\lambda$ remains problem-dependent in practice. Second, our experiments are conducted on specific model sizes and datasets, and we leave further evaluation at larger scales to future work.

\clearpage
\bibliographystyle{unsrtnat}
\bibliography{cite}

\clearpage
\appendix
\section{Method} \label{sec:app:method}

\subsection{Gradient Derivation}

\paragraph{Case of $L=1$.}
\begin{align*}
    \nabla_{\theta} J_x(\theta,\phi) &= \nabla_{\theta} \sum_{I} \pi_{\theta}(I|x) \mathcal{L}(y, x, I;\theta,\phi) \\
    &= \sum_{I}  \nabla_{\theta} \pi_{\theta}(I|x)  \mathcal{L}(y, x, I;\theta,\phi)  + \pi_{\theta}(I|x) \nabla_{\theta} \mathcal{L}(y, x, I;\theta,\phi) \\
    &\overset{(a)}{=} \sum_{I} \pi_{\theta}(I|x) \nabla_{\theta} \log \pi_{\theta}(I|x) \mathcal{L}(y,x, I;\theta,\phi) + \pi_{\theta}(I|x) \nabla_{\theta} \mathcal{L}(y,x, I;\theta,\phi) \\
    &= \mathbb{E}_{I \sim \pi_{\theta}} \left[ \mathcal{L}(y, x, I;\theta,\phi) \cdot \nabla_{\theta} \log \pi_{\theta}(I|x) + \nabla_{\theta} \mathcal{L}(y, x, I;\theta,\phi) \right] \\
    &\overset{(b)}{=} \mathbb{E}_{I \sim \pi_{\theta}} \left[ \underbrace{(\mathcal{L}(y, x, I;\theta,\phi)-b_x) \cdot \nabla_{\theta} \log \pi_{\theta}(I|x)}_{\text{policy gradient}} + \underbrace{\nabla_{\theta} \mathcal{L}(y, x, I;\theta,\phi)}_{\text{supervised gradient}} \right].
\end{align*}
where step $(a)$ applies the log-derivative trick $\nabla_\theta \pi_\theta = \pi_\theta \nabla_\theta \log \pi_\theta$ and step $(b)$ uses leave-one-out average batch loss as the baseline $b_x$ for any $x$. Abbreviating $\ell_x:=\mathcal{L}(y, x, I;\theta,\phi) $, step $(b)$ holds due to
\begin{align*}
    & \quad \mathbb{E}_{I \sim \pi_{\theta}} \left[ (\ell_{x}-b_x) \cdot \nabla_{\theta} \log \pi_{\theta}(I|x)\right] \\
    & = \mathbb{E}_{I \sim \pi_{\theta}} \left[ \ell_{x} \cdot \nabla_{\theta} \log \pi_{\theta}(I|x)\right] - \mathbb{E}_{I \sim \pi_{\theta}} \left[ \left(\frac{1}{|\mathcal{B}|-1} \sum_{x'\in\mathcal{B},x'\neq x} \ell_{x'}\right) \cdot \nabla_{\theta} \log \pi_{\theta}(I|x)\right] \\
    & = \mathbb{E}_{I \sim \pi_{\theta}} \left[ \ell_{x} \cdot \nabla_{\theta} \log \pi_{\theta}(I|x)\right] - \left(\frac{1}{|\mathcal{B}|-1} \sum_{x'\in\mathcal{B},x'\neq x} \ell_{x'}\right) \underbrace{\mathbb{E}_{I \sim \pi_{\theta}} \left[ \nabla_{\theta} \log \pi_{\theta}(I|x)\right]}_{=0} \\
    & = \mathbb{E}_{I \sim \pi_{\theta}} \left[ \ell_{x} \cdot \nabla_{\theta} \log \pi_{\theta}(I|x)\right].
\end{align*}

\paragraph{Case of $L>1$.}

For any sample $(x,y)$ where $x$ is the input sequence and $y$ is the target token, we optimize the router parameters $\theta \coloneqq \{\theta^{(1)}, \ldots, \theta^{(L)} \}$ and expert parameters $\phi \coloneqq \{\phi^{(1)}, \ldots, \phi^{(L)} \}$ for a total of $L$ MoE layers via minimizing the loss
\begin{align*}
J_x(\theta,\phi) =  \mathbb{E}_{I^{(1)},\ldots,I^{(L)}} \left[ \mathcal{L}(y, x, I^{(1)}, \ldots, I^{(L)}|\theta,\phi) \right].
\end{align*}
We denote $I^{(1)}\ldots I^{(L)}$ as $I^{(1:L)}$ and $\mathcal{L}(\ldots|\theta,\phi)$ as $\mathcal{L}(\ldots)$ for brevity. By the law of iterated expectation,
$\mathbb{E}_{I^{(1:L)}}[\cdot] = \mathbb{E}_{I^{(1)}}\left[\mathbb{E}_{I^{(2)}|I^{(1)}}\left[\cdots \mathbb{E}_{I^{(L)}|I^{(1:L-1)}}[\cdot]\right]\right]$. Thus, for an arbitrary layer $l$,
\begin{align*}
    & \quad \nabla_{\theta^{(l)}} J_x(\theta) \\
    &= \nabla_{\theta^{(l)}} \mathbb{E}_{I^{(1:l-1)}}\left[ \mathbb{E}_{I^{(l)}|I^{(1:l-1)}}\left[ \underbrace{ \mathbb{E}_{I^{(l+1:L)}|I^{(1:l)}}[ \mathcal{L}(y, x, I^{(1:L)}) ]}_{g(I^{(1:l)})}\right] \right] \\
    &= \mathbb{E}_{I^{(1:l-1)}}\left[ \nabla_{\theta^{(l)}} \;\mathbb{E}_{I^{(l)}|I^{(1:l-1)}}\left[ g(I^{(1:l)}) \right] \right] \\
    &\overset{(a)}{=} \mathbb{E}_{I^{(1:l-1)}}\left[ \sum_{I^{(l)}} \nabla_{\theta^{(l)}} \left( \pi_{\theta^{(l)}}(I^{(l)}|x^{(l)}) g(I^{(1:l)}) \right) \right] \\
    &= \mathbb{E}_{I^{(1:l-1)}}\left[ \sum_{I^{(l)}}  \nabla_{\theta^{(l)}} \pi_{\theta^{(l)}}(I^{(l)}|x^{(l)})  g(I^{(1:l)}) + \pi_{\theta^{(l)}}(I^{(l)}|x^{(l)}) \nabla_{\theta^{(l)}} g(I^{(1:l)}) \right] \\
    &= \mathbb{E}_{I^{(1:l)}} \left[ g(I^{(1:l)}) \cdot \nabla_{\theta^{(l)}} \log \pi_{\theta^{(l)}}(I^{(l)}|x^{(l)}) + \nabla_{\theta^{(l)}} g(I^{(1:l)}) \right].
\end{align*}
where $(a)$ is due to $p(I^{(l)} | I^{(1:l-1)}; \theta^{(1:l-1)}) = \pi_{\theta^{(l)}}(I^{(l)} | x^{(l)})$. That is, under the given $\theta^{(1:l-1)}$, the mapping from the past choices $I^{(1:l-1)}$ to the current state $x^{(l)}$ is deterministic.

In the above derivation, note that
\begin{align*}
    g(I^{(1:l)}) & = \mathbb{E}_{I^{(l+1:L)}} \left[ \mathcal{L}(y,x, I^{(1:L)}) | I^{(1:l)} \text{ is sampled} \right] \\
    & = \sum_{I^{(l+1:L)}} p(I^{(l+1:L)}| I^{(1:l)}) \mathcal L(y,x, I^{(1:L)})
\end{align*}
is the expected loss $\mathcal{L}$ from the perspective of layer $l$, after the router has chosen the set $I^{(l)}$, but before the future layers have made their stochastic choices. Denoting $\ell_x:= \mathcal L(y,x, I^{(1:L)})$, we have
$$p(I^{(l+1:L)}| I^{(1:l)}) = \prod_{n=l+1}^L \pi_{\theta^{(n)}} \left(I^{(n)}| x^{(n)}\right);$$
$$\nabla_{\theta^{(l)}}g(I^{(1:l)}) =  \mathbb E_{I^{(l+1:L)}| I^{(1:l)}}\Bigg[\nabla_{\theta^{(l)}}\ell_x+\ell_x \sum_{n=l+1}^{L} \nabla_{\theta^{(l)}} \log\pi_{\theta^{(n)}}(I^{(n)}| x^{(n)}) \Bigg].$$
Therefore, the full derivative is
\begin{align*}
    & \quad \nabla_{\theta^{(l)}} J_x(\theta) \\
    & = \mathbb{E}_{I^{(1:L)}} \left[\ell_x  \cdot \nabla_{\theta^{(l)}} \log \pi_{\theta^{(l)}}(I^{(l)}|x^{(l)})\right] + \mathbb{E}_{I^{(1:L)}} \left[ \nabla_{\theta^{(l)}}\ell_x +\ell_x \sum_{n=l+1}^{L} \nabla_{\theta^{(l)}} \log\pi_{\theta^{(n)}}(I^{(n)}| x^{(n)})\right] \\
    & = \mathbb{E}_{I^{(1:L)}} \left[  \underbrace{\mathcal{L}(y, x, I^{(1:L)})  \cdot \sum_{n=l}^{L}\nabla_{\theta^{(l)}} \log \pi_{\theta^{(n)}}(I^{(n)}|x^{(n)})}_{\text{policy gradient}} + \underbrace{\nabla_{\theta^{(l)}} \mathcal{L}(y, x, I^{(1:L)})}_{\text{supervised gradient}} \right] \numberthis \label{eq:multilater}.
\end{align*}
That is, an unbiased estimate of the gradient is composed of the policy part and the supervised part, and it can be obtained by sampling $I^{(1)}\sim\pi_{\theta^{(1)}}, \ldots, I^{(L)}\sim\pi_{\theta^{(L)}}$ and computing one forward pass. We can again add leave-one-out average batch loss as baseline $b_x$ for any $x$ while keeping the the gradient unbiased: Abbreviating $\ell_{x,I^{(1:L)}} := \mathcal{L}(y, x, I^{(1:L)})$, for $\forall \ n: l\leq n\leq L$, we have
\begin{align*}
    & \quad \mathbb{E}_{I^{(1:L)}} \left[ b_x \cdot \nabla_{\theta^{(l)}} \log \pi_{\theta^{(n)}}(I^{(n)}|x^{(n)}) \right] \\
    & = \mathbb{E}_{I^{(1:L)}} \left[ \left(\frac{1}{|\mathcal{B}|-1} \sum_{x'\in\mathcal{B},x'\neq x} \ell_{x',I^{(1:L)}} \right) \cdot \nabla_{\theta^{(l)}} \log \pi_{\theta^{(n)}}(I^{(n)}|x^{(n)}) \right] \\
    & = \left(\frac{1}{|\mathcal{B}|-1} \sum_{x'\in\mathcal{B},x'\neq x} \ell_{x',I^{(1:L)}} \right) \mathbb{E}_{I^{(1:L)}}\left[ \nabla_{\theta^{(l)}} \log \pi_{\theta^{(n)}}(I^{(n)}|x^{(n)}) \right]
\end{align*}
where
\begin{align*}
    \mathbb{E}_{I^{(1:L)}}\left[ \nabla_{\theta^{(l)}} \log \pi_{\theta^{(n)}}(I^{(n)}|x^{(n)}) \right] &= \mathbb{E}_{I^{(1:n-1)}}\left[ \mathbb{E}_{I^{(n)}|I^{(1:n-1)}}\left[ \nabla_{\theta^{(l)}} \log \pi_{\theta^{(n)}}(I^{(n)}|x^{(n)}) \right] \right] \\
    & = \mathbb{E}_{I^{(1:n-1)}}\left[ \sum_{I^{(n)}}   \pi_{\theta^{(n)}}(I^{(n)}|x^{(n)}) \nabla_{\theta^{(l)}}\log \pi_{\theta^{(n)}}(I^{(n)}|x^{(n)})  \right] \\
    & = \mathbb{E}_{I^{(1:n-1)}}\left[ \sum_{I^{(n)}}  \nabla_{\theta^{(l)}}  \pi_{\theta^{(n)}}(I^{(n)}|x^{(n)}) \right] \\
    & = \mathbb{E}_{I^{(1:n-1)}}\left[\nabla_{\theta^{(l)}} \left( \sum_{I^{(n)}}   \pi_{\theta^{(n)}}(I^{(n)}|x^{(n)}) \right)\right] \\
    &= 0.
\end{align*}
We thus have Algorithm \ref{alg:multilater} for the case of $L>1$. We note that by summing the policy gradient from all layers $l\in[L]$, we only need to run backpropagation to the loss in Line 10 once.  

\begin{algorithm}
\caption{Training the Selection and Prediction Module with Hybrid Gradients ($L>1$)}
\label{alg:multilater}
\begin{algorithmic}[1]
\State \textbf{Input:} Selection parameters $\theta$, prediction parameters $\phi$, $\eta$, $\lambda$, $\tau$
\For{each batch $\mathcal{B}$}
    \For{$(x,y)\in \mathcal{B}$}
        \State $p_{\theta^{(l)}}(\cdot|x) \gets \text{softmax}(s_{\theta^{(l)}}(\cdot|x)/ \tau)$ for all $l\in[L]$
        \State Sample $I^{(l)} \sim \text{Multinomial}(p_{\theta^{(l)}}(\cdot|x))$ without replacement for all $l\in[L]$
        \State Compute $\log \pi_{\theta^{(l)}}(I^{(l)}|x^{(l)})$ by Eq. (\ref{eq:unordered}) for all $l\in[L]$
        \State $\ell_{x} \gets \mathcal{L}(y,x,I^{(1:L)}|\theta^{(1:L)},\phi^{(1:L)})$
    \EndFor
    \State Compute $b_x \gets$ leave-one-out loss average for each $x\in\mathcal{B}$ by Eq. \eqref{eq:baseline}
    \State $\ell_\mathcal{B}
    \gets \frac{\lambda}{|\mathcal{B}|} \sum_{x\in \mathcal{B}} \sum_{l\in [L]} \text{SG}\left(\ell_x - b_x\right) \cdot \log \pi_{\theta^{(l)}}(I^{(l)}|x^{(l)}) + \frac{1}{|\mathcal{B}|} \sum_{x\in \mathcal{B}} \ell_{x}$ 
    \State $[\theta,\phi] \gets [\theta,\phi] - \eta\nabla_{[\theta,\phi]} \ell_\mathcal{B}$
\EndFor
\end{algorithmic}
\end{algorithm}

\subsection{Selection Module Policy}

\begin{lemma}
    Sampling $I$ from $\pi_{\theta}$ in Eq. (\ref{eq:unordered}) can be implemented by \texttt{L=multinomial($p_{\theta}(\cdot|x)$, $k$, replacement=False)} and then using $L$ as a set. In other words, if $\pi(I)$ is the probability of $I$ where $I$ is an unordered version of an ordered sequence, then sampling an ordered sequence and then forgetting the order follows exactly $\pi(I)$.
\end{lemma}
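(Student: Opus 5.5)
The plan is to compute the law of the random set produced by the sampler and check that it coincides with Eq.~\eqref{eq:unordered}. First I will pin down what \texttt{multinomial($p_{\theta}(\cdot|x)$, $k$, replacement=False)} does. It draws $L_1$ from $p_\theta(\cdot|x)$. Then, for $t=2,\ldots,k$, it draws $L_t$ from $p_\theta(\cdot|x)$ restricted to $[m]\setminus\{L_1,\ldots,L_{t-1}\}$ and renormalized. By the chain rule of probability, the probability of producing a particular ordered tuple $(L_1,\ldots,L_k)$ of distinct indices is $\prod_{t=1}^k p_\theta(L_t|x)/(1-\sum_{s<t}p_\theta(L_s|x))$. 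This is exactly the expression $p_\theta(L_1,\ldots,L_k|x)$ defined in Section~\ref{sec:problem:selection}.

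Next I define the ``forget the order'' map $\sigma$, which sends an ordered tuple of distinct indices to the set of its entries. For a fixed $k$-subset $I\subseteq[m]$, the preimage $\sigma^{-1}(I)$ is the set $\mathcal{S}$ of all $k!$ permutations of the elements of $I$. The events $\{(L_1,\ldots,L_k)=L'\}$ for $L'\in\mathcal{S}$ are pairwise disjoint. Summing over them gives
\begin{align*}
\Pr[\sigma(L)=I]=\sum_{L'\in\mathcal{S}}\Pr[(L_1,\ldots,L_k)=L']=\sum_{L'\in\mathcal{S}}p_\theta(L'_1,\ldots,L'_k|x)=\pi_\theta(I|x),
\end{align*}
which is the claim. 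I will also remark that the sampled entries are always distinct, so $\sigma(L)$ has cardinality exactly $k$. Moreover, every ordered tuple maps to exactly one $k$-subset, so $\sum_{|I|=k}\pi_\theta(I|x)=1$ and $\pi_\theta$ is a genuine distribution on $k$-subsets.

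The only real obstacle is making the sampler's semantics precise, namely that ``without replacement'' means sequential renormalized draws. I will state this convention explicitly, and it is how the sampler should be read. For an implementation based on Gumbel-top-$k$, I would instead invoke the known equivalence of Gumbel-top-$k$ with sequential sampling without replacement \citep{kool2019stochastic}. One further point to note is that all $p_\theta(i|x)>0$ because they are softmax outputs. This keeps the denominators $1-\sum_{s<t}p_\theta(L_s|x)$ positive whenever $k\le m$, so every quantity in the argument is well defined.
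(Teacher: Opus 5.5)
Your proposal is correct and follows the same route as the paper's proof: compute the probability of the unordered set by summing the ordered-sequence probabilities over all $k!$ orderings of $I$. Your version also spells out three points the paper leaves implicit: the sequential renormalized sampler produces exactly the product formula $p_\theta(L_1,\ldots,L_k|x)$, the events for different orderings are disjoint, and the denominators are positive.
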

\begin{proof}
    Let the sampled ordered sequence be $l = \{l_1, \ldots, l_k\}$ and denote its unordered set version as $i=\text{set}(l)$. Denote $\text{perm}(i)$ as a set containing all permutations (ordered sequences) of the elements in $i$. Then by the definition of marginalization,
    \begin{align*}
        P(I=i|x) = \sum_{l'\in \text{perm}(i)} p(l'_1, \ldots, l'_k|x) \overset{(\ref{eq:unordered})}{=} \pi_{\theta} (I=i|x).
    \end{align*}
\end{proof}

\subsection{Recipe for Selecting $\lambda$}

Theoretically, the optimal $\lambda^{\star}$ value would be small if the policy gradient variance bound $\sigma_P^2$ is large (Theorem \ref{thm:convergence}). Empirically, however, selecting $\lambda$ remains problem-dependent. To reduce the tuning workload, we study how to quickly determine whether a $\lambda$ value is too big or too small. 

We compare the gradient norm and the loss standard deviation of the supervised component and the policy component, respectively. We observe that both metrics are informative indicators of appropriate $\lambda$ values: As shown in Figure \ref{fig:grad_norm}, large $\lambda$ leads to an overwhelming policy gradient norm and policy loss standard deviation compared to the supervised gradient. In contrast, small $\lambda$ yields a small contribution from the policy component. Under appropriate $\lambda$, the gradient norm and loss standard deviation from the two components are comparable.

\begin{figure}[h!]
    \centering
    \includegraphics[width=0.85\linewidth, valign=t]{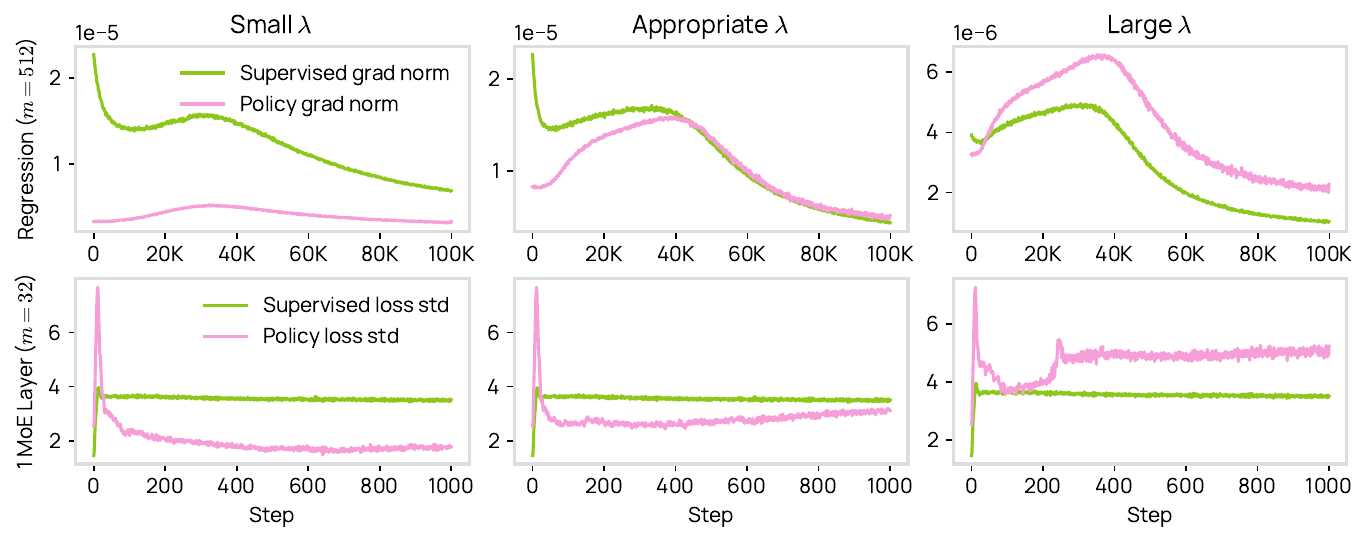}
    \caption{Gradient norm and loss standard deviation of the supervised component and the policy component. Under appropriate $\lambda$, the two metrics for the two components are comparable.}
    \label{fig:grad_norm}
\end{figure}

Based on this observation, we provide a recipe for quickly searching for the largest appropriate $\lambda$: First, pick a $\lambda$ and examine the first few steps of the optimization; then reduce $\lambda$ by 10$\times$ each time if the policy gradient norm or loss standard deviation overwhelms that of the supervised component.
\section{Convergence Analysis} \label{sec:app:theory}

This section proves the convergence of our method to our objective and the true objective. In the following, we first present the problem formulation and prove the useful lemmas.

\subsection{Problem Setup}

Let the training set be $\mathcal D=\{(x_n,y_n)\}_{n=1}^N$. We train prediction module parameters $\phi$ jointly with the selection module parameters $\theta$ and let $w:=(\theta,\phi)$. The selection module $s_\theta:\mathbb{R}^{h}\to\mathbb{R}^m$ produces scores $z_i := s_{\theta}(x)_i, i\in[m]$.  The empirical true objective is
$$ F(\theta,\phi) := \frac1N\sum_{n=1}^N \mathcal L\left( y_n, \sum_{i=1}^m \frac{e^{s_{\theta}(x_n)_i}}{\sum_{j=1}^m e^{s_{\theta}(x_n)_j}} h_{i,\phi}(x_n) \right). $$

\paragraph{Our surrogate.}
For each training sample $x_n$, let $I\sim \pi_\theta(\cdot| x_n) $ be the sampled subset. Define
$$ \ell_n(\theta,\phi,I) := \mathcal L\left( y_n, \sum_{i\in I} \frac{e^{s_{\theta}(x_n)_i}}{\sum_{j\in I}e^{s_{\theta}(x_n)_j}} h_{i,\phi}(x_n) \right) $$
and our surrogate objective is
$$ J(\theta,\phi) := \frac1N\sum_{n=1}^N \E_{I\sim\pi_\theta(\cdot| x_n)} [ \ell_n(\theta,\phi,I) ]. $$

Recall that the gradient of $J$ with respect to $\theta$ decomposes into the policy gradient $P_\theta(\theta,\phi)$ and the supervised gradient $S_\theta(\theta,\phi)$, i.e., $\nabla_\theta J(\theta,\phi)=P_\theta(\theta,\phi)+S_\theta(\theta,\phi)$ where
$$P_\theta(\theta,\phi):=\frac1N\sum_{n=1}^N \E_{I\sim\pi_\theta(\cdot| x_n)} [\ell_n(\theta,\phi,I)\nabla_\theta\log\pi_\theta(I| x_n)],$$
$$S_\theta(\theta,\phi):=\frac1N\sum_{n=1}^N \E_{I\sim\pi_\theta(\cdot| x_n)} [\nabla_\theta\ell_n(\theta,\phi,I)].$$

Since $\pi_\theta$ does not depend on $\phi$,
$$ \nabla_\phi J(\theta,\phi)= S_\phi(\theta,\phi) =\frac1N\sum_{n=1}^N \E_{I\sim\pi_\theta(\cdot| x_n)}[ \nabla_\phi\ell_n(\theta,\phi,I) ]. $$

\paragraph{Minibatch update.}

At iteration $t$, we sample a minibatch $\mathcal B_t$ i.i.d. uniformly from $\mathcal{D}$ with replacement. For each $b\in\mathcal B_t$, we sample $I_{b}\sim \pi_{\theta_t}(\cdot| x_{b})$. Using a leave-one-out batch baseline,
$$ b_{-b} := \frac{1}{B-1}\sum_{a\neq b}\ell_{a}(\theta_t,\phi_t,I_{a}),$$
we use the selection-model update
$$ g_{\theta,t}^{\lambda} = \frac{1}{B}\sum_{b=1}^B
\left[ \lambda \bigl( \ell_{b}(\theta_t,\phi_t,I_{b})-b_{-b} \bigr) \nabla_\theta\log\pi_{\theta_t}(I_{b}| x_{b})+\nabla_\theta \ell_{b}(\theta_t,\phi_t,I_{b})\right].$$
and the prediction-model update
$$ g_{\phi,t} = \frac1B\sum_{b=1}^B \nabla_\phi \ell_b.$$
The full-model gradient is denoted as $g_t^{\lambda}:= (g_{\theta,t}^{\lambda},g_{\phi,t})$, with which we conduct the joint update $w_{t+1} = w_t-\eta g_t^{\lambda}$. Note that the joint gradient estimator is related to the gradient of our objective via
$$ \E[g_t^{\lambda}| w_t]= \nabla_w J(w_t)+ \tilde b_t^{\lambda}, \quad \tilde b_t^{\lambda}=\bigl(-(1-\lambda)P_\theta(w_t),0\bigr).$$

Relative to the true objective $F$, our update satisfies $ g_t^{\lambda}=\nabla_w F(w_t)+ b_t^{\lambda}+\xi_t,$ where $\xi_t$ is the gradient variance with $\E[\xi_t| w_t]=0$ and
$$b_t^{\lambda}=\underbrace{\nabla_w J(w_t)-\nabla_w F(w_t)}_{\text{truncation bias}}+\underbrace{\bigl(-(1-\lambda)P_\theta(w_t),0\bigr)}_{\text{policy gradient suppression bias}}.$$

In the following, we first derive the lemmas showing that the constants $\sigma_S^2$ (supervised gradient variance bound), $\sigma_P^2$ (policy gradient variance bound), $\epsilon$ (upper-bound of the gradient bias $\|\nabla_w J - \nabla_w F\|$ from objective truncation), and $R_{\max}$ (the maximal full-batch policy gradient norm on the optimization trajectory) are finite under Assumptions \ref{ass:compact-domain}$-$\ref{ass:model-smoothness}.

\subsection{Lemmas}


\begin{lemma}[Derived constants]
\label{lem:derived-constants}
Under Assumptions~\ref{ass:compact-domain}--\ref{ass:model-smoothness}, the following quantities are finite:
\begin{enumerate}[label=(\alph*)]
    \item \textbf{Prediction module output.} $C_h := \max_{i\in[m]} \sup_{w\in\Omega, \|x\|\le C_x}\|h_{i,\phi}(x)\| < \infty.$
    \item \textbf{Prediction module Jacobian.} $C_{h'} := \max_{i\in[m]} \sup_{w\in\Omega, \|x\|\le C_x}\|\nabla_\phi h_{i,\phi}(x)\| < \infty.$
    \item \textbf{Selection module Jacobian.} $C_r := \max_{i\in[m]} \sup_{w\in\Omega, \|x\|\le C_x}\left\|\frac{\partial z_i}{\partial\theta}(x;\theta)\right\| < \infty.$
    \item \textbf{Score function.} $C_\pi := \sup_{w\in\Omega, \|x\|\le C_x, I: |I|=k}\|\nabla_\theta\log\pi_\theta(I| x)\| < \infty.$
    \item \textbf{Loss value.} $\ell_{\max} := \sup_{w\in\Omega, \|x\|\le C_x, I: |I|=k} \ell_n(\theta,\phi,I) <\infty$.
    \item \textbf{Loss variance.} $\sigma_l^2 := \sup_{w\in\Omega} \operatorname{Var}_{n,I}(\ell_n(\theta,\phi,I))\le\sigma_l^2 <\infty$.
    \item \textbf{Policy gradient.} $R_{\max} := \sup_{w\in \Omega}\|P_\theta(w)\| <\infty$.
\end{enumerate}
\end{lemma}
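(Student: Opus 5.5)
The plan is a compactness argument. We combine continuity of all relevant maps (Assumption~\ref{ass:model-smoothness}) with compactness of the joint domain $\Omega\times\{x:\|x\|\le C_x\}$ (Assumptions~\ref{ass:compact-domain} and \ref{ass:bounded-experts-inputs}). Every quantity in (a)--(g) is then a supremum of a continuous function over a compact set, or of a finite maximum of such functions, since $[m]$ and the family of $k$-subsets of $[m]$ are finite. Hence each supremum is attained and finite.

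\textbf{Items (a)--(c).} By Assumption~\ref{ass:model-smoothness}, $(\phi,x)\mapsto h_{i,\phi}(x)$ and $(\phi,x)\mapsto \nabla_\phi h_{i,\phi}(x)$ are continuous for each $i$. The same holds for $(\theta,x)\mapsto \partial z_i/\partial\theta$, since smoothness gives at least $C^1$. The set $\Omega\times\bar B(0,C_x)$ is compact as a product of compacts. By Weierstrass each supremum is finite, and maximizing over the finitely many $i\in[m]$ keeps it finite.

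\textbf{Item (d), the score function.} This is the step needing an actual computation. Write $p_i=\mathrm{softmax}(z/\tau)_i$. Then $\pi_\theta(I|x)=\sum_{L'\in\mathcal S}\prod_{t=1}^k p_{L'_t}/(1-\sum_{s<t}p_{L'_s})$ by Eq.~\eqref{eq:unordered}. This is a smooth function of $z$ wherever the denominators are positive. For $k\le m-1$, each denominator is at least the total mass of the remaining items, and that mass is positive since softmax probabilities are strictly positive. Moreover $\pi_\theta(I|x)\ge \prod_t p_{L_t}>0$. So $z\mapsto\log\pi(I|z)$ is $C^1$ on all of $\mathbb R^m$. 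The chain rule gives
$\nabla_\theta\log\pi_\theta(I|x)=\sum_i \frac{\partial \log\pi}{\partial z_i}\frac{\partial z_i}{\partial\theta}$.
The map $(\theta,x)\mapsto z=s_\theta(x)$ is continuous, so $z$ ranges over a compact set. On that set $\partial\log\pi/\partial z$ is bounded by continuity. Combining this with (c) and taking the maximum over finitely many $I$ gives $C_\pi<\infty$. One could alternatively bound $\partial\log\pi/\partial z_i$ explicitly by $O(k/\tau)$, using that derivatives of $\log p_i$ and $\log(1-\sum p)$ with respect to logits are bounded by $1/\tau$ times quantities of order one. But compactness suffices, and it is what I would write. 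The margin $\gamma_{\min}$ is not needed here.

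\textbf{Items (e)--(g).} The aggregated prediction $\hat y=\sum_{i\in I}\alpha_i h_{i,\phi}(x)$, with $\alpha$ the restricted softmax, is continuous in $(w,x)$. It satisfies $\|\hat y\|\le C_h$ because the $\alpha_i$ are convex weights. Assumption~\ref{ass:loss-smooth} makes $\mathcal L(y,\cdot)$ Lipschitz, so $\ell_n\le \mathcal L(y_n,0)+\bar L C_h$. Since $\mathcal D$ is finite, $\max_n \mathcal L(y_n,0)$ is finite, which gives $\ell_{\max}<\infty$. The variance of a random variable taking values in $[-\ell_{\max},\ell_{\max}]$ (or in $[0,\ell_{\max}]$ for nonnegative losses) is at most $\ell_{\max}^2$, which gives (f). For (g), $\|P_\theta(w)\|\le \frac1N\sum_n\E_I[|\ell_n|\,\|\nabla_\theta\log\pi\|]\le \ell_{\max}C_\pi$, so $R_{\max}<\infty$.

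The main obstacle is (d): verifying that $\log\pi_\theta(I|x)$ is genuinely $C^1$ in the logits, with no vanishing denominators or zero probabilities. I handle this by the strict positivity of softmax together with $k<m$. If $k=m$, then $\pi\equiv1$ and the score is zero, which is trivially bounded.
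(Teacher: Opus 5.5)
Your proof is correct. For (a)--(c) and (e)--(g) it is the same argument as the paper's: continuity on the compact set $\Omega\times\{\|x\|\le C_x\}$, then $\ell_{\max}$ from the Lipschitz loss, and $R_{\max}\le \ell_{\max}C_\pi$. Your anchoring of (e) at $\mathcal{L}(y_n,0)$ over the finite dataset is more precise than the paper's sketch. The same Lipschitz estimate also gives the two-sided bound $|\ell_n|\le \max_n|\mathcal{L}(y_n,0)|+\bar L C_h$, which you implicitly use in (f) and (g).

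The real difference is in (d). You argue qualitatively: $z\mapsto\log\pi(I|z)$ is $C^1$ on all of $\mathbb{R}^m$, because positivity of the softmax keeps every denominator and $\pi$ itself away from zero. The logits $z=s_\theta(x)$ range over a compact set, so $\nabla_z\log\pi$ is bounded there, and (c) finishes the argument. The paper instead applies the quotient rule to each factor $q_t$ and derives an explicit constant $C_\pi=mk^2C_r/\tau$ that is uniform in the logits.

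The explicit route does not need compactness of the logit range. It also exposes the dependence on $m$, $k$, $\tau$, which then propagates into $\sigma_P^2$ and $R_{\max}$ in Theorem~\ref{thm:convergence}. Your route is shorter and more robust, but it yields no usable constant.

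Note also a flaw in the paper's explicit computation. Write $D=1-\sum_{s<t}p_{L_s}$. The paper bounds $\frac{1}{D}\sum_{s<t}|\partial p_{L_s}/\partial z_j|$ by $(t-1)/\tau$. This does not follow from $|\partial p_i/\partial z_j|\le p_i/\tau$ and $D\ge p_{L_t}$ alone: it would require $p_{L_s}\le D$, which fails when the early draws carry most of the mass.

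The clean way to get an explicit constant is the alternative you sketched. Write $\log q_t=z_{L_t}/\tau-\log\sum_{l\notin\{L_1,\dots,L_{t-1}\}}e^{z_l/\tau}$, a softmax over the remaining items. Its logit-gradient has $\ell_1$ norm at most $2/\tau$. Averaging over orderings with the convex weights $\prod_t q_t(L)/\pi$ then gives $\|\nabla_\theta\log\pi_\theta(I|x)\|\le 2kC_r/\tau$, with no factor of $m$.
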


\begin{proof}
Parts $(a)$--$(c)$ follow from the same argument: each quantity is the supremum of a continuous function over the compact set $\Omega\times\{x:\|x\|\le C_x\}$. By Assumption~\ref{ass:model-smoothness}, $h_{i,\phi}(x)$, $\nabla_\phi h_{i,\phi}(x)$, and $\partial z_i/\partial\theta$ are all continuous in $(w,x)$. A continuous function on a compact set attains its supremum, so each quantity is finite.

To prove Part $(d)$, recall that 
$$\pi_\theta(I|x)= \sum_{L\in\mathrm{perm}(I)} \prod_{t=1}^k \frac{p_{\theta}(L_t|x)}{1-\sum_{s<t}p_{\theta}(L_s|x)},\qquad p_{\theta}(L_t|x)=\mathrm{softmax}(z/\tau) .$$
By the chain rule,
$$\nabla_\theta\log\pi_\theta(I|x)= \sum_{i=1}^m \frac{\partial\log\pi_\theta(I|x)}{\partial z_i} \frac{\partial z_i}{\partial\theta}.$$
Hence,
$$\|\nabla_\theta\log\pi_\theta(I|x)\|\le \sum_{i=1}^m\left| \frac{\partial\log\pi_\theta(I|x)}{\partial z_i} \right| \left\| \frac{\partial z_i}{\partial\theta} \right\| \le C_r\sum_{i=1}^m \left| \frac{\partial\log\pi_\theta(I|x)}{\partial z_i} \right|$$
by Part $(c)$. To bound $\sum_i |\partial\log\pi_\theta/\partial z_i|$, we note that
$$\frac{\partial p_{\theta}(i|x)}{\partial z_j} = \frac{1}{\tau}p_{\theta}(i|x)(\delta_{ij}-p_{\theta}(j|x)), \qquad \left|\frac{\partial p_{\theta}(i|x)}{\partial z_j}\right| \le \frac{p_{\theta}(i|x)}{\tau}.$$

Write
$$\pi_\theta(I|x)=\sum_{L\in\mathrm{perm}(I)}\prod_{t=1}^k q_t(L), \qquad q_t(L)=\frac{p_{\theta}(L_t|x)}{1-\sum_{s<t}p_{\theta}(L_s|x)}.$$
Then
$$\frac{\partial\log\pi_\theta}{\partial z_j} = \frac{1}{\pi_\theta} \sum_{L\in\mathrm{perm}(I)} \prod_{t=1}^k q_t(L)  \sum_{t=1}^k \frac{1}{q_t(L)} \frac{\partial q_t(L)}{\partial z_j}.
$$
For each factor $q_t$, by the quotient rule,
$$\frac{\partial q_t}{\partial z_j} = \frac{1}{1-\sum_{s<t}p_{\theta}(L_s|x)} \frac{\partial p_{\theta}(L_t|x)}{\partial z_j} + \frac{p_{\theta}(L_t|x)}{\left(1 -\sum_{s<t}p_{\theta}(L_s|x)\right)^2} \sum_{s<t}\frac{\partial p_{\theta}(L_s|x)}{\partial z_j}.$$
Since $L_t$ is still available at stage $t$, $1-\sum_{s<t}p_{\theta}(L_s|x)\ge p_{\theta}(L_t|x)$. Using $\left|\partial p_{\theta}(i|x)/\partial z_j\right|\le p_{\theta}(i|x)/\tau$, we obtain
$$\left| \frac{1}{q_t}\frac{\partial q_t}{\partial z_j} \right| \le \frac{1}{\tau}+\frac{t-1}{\tau} \le \frac{k}{\tau}.$$
Therefore,
$$\left|\frac{\partial\log\pi_\theta}{\partial z_j}\right| \le \sum_{t=1}^k \frac{k}{\tau} = \frac{k^2}{\tau}.$$
Summing over $j=1,\dots,m$ gives
$$\sum_{j=1}^m \left| \frac{\partial\log\pi_\theta}{\partial z_j} \right| \le \frac{mk^2}{\tau}.$$
Hence,
$$\|\nabla_\theta\log\pi_\theta(I|x)\| \le C_\pi := \frac{mk^2 C_r}{\tau}.$$

For Part $(e)$--$(f)$, we leverage the bounded domain, prediction modules, and contexts assumptions, and the Lipschitz loss $\mathcal{L}$ assumption, which gives that the per-sample loss is uniformly upper-bounded: $\ell_n(\theta,\phi,I)\leq \ell_{\max}, \forall w \in \Omega, n\in[|\mathcal{D}|], I $, where $\ell_{\max}$ depends on $\bar L$, $C_h$, and $\Omega$. Therefore, there exists a finite constant $\sigma_l^2$ such that $\operatorname{Var}_{n,I}(\ell_n(\theta,\phi,I)) \leq \sigma_l^2$.

For Part $(g)$, note that for any $w\in\Omega$,
\begin{align*}
\|P_\theta(w)\|
&= \left\|\frac{1}{N}\sum_{n=1}^{N} \E_{I\sim\pi_\theta(\cdot|x_n)} \bigl[\ell_n(\theta,\phi,I) \nabla_\theta\log\pi_\theta(I|x_n)\bigr]\right\| \\
&\le \frac{1}{N}\sum_{n=1}^{N} \E_{I}\bigl[|\ell_n(\theta,\phi,I)| \|\nabla_\theta\log\pi_\theta(I|x_n)\|\bigr] \\
&\le \frac{1}{N}\sum_{n=1}^{N} \ell_{\max} C_\pi\\
& = \ell_{\max} C_\pi.
\end{align*}
Therefore, we take $R_{\max} := \ell_{\max} C_\pi < \infty$.
\end{proof}

\begin{lemma}[Smoothness and boundedness]
Both $J$ and $F$ are bounded below and $\beta$-smooth for some finite $\beta$.
\end{lemma}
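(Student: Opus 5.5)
Both claims rest on the same two ingredients: every building block of $J$ and $F$ is $C^2$ in $w=(\theta,\phi)$, and $\Omega\times\{x:\|x\|\le C_x\}$ is compact. The norms of the Hessians are therefore continuous functions on a compact set, hence bounded. I interpret $\beta$-smoothness as $\|\nabla^2 J(w)\|\le\beta$ and $\|\nabla^2 F(w)\|\le\beta$ for all $w\in\Omega$, which is the only range along the trajectory that Theorem~\ref{thm:convergence} needs. If one wants a global Lipschitz-gradient statement, integrating the Hessian along segments suffices, or one can take a convex compact hull of $\Omega$, which is still compact.

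\textbf{Boundedness below.} $F$ is an average of values of $\mathcal L$, and $J$ is an average of expectations of values of $\mathcal L$. On $\Omega$ the arguments $\hat y$ are convex combinations of the $h_{i,\phi}(x_n)$, so $\|\hat y\|\le C_h$ by Lemma~\ref{lem:derived-constants}(a). Because $\mathcal L$ is $\bar L$-Lipschitz in $\hat y$ (Assumption~\ref{ass:loss-smooth}), $\mathcal L(y,\hat y)\ge \mathcal L(y,0)-\bar L C_h$. For the finitely many labels $y_n$ this is a finite lower bound, and it gives $J^*,F^*>-\infty$. For losses such as NLL or MSE one can simply note $\mathcal L\ge 0$.

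\textbf{Smoothness of $F$.} I write $F$ as a composition of $C^2$ maps: $w\mapsto (z,\{h_{i,\phi}(x_n)\}_i)\mapsto \hat y=\sum_i \mathrm{softmax}(z)_i h_{i,\phi}\mapsto \mathcal L(y,\hat y)$. Softmax is $C^\infty$ with bounded derivatives. $s_\theta$ and $h_{i,\phi}$ are smooth by Assumption~\ref{ass:model-smoothness}. $\mathcal L$ has a $\bar\beta$-Lipschitz gradient in $\hat y$. The chain rule then gives
\[
\nabla^2 F=\tfrac1N\textstyle\sum_n \bigl[D\hat y^\top \nabla^2_{\hat y}\mathcal L\, D\hat y+\textstyle\sum_j\partial_{\hat y_j}\mathcal L\,\nabla^2\hat y_j\bigr],
\]
whose norm is at most $\bar\beta\|D\hat y\|^2+\bar L\|\nabla^2\hat y\|$. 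The first derivative $D\hat y$ is bounded via $C_h$, $C_{h'}$, $C_r$ from Lemma~\ref{lem:derived-constants}. The second derivative $\nabla^2\hat y$ is bounded by the supremum over the compact set of the second derivatives of $s_\theta$ and $h_{i,\phi}$. If $\mathcal L$ is only $C^{1,1}$, I would carry out the same argument at the level of gradient Lipschitz constants instead of Hessians.

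\textbf{Smoothness of $J$.} I expand $J=\frac1N\sum_n\sum_{|I|=k}\pi_\theta(I|x_n)\,\ell_n(\theta,\phi,I)$, which is a finite sum. Each $\ell_n(\cdot,I)$ is smooth by the same argument as for $F$, with softmax restricted to $I$. Each $\pi_\theta(I|x)$ is a rational function of $p_\theta$ whose denominators $1-\sum_{s<t}p_\theta(L_s|x)\ge p_\theta(L_t|x)>0$ stay away from zero on the compact set, so it is $C^\infty$ in $z$. The bound from Lemma~\ref{lem:derived-constants}(d) already controls $\nabla\pi=\pi\nabla\log\pi$ by $C_\pi$. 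The main obstacle is the second derivative of $\pi_\theta$. I would handle it by differentiating the expression for $\frac{1}{q_t}\partial q_t/\partial z_j$ once more with the same quotient-rule estimates, $|\partial p_i/\partial z_j|\le p_i/\tau$ and $1-\sum_{s<t}p_{L_s}\ge p_{L_t}$. This should yield $|\partial^2\log\pi/\partial z_i\partial z_j|\le c\,k^3/\tau^2$, and hence $\|\nabla^2\pi\|\le\pi(\|\nabla\log\pi\|^2+\|\nabla^2\log\pi\|)$ is bounded. Alternatively, compactness plus continuity gives the bound directly. The product rule then yields $\|\nabla^2 J\|\le \ell_{\max}\sup\|\nabla^2\pi\|\cdot\binom{m}{k}+2C_\pi\sup\|\nabla\ell\|+\sup\|\nabla^2\ell\|$, and I set $\beta$ to the maximum of this bound and the one for $F$.
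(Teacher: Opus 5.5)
Your proposal is correct and follows essentially the paper's argument: expand $J$ as a finite sum over the $\binom{m}{k}$ subsets, note that $\pi_\theta(I|x)$, each $\ell_n(\cdot,I)$, and $F$ are compositions of smooth maps, and bound the Hessian by continuity on the compact set $\Omega$. Your Lipschitz-based lower bound, convex-hull remark, and $C^{1,1}$ caveat fill in details the paper leaves implicit (its proof never addresses boundedness below). The hand-computed $ck^3/\tau^2$ bound on $\nabla^2\log\pi$ is optional, and the crude estimates $|\partial p_i/\partial z_j|\le p_i/\tau$ and $1-\sum_{s<t}p_{L_s}\ge p_{L_t}$ alone would not control $\sum_{s<t}p_{L_s}/(1-\sum_{s<t}p_{L_s})$ uniformly; writing $\log q_t = z_{L_t}/\tau-\log\sum_{j\notin\{L_1,\dots,L_{t-1}\}}e^{z_j/\tau}$ does, and your compactness fallback already suffices anyway.
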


\begin{proof}
$J(w) = \frac{1}{|\mathcal{D}|} \sum_{(x,y)\in\mathcal{D}}\sum_{I}\pi_\theta(I|x)\mathcal{L}(y,x,I;w)$, which contains a finite sum over $\binom{m}{k}$ subsets. Each term is smooth in $w$: (i) $\pi_\theta(I|x)$ is smooth in $\theta$ since $\mathrm{softmax}(z/\tau)$ is composed with the smooth selection module $z = s_\theta(x)$; (ii) $ \mathcal{L}(y,x,I;w)$ is smooth in $w$ since $\mathcal{L}$ is smooth, before which the weighted aggregations (both $h_{i,\phi},i\in[m]$ and $s_{\theta}$) are smooth. Therefore, $\nabla^2 J$ exists and is continuous on $\Omega$, hence bounded: 
$\|\nabla^2 J\| \leq \beta$ for some $\beta < \infty$. The same argument applies to 
$F(w) = \frac{1}{|\mathcal{D}|} \sum_{(x,y)\in\mathcal{D}}\mathcal{L}(y, \sum_{i\in[m]} p_{\theta}(i|x)h_{i,\phi}(x))$. 
\end{proof}

\begin{lemma}[Bounded supervised gradient variance]
\label{lem:bounded_supervised_var}
There exists $\sigma_S^2<\infty$ such that the minibatch variance of the joint supervised gradient estimator satisfies
$$\E\left[\left\|\frac1B\sum_{b=1}^B
\begin{pmatrix}
\nabla_\theta \ell_b\\
\nabla_\phi \ell_b
\end{pmatrix}
- \begin{pmatrix} S_\theta(w)\\ 
S_\phi(w) \end{pmatrix}
\right\|^2\middle| w\right]\le\sigma_S^2.$$
\end{lemma}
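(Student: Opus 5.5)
We reduce the claim to a uniform bound on the norm of a single per-sample supervised gradient, and then use independence of the minibatch draws. Fix $w=(\theta,\phi)\in\Omega$. Let $G_b:=(\nabla_\theta\ell_b,\nabla_\phi\ell_b)$, where $b$ is drawn uniformly from $\mathcal D$ and $I_b\sim\pi_\theta(\cdot|x_b)$. The pairs $(b,I_b)$ are i.i.d. across the minibatch, since indices are sampled with replacement and subsets are sampled independently given $w$. Moreover, $\E[G_b|w]=(S_\theta(w),S_\phi(w))$ by the definitions of $S_\theta,S_\phi$. Hence the minibatch average is an average of $B$ i.i.d. vectors with this mean, and
\[
\E\Bigl[\Bigl\|\tfrac1B\textstyle\sum_b G_b-S(w)\Bigr\|^2\Bigm| w\Bigr]=\tfrac1B\,\E\bigl[\|G_b-S(w)\|^2\bigm| w\bigr]\le \tfrac1B\,\E\bigl[\|G_b\|^2\bigm| w\bigr].
\]
It therefore suffices to bound $\|G_b\|$ uniformly over $w\in\Omega$, $\|x\|\le C_x$ and $|I|=k$.

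For the per-sample bound, write $\hat y=\sum_{i\in I}a_i h_{i,\phi}(x)$, where $a_i=e^{z_i}/\sum_{j\in I}e^{z_j}$. By the chain rule and Assumption~\ref{ass:loss-smooth}, $\|\nabla_{\hat y}\mathcal L\|\le\bar L$. For the $\phi$-part we have $\nabla_\phi\hat y=\sum_{i\in I}a_i\nabla_\phi h_{i,\phi}$. Since $a_i\ge0$ and $\sum_{i\in I} a_i=1$, Lemma~\ref{lem:derived-constants}(b) gives $\|\nabla_\phi\ell\|\le\bar L C_{h'}$. For the $\theta$-part, the restricted softmax has Jacobian entries $\partial a_i/\partial z_j=a_i(\delta_{ij}-a_j)$, whose absolute sum is at most $2$. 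Thus $\|\nabla_\theta\hat y\|\le\sum_{i,j\in I}|\partial a_i/\partial z_j|\,\|h_{i,\phi}\|\,\|\partial z_j/\partial\theta\|\le 2C_hC_r$, using Lemma~\ref{lem:derived-constants}(a),(c). This gives $\|\nabla_\theta\ell\|\le 2\bar L C_hC_r$. Combining, we obtain $\sigma_S^2:=\bar L^2(C_{h'}^2+4C_h^2C_r^2)$, which we may divide by $B$ or not, as convenient. An alternative route is pure compactness: $(w,x,I)\mapsto\nabla_w\ell$ is continuous by Assumption~\ref{ass:model-smoothness} on the compact set $\Omega\times\{\|x\|\le C_x\}$, for each of the finitely many $I$. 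Either route suffices; the explicit one is preferable since it exhibits the constant.

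The main subtlety is the conditioning and independence step. We must check that, conditional on $w_t$, the minibatch draws and subset draws are jointly independent, so that the cross terms vanish. This holds because the supervised estimator, unlike the policy estimator, has no leave-one-out baseline coupling the samples. A second point is that Assumption~\ref{ass:loss-smooth} gives only Lipschitzness in $\hat y$, which is exactly what bounding $\nabla_{\hat y}\mathcal L$ requires. Smoothness of $\mathcal L$ is not needed here.
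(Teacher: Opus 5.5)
Your proposal is correct and follows the same route as the paper. You reduce to $\frac{1}{B}\E\|\nabla_w\ell\|^2$ via i.i.d.\ minibatch sampling, then bound $\|\nabla_\theta\ell\|\le 2\bar L C_r C_h$ through the restricted-softmax Jacobian and $\|\nabla_\phi\ell\|\le \bar L C_{h'}$ through the convex aggregation weights; the only differences are cosmetic (you justify the $1/B$ reduction explicitly, and using $\|(u,v)\|^2=\|u\|^2+\|v\|^2$ gives the slightly smaller constant $\bar L^2(C_{h'}^2+4C_h^2C_r^2)$ in place of the paper's $(\bar L C_{h'}+2\bar L C_hC_r)^2/B$).
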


\begin{proof}
The supervised gradient for a single sample is $(\nabla_\theta\ell, \nabla_\phi\ell)$. The minibatch variance is $\frac{1}{B}\mathrm{Var}_{n,I}((\nabla_\theta\ell, \nabla_\phi\ell))$, which is bounded if each per-sample gradient is bounded. 

\paragraph{Term $\nabla_\theta\ell$.} By the chain rule, $\nabla_\theta\ell = \nabla_{\hat{y}}\mathcal{L}\cdot\nabla_\theta\hat{y}$. Since $\mathcal{L}(y,\hat y)$ is $\bar L$-Lipschitz, $\|\nabla_{\hat{y}}\mathcal{L}\| \leq \bar L$. For $\nabla_\theta\hat{y}$, the output $\hat{y} = \sum_{i\in I}p_ih_i$ where $p_i = e^{z_i}/\sum_{j\in I}e^{z_j}$. For $l \in I$, the derivative with respect to $z_l$ is the standard softmax Jacobian that
$$\frac{\partial p_i}{\partial z_l} = p_i(\delta_{il} - p_l).$$
By chain rule,
$$\nabla_\theta p_i = \sum_{l \in I} \frac{\partial p_i}{\partial z_l} \cdot \frac{\partial z_l}{\partial\theta} = \sum_{l \in I} p_i(\delta_{il} - p_l) \cdot \frac{\partial z_l}{\partial\theta},$$
we take norms and obtain
$$\|\nabla_\theta p_i\| \leq \sum_{l \in I} p_i|\delta_{il} - p_l| \cdot C_r = p_i C_r \sum_{l \in I}|\delta_{il} - p_l|.$$
To compute $\sum_{l \in I}|\delta_{il} - p_l|$, we split into $l = i$ and $ l \neq i$ cases, so
$$\sum_{l \in I}|\delta_{il} - p_l| = |1 - p_i| + \sum_{l \in I, l \neq i} p_l = (1 - p_i) + (1 - p_i) = 2(1 - p_i).$$
Therefore, $\|\nabla_\theta p_i\| \leq 2C_r p_i(1 - p_i) \leq 2C_r p_i$ and
$$\|\nabla_\theta\hat{y}\| = \| \sum_{i\in I} \nabla_{\theta} p_i h_i\| \leq \sum_{i\in I} \|\nabla_{\theta} p_i\| \|h_i\|\leq 2C_r C_h.$$

This yields $\|\nabla_\theta\ell\| \leq 2\bar LC_rC_h$.

\paragraph{Term $\nabla_\phi\ell$.} By the chain rule,
$\nabla_\phi\ell = \nabla_{\hat{y}}\mathcal{L}\cdot\nabla_\phi\hat{y} = \nabla_{\hat{y}}\mathcal{L}\cdot\sum_{i\in I}p_i \nabla_\phi h_i(x)$. By Lemma \ref{lem:derived-constants} Part $(b)$, $\|\nabla_\phi h_i(x)\|$ is upper-bounded by finite $C_{h'}$, so $\|\nabla_\phi\ell\|\leq \bar LC_{h'}.$
Combining terms yields 
$$ \|\nabla_{w}\ell\| \leq \|\nabla_{\theta}\ell\|+\|\nabla_{\phi}\ell\| \leq \bar LC_{h'} + 2\bar L C_r C_h < \infty,$$
so the minibatch variance is bounded:
$$ \frac{1}{B}\text{Var}_{n,I}(\nabla_w \ell) \leq \frac{1}{B}\mathbb{E}[\|\nabla_w \ell\|^2] \leq \sigma_S^2 := \frac{(\bar L C_{h'} + 2\bar L C_hC_r)^2}{B}.$$

\end{proof}

\begin{lemma}[Bounded policy gradient variance]
\label{lem:bounded_policy_var}
Denote $\widehat P_B$ as the batch policy gradient estimator. Then there exists $\sigma_P^2<\infty$ such that the minibatch variance of the policy gradient estimator satisfies
$$ \E\|\widehat P_B-P_\theta\|^2\le \sigma_P^2.$$
\end{lemma}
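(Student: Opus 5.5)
The plan is to bound the estimator pointwise and then pass from a second moment to a variance. Write the batch policy gradient estimator as
$$\widehat P_B=\frac1B\sum_{b=1}^B(\ell_b-b_{-b})\,\nabla_\theta\log\pi_{\theta}(I_b| x_b),$$
with $w\in\Omega$ held fixed. As computed in Appendix~\ref{sec:app:method}, the leave-one-out baseline does not change the mean. The argument for this is as follows. Conditional on $w$, the pairs $(x_a,I_a)$ are independent across $a\in\mathcal B_t$. Since $b_{-b}$ depends only on the pairs with $a\neq b$, we get $\E[b_{-b}\nabla_\theta\log\pi_\theta(I_b| x_b)]=\E[b_{-b}]\,\E[\nabla_\theta\log\pi_\theta(I_b| x_b)]=0$. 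Hence $\E[\widehat P_B| w]=P_\theta(w)$, and therefore
$$\E\|\widehat P_B-P_\theta\|^2=\E\|\widehat P_B\|^2-\|P_\theta\|^2\le\E\|\widehat P_B\|^2 .$$

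Next I bound each summand pointwise using Lemma~\ref{lem:derived-constants}. Part (d) gives $\|\nabla_\theta\log\pi_\theta(I| x)\|\le C_\pi=mk^2C_r/\tau$ uniformly. Part (e) gives $0\le\ell\le\ell_{\max}$, assuming the loss is nonnegative, as it is for MSE and NLL. Because $b_{-b}$ is an average of such losses, it also lies in $[0,\ell_{\max}]$, so $|\ell_b-b_{-b}|\le\ell_{\max}$. Each summand therefore has norm at most $\ell_{\max}C_\pi$. By the triangle inequality, $\|\widehat P_B\|\le\ell_{\max}C_\pi$, and this gives the crude but sufficient choice $\sigma_P^2:=\ell_{\max}^2C_\pi^2=R_{\max}^2$.

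I would then sharpen the constant so that it shrinks with $B$, which is useful for interpreting Theorem~\ref{thm:convergence}. The summands are not independent, because $b_{-b}$ couples all of them. I would split $\widehat P_B$ into $\frac1B\sum_b\ell_b\nabla\log\pi_b$ and $-\frac1B\sum_b b_{-b}\nabla\log\pi_b$. The first piece is an i.i.d. average with mean $P_\theta$, so its variance is at most $\ell_{\max}^2C_\pi^2/B$. The second piece has mean zero. I would expand its squared norm as a double sum over pairs $(b,b')$. Diagonal terms contribute at most $\ell_{\max}^2C_\pi^2/B$. For off-diagonal terms, I would condition on all samples except $b$ and $b'$ and use the zero mean of the score function. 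This kills every term in which $b_{-b}$ and $b_{-b'}$ do not involve both indices, leaving only the $O(1/(B-1)^2)$ cross contributions $\ell_{b'}\ell_b$. So the off-diagonal total is $O(\ell_{\max}^2C_\pi^2/B)$. Combining the two pieces with $\|u+v\|^2\le2\|u\|^2+2\|v\|^2$ should give $\sigma_P^2=c\,\ell_{\max}^2C_\pi^2/B$ for a small absolute constant $c$. One could also replace $\ell_{\max}^2$ by $\sigma_l^2$ from part (f), using the centering.

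The main obstacle is this dependence among the summands introduced by the shared baseline. The conditional-independence bookkeeping in the off-diagonal terms is where care is needed, particularly if $B=1$, where the baseline is undefined and I would treat $b_{-b}=0$. Since the statement only asks for finiteness, I would present the pointwise bound as the proof and give the $1/B$ refinement as a remark.
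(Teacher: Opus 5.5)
Your proof is correct, but your main argument differs from the paper's. The paper never bounds $\widehat P_B$ pointwise. It centers the losses, $X_b=Y_b-\mu$, where the constant $\mu$ cancels exactly inside $Y_b-b_{-b}$. It then writes $\widehat P_B-P_\theta=(U-P_\theta)-V$ with $U=\frac1B\sum_b X_bs_b$ and $V=\frac{1}{B(B-1)}\sum_b\sum_{a\neq b}X_as_b$, where $s_b=\nabla_\theta\log\pi_\theta(I_b|x_b)$. Counting index pairs in $\E\|V\|^2$ (only identical and swapped pairs survive) gives $\sigma_P^2=\frac{4C_\pi^2}{B}\bigl(1+\frac{1}{B-1}\bigr)\sigma_l^2$.

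Your route is unbiasedness plus the almost-sure bound $\|\widehat P_B\|\le\ell_{\max}C_\pi$. It is shorter and avoids all bookkeeping for the dependence the shared baseline introduces. You do not even need the unbiasedness step, since $\|\widehat P_B-P_\theta\|\le 2\ell_{\max}C_\pi$ already gives finiteness. Nonnegativity of the loss is also inessential: without it, $|\ell_b-b_{-b}|\le 2\sup|\ell|$.

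What your route gives up is any dependence on $B$ or on the spread of the loss. The constant $\ell_{\max}^2C_\pi^2$ does not shrink with batch size and does not reflect the variance reduction the baseline is meant to provide. The paper's $\sigma_l^2/B$ scaling, by contrast, parallels its $1/B$ scaling of $\sigma_S^2$, and it is what makes the $\lambda$-tradeoff in Theorem~\ref{thm:convergence} depend on $B$.

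Your sketched refinement is essentially the paper's proof without the centering: the same split into an i.i.d.\ term and a baseline term, and the same pairing argument. Carrying it out, the only surviving off-diagonal index choices are $a=b'$, $a'=b$. So each pair $b\neq b'$ gives $\E[b_{-b}b_{-b'}\langle s_b,s_{b'}\rangle]=\|P_\theta\|^2/(B-1)^2$, and the off-diagonal total is $\|P_\theta\|^2/(B(B-1))$, better than the $O(1/B)$ you claimed. Centering as you suggest then recovers the paper's $\sigma_l^2$ constant.
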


\begin{proof}
Denote
$$ Y_b:=\ell_{n_b}(\theta,\phi,I_b), \qquad s_b:=\nabla_\theta\log\pi_\theta(I_b|x_{n_b}), \qquad b_{-b}:=\frac{1}{B-1}\sum_{a\ne b}Y_a$$
and then our policy gradient estimator is $\widehat P_B = \frac1B\sum_{b=1}^B (Y_b-b_{-b})s_b.$
Let $\mu:=\E_{n,I}[Y]$ and $X_b:=Y_b-\mu$. Since $\E[s_b]=0$, subtracting the constant $\mu$ does not change the estimator, i.e.,
$$ (Y_b-b_{-b})s_b = \left( X_b-\frac1{B-1}\sum_{a\ne b}X_a \right)s_b.$$
Therefore,
$$ \widehat P_B = U-V, \quad \text{where} \quad U:=\frac1B\sum_{b=1}^B X_b s_b, \quad V:= \frac{1}{B(B-1)} \sum_{b=1}^B \sum_{a\ne b}X_a s_b. $$
In addition, $ P_\theta(w)=\E[Xs]$ since $\E[\mu s]=0$. Hence, $ \widehat P_B-P_\theta = (U-P_\theta)-V$ and 
$$ \E\|\widehat P_B-P_\theta\|^2 \le 2\E\|U-P_\theta\|^2 + 2\E\|V\|^2.$$

For the first component,
$$ \E\|U-P_\theta\|^2 = \frac1B\operatorname{Var}(Xs) \le \frac1B\E[X^2\|s\|^2] \le \frac{C_\pi^2}{B}\E[X^2].$$
By Lemma \ref{lem:derived-constants} Part $(f)$, $ \E[X^2]=\operatorname{Var}_{n,I}(\ell_n(\theta,\phi,I)) \le \sigma_l^2$, so we have $ \E\|U-P_\theta\|^2 \le \frac{C_\pi^2}{B}\sigma_l^2.$

For the second component, we expand $\E\|V\|^2$ and the only nonzero cross terms are identical-pair and swapped-pair terms. Using $\E[X]=0$, $\E[s]=0$, independence across minibatch samples, and $\|s\|\le C_\pi$, we obtain
$$ \E\|V\|^2 \le \frac{2C_\pi^2}{B(B-1)} \operatorname{Var}_{n,I}(\ell_n(\theta,\phi,I))\le\frac{2C_\pi^2}{B(B-1)}\sigma_l^2.$$
Combining the two components, we have
$$\E\|\widehat P_B-P_\theta\|^2\le\frac{2C_\pi^2}{B}\sigma_l^2+\frac{4C_\pi^2}{B(B-1)}\sigma_l^2 \leq \sigma_P^2:=\frac{4C_\pi^2}{B}\left(1+\frac1{B-1}\right)\sigma_l^2.$$
\end{proof}

\begin{lemma}[Master descent lemma]
Let $\phi$ be a $\beta$-smooth function and bounded below by $\phi^{\star}$. Consider $w_{t+1} = w_t-\eta g_t$ with $ g_t = \nabla \phi(w_t)+b_t+\xi_t$ where $ \E[\xi_t|w_t]=0$, $\E[\|\xi_t\|^2|w_t]\le \sigma^2$, $\eta\le \frac1{2\beta}.$ Then
$$ \frac1T\sum_{t=0}^{T-1} \E\|\nabla \phi(w_t)\|^2 \le \frac{2(\phi(w_0)-\phi^{\star})}{\eta T} + \beta\eta\sigma^2 + \frac3T\sum_{t=0}^{T-1}\|b_t\|^2.$$
\end{lemma}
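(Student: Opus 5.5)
The plan is the standard biased-SGD descent argument built on the descent inequality for $\beta$-smooth functions. By $\beta$-smoothness of $\phi$,
\begin{align*}
\phi(w_{t+1}) \le \phi(w_t) - \eta\langle \nabla\phi(w_t), g_t\rangle + \frac{\beta\eta^2}{2}\|g_t\|^2 .
\end{align*}
We substitute $g_t=\nabla\phi(w_t)+b_t+\xi_t$ and take the conditional expectation given $w_t$. Here we treat $b_t$ as $w_t$-measurable, as it is in our application, where $b_t$ is a deterministic function of $w_t$. Since $\E[\xi_t|w_t]=0$, the cross terms with $\xi_t$ vanish, and we get
\begin{align*}
\E[\phi(w_{t+1})|w_t] \le \phi(w_t) - \eta\|\nabla\phi(w_t)\|^2 - \eta\langle\nabla\phi(w_t),b_t\rangle + \frac{\beta\eta^2}{2}\bigl(\|\nabla\phi(w_t)+b_t\|^2+\sigma^2\bigr).
\end{align*}

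Next we control the bias terms with Young's inequality. First, $-\langle\nabla\phi,b\rangle\le \frac14\|\nabla\phi\|^2+\|b\|^2$. Second, $\|\nabla\phi+b\|^2\le 2\|\nabla\phi\|^2+2\|b\|^2$. Using $\eta\le 1/(2\beta)$ gives $\beta\eta^2\le \eta/2$, so the quadratic terms contribute at most $\frac{\eta}{2}\|\nabla\phi\|^2+\frac{\eta}{2}\|b\|^2$. Collecting terms, the coefficient on $-\|\nabla\phi(w_t)\|^2$ is at least $\eta(1-\frac14-\frac12)=\eta/4$, and the coefficient on $\|b_t\|^2$ is at most $\frac{3\eta}{2}$. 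This is weaker than the claimed constants (factor $1/2$ on the gradient, $3$ on the bias after dividing).

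To reach exactly the stated form, we need the gradient coefficient to be $\eta/2$. We would therefore rebalance the splits:
\begin{align*}
-\langle\nabla\phi,b\rangle &\le \tfrac{\alpha}{2}\|\nabla\phi\|^2+\tfrac1{2\alpha}\|b\|^2, \\
\tfrac{\beta\eta^2}{2}\|\nabla\phi+b\|^2 &\le \tfrac{\eta}{4}(1+c)\|\nabla\phi\|^2+\tfrac{\eta}{4}(1+1/c)\|b\|^2 .
\end{align*}
The parameters $\alpha$ and $c$ are chosen so that the net gradient coefficient is $\eta/2$ and the bias coefficient is at most $3\eta/2$. Note that $\beta\eta^2\sigma^2/2$ becomes $\beta\eta\sigma^2$ after dividing by $\eta/2$, which matches the statement. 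Pinning down constants that realize $(1/2,\,3)$ exactly is the main, purely arithmetic, obstacle. If the constraint $\eta\le1/(2\beta)$ proves too loose, we would use the finer bound $\beta\eta\le 1/2$ directly on each term. A workable choice is $\alpha=1/2$ together with the unsplit $\|\nabla\phi+b\|^2\le 2\|\nabla\phi\|^2+2\|b\|^2$ and $\beta\eta\le1/4$; otherwise we accept slightly different absolute constants.

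Finally, we take total expectations and telescope over $t=0,\dots,T-1$. Using $\phi(w_T)\ge\phi^\star$ yields
\begin{align*}
\frac{\eta}{2}\sum_t\E\|\nabla\phi(w_t)\|^2\le \phi(w_0)-\phi^\star+\frac{T\beta\eta^2\sigma^2}{2}+\frac{3\eta}{2}\sum_t\|b_t\|^2 .
\end{align*}
Dividing by $\eta T/2$ gives the claim.
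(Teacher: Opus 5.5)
Your skeleton (descent inequality, conditioning on $w_t$ with $b_t$ treated as $w_t$-measurable, telescoping, dividing by $\eta T/2$) is the paper's. However, the one step that produces the stated constants is never carried out, and the route you propose for it cannot succeed under the hypothesis $\eta\le 1/(2\beta)$.

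Suppose you split $\|\nabla\phi+b_t\|^2\le(1+c)\|\nabla\phi\|^2+(1+1/c)\|b_t\|^2$ and separately bound $-\langle\nabla\phi,b_t\rangle\le\frac{\alpha}{2}\|\nabla\phi\|^2+\frac{1}{2\alpha}\|b_t\|^2$. Reaching gradient coefficient $\eta/2$ and bias coefficient $3\eta/2$ at $\beta\eta=1/2$ then requires $2\alpha+c\le 1$ and $2/\alpha+1/c\le 5$. But $(2\alpha+c)(2/\alpha+1/c)\ge 9$ by Cauchy--Schwarz, so no such $(\alpha,c)$ exists. The best choice, $\alpha=c=1/3$, gives a constant $5$ instead of $3$ in front of $\frac1T\sum_t\|b_t\|^2$.

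Your fallback $\alpha=1/2$, $c=1$ does give $(1/2,3)$, but only if $\beta\eta\le 1/4$. That is strictly stronger than the lemma's assumption; in Theorem~\ref{thm:convergence} it would require $T\ge 16$ instead of $T\ge 4$. Also, ``$\beta\eta\le 1/2$'' is not a finer bound; it is the hypothesis itself. So the per-step inequality you telescope in your last paragraph is asserted rather than derived.

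The missing idea is to expand the square exactly instead of splitting it. The cross term of the quadratic then partially cancels the linear bias term before any Young's inequality is applied:
\begin{align*}
-\eta\|\nabla\phi\|^2-\eta\langle\nabla\phi,b_t\rangle+\frac{\beta\eta^2}{2}\|\nabla\phi+b_t\|^2
=-\eta\Bigl(1-\frac{\beta\eta}{2}\Bigr)\|\nabla\phi\|^2-\eta(1-\beta\eta)\langle\nabla\phi,b_t\rangle+\frac{\beta\eta^2}{2}\|b_t\|^2 .
\end{align*}
With $\beta\eta\le 1/2$ you have $1-\beta\eta/2\ge 3/4$, $0\le 1-\beta\eta\le 1$, and $\beta\eta^2/2\le\eta/4$. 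A single Young inequality $|\langle\nabla\phi,b_t\rangle|\le\frac14\|\nabla\phi\|^2+\|b_t\|^2$ then bounds the right-hand side by $-\frac{\eta}{2}\|\nabla\phi\|^2+\frac{5\eta}{4}\|b_t\|^2\le-\frac{\eta}{2}\|\nabla\phi\|^2+\frac{3\eta}{2}\|b_t\|^2$.

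This is exactly the paper's argument. The gain over your version is twofold: the quadratic term costs only $\beta\eta^2/2\le\eta/4$ on $\|\nabla\phi\|^2$ rather than $\beta\eta^2\le\eta/2$, and the inner product is charged once rather than twice. With this per-step bound, your telescoping finishes the proof unchanged.
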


\begin{proof}
By $\beta$-smoothness,
$$ \phi(w_{t+1}) \le \phi(w_t) + \langle\nabla \phi(w_t),w_{t+1}-w_t\rangle + \frac{\beta}{2}\|w_{t+1}-w_t\|^2.$$
Substituting $ w_{t+1}-w_t = -\eta(\nabla \phi(w_t)+b_t+\xi_t)$, taking conditional expectation, and using $\E[\xi_t|w_t]=0$ gives
\begin{align*}
    \E[\phi(w_{t+1})|w_t] & \le \phi(w_t) -\eta\|\nabla \phi\|^2 -\eta\langle\nabla \phi,b_t\rangle+\frac{\beta\eta^2}{2}\left(\|\nabla \phi+b_t\|^2+\sigma^2\right) \\
    & \le \phi(w_t)-\eta\left(1-\frac{\beta\eta}{2}\right)\|\nabla \phi\|^2-\eta(1-\beta\eta)\langle\nabla \phi,b_t\rangle+\frac{\beta\eta^2}{2}(\|b_t\|^2+\sigma^2)
\end{align*}
by expanding the square. Since $\eta\le 1/(2\beta)$, we have $1-\frac{\beta\eta}{2}\ge \frac34$ and $|1-\beta\eta|\le 1.$ Then using Young's inequality $|\langle\nabla \phi,b_t\rangle|\le\frac14\|\nabla \phi\|^2+\|b_t\|^2$ yields
$$\E[\phi(w_{t+1})|w_t]\le \phi(w_t) -\frac{\eta}{2}\|\nabla \phi(w_t)\|^2 + \frac{3\eta}{2}\|b_t\|^2 + \frac{\beta\eta^2}{2}\sigma^2.$$
Rearranging, summing over $t=0,\ldots,T-1$, telescoping, and dividing by $\eta T/2$ gives the claim.
\end{proof}

\subsection{Convergence to Our Objective}
 Let Assumptions \ref{ass:compact-domain}-\ref{ass:model-smoothness} hold. Then $J$ is $\beta$-smooth for some finite $\beta$ and bounded below by $J^{\star}$. For any fixed $\lambda$, running Algorithm \ref{alg:MoT} with $\eta=\frac{1}{\beta\sqrt T}$ and $T\ge 4$ yields
$$ \frac1T\sum_{t=0}^{T-1} \E\|\nabla J(w_t)\|^2 \le \frac{2\beta\Delta_0+2\sigma_S^2 + 2\lambda^2\sigma_P^2}{\sqrt T} + 3(1-\lambda)^2R_{\max}^2,$$
where $\Delta_0:=J(w_0)-J^{\star}$. For any finite $T$, the upper bound is minimized at
$$ \lambda^\star(T) = \frac{3R_{\max}^2} {2\sigma_P^2/\sqrt T+3R_{\max}^2},$$
which gives
$$ \frac1T\sum_{t=0}^{T-1} \E\|\nabla J(w_t)\|^2 \le \frac{2\beta\Delta_0+2\sigma_S^2}{\sqrt T} + \frac{6\sigma_P^2R_{\max}^2} {2\sigma_P^2+3R_{\max}^2\sqrt T}.$$

\begin{proof}

Recall that the joint estimator satisfies $\E[g_t^{\lambda}|w_t] =\nabla_w J(w_t)+b_t^{\lambda}$ where $b_t^{\lambda}=\bigl(-(1-\lambda)P_\theta(w_t),\,0\bigr).$ With $ R_{\max}:=\sup_{w\in w}\|P_\theta(w)\|$, we have $\|b_t^{\lambda}\| \le (1-\lambda)R_{\max}.$

Let $ \xi_t := g_t^{\lambda} - \E[g_t^{\lambda}|w_t]$. Since the estimator decomposes into the joint supervised gradient minibatch noise and the scaled policy gradient noise, by $\|a+b\|^2\le 2\|a\|^2+2\|b\|^2$, we have $ \E[\|\xi_t\|^2|w_t] \leq 2\sigma_S^2 + 2\lambda^2\sigma_P^2$.

Applying the master descent lemma with $\phi=J$, 
$\sigma^2=2\sigma_S^2+2\lambda^2\sigma_P^2$ and $\|b_t^{\lambda}\|^2 \le (1-\lambda)^2R_{\max}^2$ yields
$$ \frac1T\sum_{t=0}^{T-1} \E\|\nabla J(w_t)\|^2
\le \frac{2\beta\Delta_0+2\sigma_S^2 + 2\lambda^2\sigma_P^2}{\sqrt T}
+ 3(1-\lambda)^2R_{\max}^2.$$
The $\lambda$-dependent part is
$$ g(\lambda) = \frac{2\lambda^2\sigma_P^2}{\sqrt T} + 3(1-\lambda)^2R_{\max}^2.$$
We set $g'(\lambda)=0$ and obtain
$$ \lambda^\star(T) = \frac{3R_{\max}^2}{2\sigma_P^2/\sqrt T+3R_{\max}^2}.$$
Since $ g''(\lambda) =\frac{4\sigma_P^2}{\sqrt T}+ 6R_{\max}^2>0$, $\lambda^{\star}$ is the unique minimizer. Applying $\lambda^{\star}$ to $g(\lambda)$ gives
$$ g(\lambda^\star) = \frac{6\sigma_P^2R_{\max}^2}{2\sigma_P^2+3R_{\max}^2\sqrt T}.$$
Adding the $\lambda$-independent term $ \frac{2\beta\Delta_0+2\sigma_S^2}{\sqrt T}$ gives the claim.
\end{proof}

\subsection{Convergence to the True Objective}

\begin{lemma}[Jacobian mismatch]
\label{lem:jacobian-mismatch}
Denote $p=\text{softmax}(z/\tau)$. For any subset $I$ with $|I|=k$ and $\rho_I:=\sum_{i\notin I}p_i\leq \frac12$, we have
$$\|\nabla_\theta\hat f (I,x;\theta)-\nabla_\theta f(x;\theta)\| \leq C_B \rho_I, \quad \text{where}$$
$$C_B:=12C_rC_h, \qquad f=\sum_i p_i h_i,
\qquad \hat f =\sum_{i\in I}\frac{p_i}{1-\rho_I}h_i.$$
\end{lemma}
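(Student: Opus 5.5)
We work at the level of logits $z$ and push through the selection-module Jacobian at the end. Write $\hat f-f$ as two pieces by splitting $f=\sum_{i\in I}p_ih_i+\sum_{i\notin I}p_ih_i$:
\begin{align*}
\hat f - f = \frac{\rho_I}{1-\rho_I}\sum_{i\in I}p_ih_i \;-\; \sum_{i\notin I}p_ih_i ,
\end{align*}
using $\frac{1}{1-\rho_I}-1=\frac{\rho_I}{1-\rho_I}$. Both pieces are visibly of order $\rho_I$ in value. The claim concerns the gradient, so we differentiate each piece in $\theta$ via $\nabla_\theta=\sum_l \frac{\partial}{\partial z_l}\frac{\partial z_l}{\partial\theta}$. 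We then bound $\|\partial z_l/\partial\theta\|\le C_r$ and $\|h_i\|\le C_h$ by Lemma~\ref{lem:derived-constants}(a),(c). Here $h_i$ depends only on $\phi$, so no $\nabla_\theta h_i$ terms appear.

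For the second piece, the softmax Jacobian estimate from Lemma~\ref{lem:bounded_supervised_var}, namely $\|\nabla_\theta p_i\|\le 2C_r p_i$, gives
\begin{align*}
\Bigl\|\nabla_\theta\sum_{i\notin I}p_ih_i\Bigr\|\le 2C_rC_h\rho_I .
\end{align*}
If the temperature $\tau$ is kept, a factor $1/\tau$ appears. We absorb it by taking $\tau=1$ as in the definition of $F$, or else fold it into $C_r$.

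For the first piece, use the product rule on $\frac{\rho_I}{1-\rho_I}\cdot \sum_{i\in I}p_ih_i$:
\begin{enumerate}
\item The derivative of the scalar factor is $\frac{\nabla_\theta\rho_I}{(1-\rho_I)^2}$. Since $\nabla_\theta \rho_I=\sum_{i\notin I}\nabla_\theta p_i$, we get $\|\nabla_\theta\rho_I\|\le 2C_r\rho_I$. With $\rho_I\le\tfrac12$ we have $(1-\rho_I)^{-2}\le 4$, and $\|\sum_{i\in I}p_ih_i\|\le C_h$. This term is therefore at most $8C_rC_h\rho_I$.
\item The factor $\frac{\rho_I}{1-\rho_I}\le 2\rho_I$ multiplies $\|\nabla_\theta\sum_{i\in I}p_ih_i\|\le 2C_rC_h\sum_{i\in I}p_i\le 2C_rC_h$. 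A crude bound gives $4C_rC_h\rho_I$. A sharper route uses $p_i(\delta_{il}-p_l)$ with the sums restricted appropriately, giving $2C_rC_h\rho_I$.
\end{enumerate}
Summing gives at most $(2+8+2)C_rC_h\rho_I=12C_rC_h\rho_I=C_B\rho_I$ with the sharper bound in step 2, which matches the stated constant. The crude bound would instead give $14$.

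The main obstacle is bookkeeping the constant so that it lands at $12$ rather than $14$. To get there, we expect to bound $\nabla_\theta\sum_{i\in I}p_ih_i$ more carefully. Either use $p_i(1-p_i)$-type cancellations, or treat $\hat f$ directly as the renormalized softmax over $I$ and exploit $\|\nabla \hat p_i\|\le 2C_r\hat p_i$. If the constant still does not close, we will accept any absolute multiple of $C_rC_h$, since only the $O(\rho_I)$ scaling is used downstream to bound the truncation bias $\epsilon$.
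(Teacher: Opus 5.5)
Your decomposition of $\nabla_\theta\hat f-\nabla_\theta f$ is exactly the paper's proof: a $\nabla_\theta\rho_I/(1-\rho_I)^2$ term, a $\frac{\rho_I}{1-\rho_I}\sum_{i\in I}\nabla_\theta p_i\,h_i$ term and an out-of-set term, bounded by $8$, $2$, $2$ in units of $C_rC_h\rho_I$; the step you were unsure about closes in one line, since you should keep $\sum_{i\in I}p_i=1-\rho_I$ rather than bounding it by $1$, which cancels the $1/(1-\rho_I)$ and gives $2C_rC_h\rho_I$ directly, with no $p_i(1-p_i)$ cancellation or renormalized-softmax argument needed. Your $1/\tau$ remark is also correct: the paper writes the softmax Jacobian as $p_i(\delta_{il}-p_l)$ and drops the $1/\tau$ factor, so in general the constant should be $12C_rC_h/\tau$, and the stated $12C_rC_h$ needs $\tau\ge 1$ or $1/\tau$ absorbed into $C_r$, which is the adjustment you proposed.
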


\begin{proof}
Note that $\nabla_\theta p_i=\sum_{l=1}^m p_i(\delta_{il}-p_l)\nabla_\theta z_l$ and thus $\|\nabla_\theta p_i\| \leq p_i\sum_l|\delta_{il}-p_l| C_r.$ Since
$$\sum_l|\delta_{il}-p_l| = (1-p_i)+\sum_{l\neq i}p_l = (1-p_i)+(1-p_i) = 2(1-p_i) \leq 2,$$
we obtain 
\begin{align}
\label{eq:softmax-jacobian-bound}
\|\nabla_\theta p_i\|\leq 2C_r p_i \quad\text{for all }i.
\end{align}
Note that
$$\nabla_\theta\hat f  = \frac{1}{1-\rho_I}\sum_{i\in I}\nabla_\theta p_i\cdot h_i + \frac{\nabla_\theta\rho_I}{(1-\rho_I)^2}\sum_{i\in I}p_i h_i, \quad \text{and}$$
$$\nabla_\theta f = \sum_{i\in I}\nabla_\theta p_i\cdot h_i + \sum_{i\notin I}\nabla_\theta p_i\cdot h_i.$$
Subtracting them yields 
$$\nabla_\theta\hat f -\nabla_\theta f = \underbrace{\frac{\rho_I}{1-\rho_I}\sum_{i\in I}\nabla_\theta p_i\cdot h_i}_{(\mathrm{i})} + \underbrace{\frac{\nabla_\theta\rho_I}{(1-\rho_I)^2}\sum_{i\in I}p_i h_i}_{(\mathrm{ii})} - \underbrace{\sum_{i\notin I}\nabla_\theta p_i\cdot h_i}_{(\mathrm{iii})}.$$

For term $(\mathrm{i})$, using Eq. \eqref{eq:softmax-jacobian-bound} gives
$$\sum_{i\in I}\|\nabla_\theta p_i\|C_h \leq 2C_r C_h\sum_{i\in I}p_i = 2C_rC_h(1-\rho_I),$$
so
$$\|(\mathrm{i})\| \leq \frac{\rho_I}{1-\rho_I}\cdot 2C_rC_h(1-\rho_I) = 2C_rC_h\rho_I.$$

For term $(\mathrm{ii})$, by Eq. \eqref{eq:softmax-jacobian-bound},
$$\|\nabla_\theta\rho_I\| = \left\|\sum_{i\notin I}\nabla_\theta p_i\right\| \leq 2C_r\sum_{i\notin I}p_i = 2C_r\rho_I.$$
Also,
$$\left\|\sum_{i\in I}p_ih_i\right\| \leq (1-\rho_I)C_h \leq C_h.$$
Therefore,
$$\|(\mathrm{ii})\| \leq \frac{2C_r\rho_I}{(1-\rho_I)^2}\cdot C_h = \frac{2C_rC_h\rho_I}{(1-\rho_I)^2}.$$
For $\rho_I\leq 1/2$, we have $(1-\rho_I)^2\geq 1/4$, hence $\|(\mathrm{ii})\|\leq 8C_rC_h\rho_I.$

For term $(\mathrm{iii})$,
$$\|(\mathrm{iii})\| \leq \sum_{i\notin I}2C_rp_i\cdot C_h = 2C_rC_h\rho_I.$$

Combining the three bounds,
$$\|\nabla_\theta\hat f -\nabla_\theta f\| \leq (2+8+2)C_rC_h\rho_I = 12C_rC_h\rho_I = C_B\rho_I.$$
\end{proof}

\begin{lemma}[$\theta$-gradient truncation bias]
$$\|b_t^{\mathrm{trunc}}\| = \|\nabla_{\theta} J(\theta)-\nabla_{\theta} F(\theta)\| \leq c_{\theta}(m-k)\left(e^{-\gamma_{\min}}+e^{-\gamma_{\min}/\tau}\right) =:\epsilon_{\theta},$$
where $c_{\theta}$ depends on
$\bar L,\bar\beta,C_h,C_r,m,k,\tau$.
\end{lemma}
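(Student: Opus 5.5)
We bound $\nabla_\theta J-\nabla_\theta F$ per sample and then average over $\mathcal D$. Fix $(x,y)$ and write $p=\mathrm{softmax}(z/\tau)$ and $I^\star$ for the deterministic top-$k$ set. Note that the aggregation inside $\ell_n$ uses the temperature-one renormalized softmax on $I$. We write it as $\hat f(I)=\sum_{i\in I}\tilde p_i h_i$, with $\tilde p=\mathrm{softmax}(z)$ and $\tilde\rho_I=\sum_{i\notin I}\tilde p_i$. Under this notation $F_x=\mathcal L(y,f)$ with $f=\sum_i\tilde p_ih_i$. The two temperatures explain the two exponentials $e^{-\gamma_{\min}}$ and $e^{-\gamma_{\min}/\tau}$ in the statement.

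We split
\[
\nabla_\theta J_x-\nabla_\theta F_x=\underbrace{\E_{I}\bigl[\nabla_\theta\mathcal L(y,\hat f(I))-\nabla_\theta\mathcal L(y,f)\bigr]}_{(A)}+\underbrace{\E_I\bigl[\ell(I)\nabla_\theta\log\pi_\theta(I|x)\bigr]}_{(B)}.
\]
For the policy term $(B)$, we use $\E_I[\nabla_\theta\log\pi_\theta]=0$ to replace $\ell(I)$ by $\ell(I)-\ell(I^\star)$, and then split the expectation into $I=I^\star$ and $I\neq I^\star$. The first piece vanishes. The second is at most $2\ell_{\max}C_\pi\,\Pr(I\neq I^\star)$, where $C_\pi$ comes from Lemma~\ref{lem:derived-constants}(d). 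A union bound over the sequential sampling gives $\Pr(I\neq I^\star)\le\sum_{t}\sum_{j\notin I^\star}p_j/(1-\sum_{s<t}p_{L_s})$. Under Assumption~\ref{ass:compact-domain}, each $j\notin I^\star$ satisfies $p_j/p_{(k)}\le e^{-\gamma_{\min}/\tau}$, and at every stage an item of $I^\star$ remains available, so each stage ratio is at most $e^{-\gamma_{\min}/\tau}$. This yields $\Pr(I\neq I^\star)\le k(m-k)e^{-\gamma_{\min}/\tau}$.

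For the supervised term $(A)$, the chain rule gives
\[
\nabla_\theta\mathcal L(y,\hat f)-\nabla_\theta\mathcal L(y,f)=\bigl(\nabla_{\hat y}\mathcal L(\hat f)-\nabla_{\hat y}\mathcal L(f)\bigr)\nabla_\theta\hat f+\nabla_{\hat y}\mathcal L(f)\bigl(\nabla_\theta\hat f-\nabla_\theta f\bigr).
\]
The first piece is bounded by $\bar\beta\|\hat f-f\|\cdot 2C_rC_h$, and $\|\hat f-f\|\le 2C_h\tilde\rho_I$ because $\hat f-f=\frac{\tilde\rho_I}{1-\tilde\rho_I}\sum_{I}\tilde p_ih_i-\sum_{i\notin I}\tilde p_ih_i$. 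The second piece is bounded by $\bar L C_B\tilde\rho_I$ via Lemma~\ref{lem:jacobian-mismatch}, applied with $\tau=1$, when $\tilde\rho_I\le 1/2$. On the event $I=I^\star$ we have $\tilde\rho_{I^\star}\le (m-k)e^{-\gamma_{\min}}$; whenever this exceeds $1/2$, the bound holds trivially with a constant inflated by $2$. On the complementary event we bound the integrand crudely by $2\cdot 2\bar LC_rC_h$, using Lemma~\ref{lem:bounded_supervised_var}, and multiply by $\Pr(I\neq I^\star)$ from the previous step.

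Combining the two terms, both contributions are of the form $c\,(m-k)(e^{-\gamma_{\min}}+e^{-\gamma_{\min}/\tau})$, with $c$ depending on $\bar L,\bar\beta,C_h,C_r,m,k,\tau$. The factor $C_\pi=mk^2C_r/\tau$ and $\ell_{\max}$ are absorbed into $c_\theta$; $\ell_{\max}$ is a constant already fixed by Lemma~\ref{lem:derived-constants}. Averaging over samples preserves the bound.

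The main obstacle is the bound on $\Pr(I\neq I^\star)$ through the sequential-sampling denominators. The careful point is that the stage-$t$ normalizer is at least $p_{(k)}$, since some top-$k$ item is still unsampled whenever the set is not yet wrong. The other delicate point is handling the case $\tilde\rho_I>1/2$ needed for Lemma~\ref{lem:jacobian-mismatch}, which I will absorb into the constant as described.
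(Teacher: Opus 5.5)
Your proof is correct, and its skeleton is the paper's. You split the pathwise difference by the chain rule into a $\bar\beta$-smoothness piece and the Jacobian-mismatch piece of Lemma~\ref{lem:jacobian-mismatch}. You then split on whether the sampled set equals the top-$k$ set. The two exponentials come from temperature-one aggregation versus the temperature-$\tau$ sampler; the paper uses this only implicitly, e.g.\ in $\rho_S\le(m-k)e^{-\gamma_{\min}}$, while its Jacobian-mismatch lemma is nominally stated at temperature $\tau$.

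The one genuine difference is the score-function term. The paper writes $\nabla_\theta F_x=\E_I[\mathcal L(y,f)\nabla_\theta\log\pi_\theta+\nabla_\theta\mathcal L(y,f)]$. Its score term therefore carries $\Delta L_I=\mathcal L(y,\hat f)-\mathcal L(y,f)$ and is bounded through $|\Delta L_I|\le 2\bar LC_h\rho_I$ and $\E_I[\rho_I]$. You instead center at $\ell(I^\star)$, so the good event contributes exactly zero and only $\Pr(I\neq I^\star)$ remains.

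Your version also repairs two soft spots in the paper's proof.
\begin{itemize}
\item The paper asserts $\Pr(I\neq S)\le(m-k)e^{-\gamma_{\min}/\tau}$ without argument, and this fails for $k\ge 2$. With $k=2$, $m=3$, logits $(0,0,-\gamma)$ and $\tau=1$, one gets $\Pr(I\neq S)=\tfrac32e^{-\gamma}(1+o(1))$ as $\gamma\to\infty$. So a $k$-dependent factor such as the $k$ in your sequential union bound is genuinely needed. It is harmless, since $c_\theta$ may depend on $k$.
\item The paper needs ``sufficiently large $\gamma_{\min}$'' to invoke Lemma~\ref{lem:jacobian-mismatch}. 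Your crude bound for the case $\tilde\rho_{I^\star}>1/2$ removes that requirement.
\end{itemize}

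One small fix: $\ell_{\max}$ depends on $\Omega$ and is not among the quantities $c_\theta$ is allowed to depend on. Replace $2\ell_{\max}$ by the Lipschitz bound $|\ell(I)-\ell(I^\star)|\le\bar L\|\hat f(I)-\hat f(I^\star)\|\le 2\bar LC_h$.
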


\begin{proof}
Denote $\Delta L_I:=\mathcal{L}(y,\hat f )-\mathcal{L}(y,f)$. By the score-function identity,
$$b_t^{\mathrm{trunc}} = \frac{1}{|\mathcal{D}|} \sum_{(x,y)\in\mathcal{D}} \mathbb{E}_{I\sim\pi_\theta} \left[ \Delta L_I\cdot\nabla_\theta\log\pi_\theta(I|x) + \nabla_\theta\Delta L_I \right].$$

\paragraph{Term of $\mathbb{E}_I[\Delta L_I\cdot\nabla_\theta\log\pi_\theta]$.}
By Lipschitzness of $\mathcal{L}$ and Lemma \ref{lem:jacobian-mismatch}, $|\Delta L_I| \leq \bar L\|\hat f -f\| \leq 2\bar L C_h\rho_I$. Since $\|\nabla_\theta\log\pi_\theta\|\leq C_\pi$, we have
\begin{equation}
\label{eq:T1}
\|\mathbb{E}_I[\Delta L_I\cdot\nabla_\theta\log\pi_\theta]\|
\leq 2\bar L C_hC_\pi\cdot\mathbb{E}_I[\rho_I].
\end{equation}

\paragraph{Term of $\mathbb{E}_I[\nabla_\theta\Delta L_I]$.}
By adding and subtracting,
$$\nabla_\theta\Delta L_I = \nabla_\theta \hat f \cdot \nabla_{\hat y} \mathcal{L}(\hat f) - \nabla_{\theta} f \cdot \nabla_{\hat y} \mathcal{L}(f) = \underbrace{[\nabla_{\hat y}\mathcal{L}(\hat f )-\nabla_{\hat y}\mathcal{L}(f)]}_{A} \cdot\nabla_\theta\hat f  + \nabla_{\hat y}\mathcal{L}(f)\cdot \underbrace{[\nabla_\theta\hat f -\nabla_\theta f]}_{B}.$$

For term $A$, by the smoothness of $\mathcal{L}$,
$\|A\| \leq \bar\beta\|\hat f -f\| \leq 2\bar\beta C_h\rho_I$. Also, by the same bounds as in Lemma~\ref{lem:jacobian-mismatch}, $ \|\nabla_\theta\hat f \| \leq 12C_rC_h =:C_B'$. Therefore, $\|A\cdot\nabla_\theta\hat f \| \leq 2\bar\beta C_hC_B'\rho_I$ and hence
\begin{align}
\label{eq:A-term}
\|\mathbb{E}_I[A\cdot \nabla_\theta \hat f ]\| \le 2\bar\beta  C_h C_B' \mathbb{E}_I[\rho_I].
\end{align}

For term $B$, since $\|\nabla_{\hat y}\mathcal{L}(f)\|\le \bar L$, we split the expectation
$$\mathbb{E}_I[\|B\|] = \mathbb{E}_I[\|B\| \mathbf 1\{I=S\}] + \mathbb{E}_I[\|B\| \mathbf 1\{I\neq S\}].$$

By Assumption \ref{ass:compact-domain}, we denote $\gamma_{\min}$ as the minimal logit difference between the selected and unselected item along the optimization trajectory that
$$\mathrm{ess inf}_x\bigl(s_{(k)}(x;\theta)-s_{(k+1)}(x;\theta)\bigr)\geq \gamma_{\min}>0.$$
Therefore, when $I=S$, for sufficiently large $\gamma_{\min}$, we have
$$\rho_S\le (m-k)e^{-\gamma_{\min}}<\frac12,$$
so Lemma~\ref{lem:jacobian-mismatch} applies and yields $\|B\| = \|\nabla_\theta \hat f -\nabla_\theta f\| \le C_B\rho_S$. Hence,
$$\mathbb{E}_I[\|B\| \mathbf 1\{I=S\}] \le C_B\rho_S \mathbb P(I=S).$$

When $I\neq S$, we use $\|B\|\le C_{\bar B}$ where $C_{\bar{B}} := 14\bar L C_r C_h$ due to $\|\nabla_\theta f\| \leq 2C_r C_h$ by Lemma \ref{lem:bounded_supervised_var} and $\|\nabla_\theta \hat{f}\| \leq C'_B = 12 C_r C_h$ by Lemma~\ref{lem:jacobian-mismatch}, which yields
$\|\nabla_{\theta} \hat f - \nabla_{\theta} f \|\leq \|\nabla_{\theta} \hat f \| + \|\nabla_{\theta} f \|\leq 14C_r C_h$. Therefore,
$$\mathbb{E}_I[\|B\| \mathbf 1\{I\neq S\}] \le C_{\bar B} \mathbb P(I\neq S).$$

Combining the two cases, $\mathbb{E}_I[\|B\|] \le C_B\rho_S \mathbb P(I=S) + C_{\bar B} \mathbb P(I\neq S)$. Thus
$$\mathbb E_I \left[\|\nabla_{\hat y}\mathcal L(f)\cdot B\|\right] \le  \bar L  C_B\rho_S \mathbb P(I=S) + L_\mathcal L C_{\bar B} \mathbb P(I\neq S).$$

Combining this with Eq. \eqref{eq:A-term} yields
$$\|\mathbb{E}_I[\nabla_\theta \Delta L_I]\| \le 2\beta_\mathcal L C_h C_B' \mathbb{E}_I[\rho_I] + L_\mathcal L C_B\rho_S \mathbb P(I=S) + L_\mathcal L C_{\bar B} \mathbb P(I\neq S).$$
Using $\rho_S\le (m-k)e^{-\gamma_{\min}}$ and $\mathbb P(I\neq S)\le (m-k)e^{-\gamma_{\min}/\tau}$, we obtain
\begin{equation}
\label{eq:pathwise-bound}
\|\mathbb{E}_I[\nabla_\theta \Delta L_I]\| \le c(m-k)\left(e^{-\gamma_{\min}}+e^{-\gamma_{\min}/\tau}\right)
\end{equation}
for a constant $c>0$ depending on
$L_\mathcal L,\beta_\mathcal L,C_h,C_r,m,k,\tau$.

\paragraph{Combining Term 1 and Term 2.}
From \eqref{eq:T1} and \eqref{eq:pathwise-bound}, we obtain
$$
\|b_t^{\mathrm{trunc}}\| \le 2\bar L C_h C_\pi \mathbb E_x\mathbb{E}_I[\rho_I] + c(m-k)\left(e^{-\gamma_{\min}}+e^{-\gamma_{\min}/\tau}\right).
$$
Using
$$\mathbb{E}_I[\rho_I] \le (m-k)e^{-\gamma_{\min}} + (m-k)e^{-\gamma_{\min}/\tau},$$
we conclude
$$\|b_t^{\mathrm{trunc}}\| \le c_{\theta}(m-k)\left(e^{-\gamma_{\min}}+e^{-\gamma_{\min}/\tau}\right), \quad c_{\theta}:=2\bar L C_h C_\pi + c.$$
\end{proof}

\begin{lemma}[$\phi$-gradient truncation bias] 
$$
\|\nabla_\phi J(\theta,\phi)-\nabla_\phi F(\theta,\phi)\| \le c_{\phi}(m-k)\left( e^{-\gamma_{\min}} + e^{-\gamma_{\min}/\tau} \right),$$
where $c_{\phi}$ depends on
$ \bar L ,\bar\beta ,C_h,C_{h'},m,k,\tau$.
\end{lemma}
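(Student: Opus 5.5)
We follow the template of the $\theta$-gradient truncation bias lemma, but the argument is simpler here because $\pi_\theta$ does not depend on $\phi$. The score-function term therefore disappears, and only a pathwise difference remains. Writing $\Delta L_I := \mathcal{L}(y,\hat f) - \mathcal{L}(y,f)$ with $f=\sum_i p_i h_{i,\phi}$ and $\hat f = \sum_{i\in I}\frac{p_i}{1-\rho_I}h_{i,\phi}$ as in Lemma~\ref{lem:jacobian-mismatch}, we have
\begin{align*}
\nabla_\phi J - \nabla_\phi F = \frac{1}{|\mathcal{D}|}\sum_{(x,y)\in\mathcal{D}} \mathbb{E}_{I\sim\pi_\theta}\bigl[\nabla_\phi \Delta L_I\bigr].
\end{align*}
This uses that $\nabla_\phi F$ does not depend on $I$ and that $\sum_I \pi_\theta(I|x)=1$. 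We then decompose as before:
\begin{align*}
\nabla_\phi \Delta L_I = \underbrace{[\nabla_{\hat y}\mathcal{L}(\hat f)-\nabla_{\hat y}\mathcal{L}(f)]}_{A}\cdot \nabla_\phi \hat f + \nabla_{\hat y}\mathcal{L}(f)\cdot \underbrace{[\nabla_\phi \hat f - \nabla_\phi f]}_{B_\phi}.
\end{align*}

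\textbf{Term $A$.} By $\bar\beta$-smoothness of $\mathcal{L}$, $\|A\|\le \bar\beta\|\hat f - f\|$. Since
\begin{align*}
\hat f - f = \frac{\rho_I}{1-\rho_I}\sum_{i\in I}p_i h_i - \sum_{i\notin I}p_i h_i,
\end{align*}
we get $\|\hat f-f\|\le 2C_h\rho_I$. Also $\|\nabla_\phi\hat f\| \le \sum_{i\in I}\frac{p_i}{1-\rho_I}C_{h'} = C_{h'}$ by Lemma~\ref{lem:derived-constants}(b). Hence $\|\mathbb{E}_I[A\nabla_\phi\hat f]\|\le 2\bar\beta C_hC_{h'}\mathbb{E}_I[\rho_I]$.

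\textbf{Term $B_\phi$.} This is the analogue of the Jacobian mismatch lemma and is easier, because the weights $p_i$ do not depend on $\phi$. We have
\begin{align*}
\nabla_\phi\hat f - \nabla_\phi f = \frac{\rho_I}{1-\rho_I}\sum_{i\in I}p_i\nabla_\phi h_i - \sum_{i\notin I}p_i\nabla_\phi h_i,
\end{align*}
so $\|B_\phi\|\le 2C_{h'}\rho_I$ for every $I$. No restriction $\rho_I\le 1/2$ is needed, and there is no case split on $I=S$. Thus $\|\mathbb{E}_I[\nabla_{\hat y}\mathcal{L}(f)B_\phi]\|\le 2\bar L C_{h'}\mathbb{E}_I[\rho_I]$.

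Combining the two terms gives
\begin{align*}
\|\nabla_\phi J-\nabla_\phi F\|\le 2(\bar\beta C_hC_{h'}+\bar L C_{h'})\,\mathbb{E}_x\mathbb{E}_I[\rho_I].
\end{align*}
We finish with the bound already used in the $\theta$ lemma,
\begin{align*}
\mathbb{E}_I[\rho_I]\le (m-k)e^{-\gamma_{\min}}+(m-k)e^{-\gamma_{\min}/\tau},
\end{align*}
which follows from Assumption~\ref{ass:compact-domain}. On $I=S$ (the top-$k$ set), $\rho_S\le (m-k)e^{-\gamma_{\min}}$. On $I\neq S$ we use $\rho_I\le 1$ together with $\mathbb{P}(I\neq S)\le (m-k)e^{-\gamma_{\min}/\tau}$. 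This yields the claim with $c_\phi := 2(\bar\beta C_h + \bar L)C_{h'}$.

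The main obstacle is this last step: justifying $\mathbb{E}_I[\rho_I]$ and $\mathbb{P}(I\neq S)$ rigorously from the logit gap. The probability that sequential sampling ever picks a non-top-$k$ item is at most a union bound over the $k$ draws and the $m-k$ bad items, each with ratio $e^{-\gamma_{\min}/\tau}$. This introduces a factor of $k$ that we absorb into $c_\phi$, consistent with the statement that $c_\phi$ depends on $m,k,\tau$. We also reconcile the temperature in $\rho_S$, which is $e^{-\gamma_{\min}/\tau}$ when $p$ is the tempered softmax; bounding by the sum of both exponentials covers either convention. Everything else reuses the constants of Lemma~\ref{lem:derived-constants}.
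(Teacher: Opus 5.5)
Your proposal is correct and matches the paper's proof essentially step for step. It uses the same chain-rule split into the $\nabla_{\hat y}\mathcal{L}$-difference term and the Jacobian-difference term, the same bounds $2\bar\beta C_hC_{h'}\rho_I$ and $2\bar L C_{h'}\rho_I$, and the same final appeal to $\mathbb{E}_I[\rho_I]\le (m-k)(e^{-\gamma_{\min}}+e^{-\gamma_{\min}/\tau})$. Your justification of that last bound, which the paper only asserts, is a worthwhile addition: the sequential-sampling union bound does cost a factor of at most $k$ (in fact $\sum_{t=1}^k 1/t$), and the paper's unqualified $\mathbb{P}(I\neq S)\le (m-k)e^{-\gamma_{\min}/\tau}$ can fail for $k\ge 2$. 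Your final constant should therefore read $2k(\bar\beta C_h+\bar L)C_{h'}$ rather than the value you first wrote, which is still admissible since $c_\phi$ may depend on $k$.
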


\begin{proof}

Since $\pi_\theta(I| x)$ does not depend on $\phi$,
$$
\nabla_\phi J(\theta,\phi)-\nabla_\phi F(\theta,\phi)=\E_x\E_{I\sim\pi_\theta}[\nabla_\phi \mathcal L(y,\hat f )-\nabla_\phi \mathcal L(y,f)].
$$
By the chain rule,
$ \nabla_\phi \mathcal L(y,\hat f ) - \nabla_\phi \mathcal L(y,f) = A_I+B_I$, where
$$ A_I := \bigl[ \nabla_{\hat y}\mathcal L(y,\hat f ) - \nabla_{\hat y}\mathcal L(y,f) \bigr] \nabla_\phi \hat f, \quad B_I := \nabla_{\hat y}\mathcal L(y,f)
\bigl[\nabla_\phi \hat f - \nabla_\phi f\bigr].$$

By the smoothness of $\mathcal L$, $ \|\nabla_{\hat y}\mathcal L(y,\hat f ) - \nabla_{\hat y}\mathcal L(y,f)\| \le \bar\beta \|\hat f -f\|.$ Also, $ \|\hat f -f\| \le 2C_h\rho_I.$ and 
$ \nabla_\phi \hat f  = \sum_{i\in I} \frac{p_i}{1- \rho_I} \nabla_\phi h_{i,\phi}(x),$
so $ \|\nabla_\phi \hat f \| \le C_{h'}$. Therefore,
$\|A_I\| \le 2\bar\beta C_hC_{h'} \rho_I$. Since
$$ \nabla_\phi f = \sum_{i=1}^m p_i\nabla_\phi h_{i,\phi}(x), \quad \nabla_\phi \hat f  = \sum_{i\in I} \frac{p_i}{1-\rho_I} \nabla_\phi h_{i,\phi}(x),$$
we have
$$
\nabla_\phi \hat f -\nabla_\phi f
=\frac{\rho_I}{1-\rho_I}\sum_{i\in I}p_i\nabla_\phi h_{i,\phi}(x)-\sum_{i\notin I}p_i\nabla_\phi h_{i,\phi}(x).$$
Using $\|\nabla_\phi h_{i,\phi}(x)\|\le C_{h'}$, we get
$$\left\|\frac{\rho_I}{1-\rho_I}\sum_{i\in I}p_i\nabla_\phi h_{i,\phi}(x)\right\|\le\rho_I C_{h'}, \quad \left\|\sum_{i\notin I}p_i\nabla_\phi h_{i,\phi}(x)\right\|\le\rho_I C_{h'}.$$
Therefore, $\|\nabla_\phi \hat f -\nabla_\phi f\|\le2C_{h'}\rho_I.$ Since $\|\nabla_{\hat y}\mathcal L(y,f)\|\le  \bar L ,$ we have
$\|B_I\|\le2 \bar L C_{h'}\rho_I.$

Combining the two bounds,
$$\|\nabla_\phi \mathcal L(y,\hat f )-\nabla_\phi \mathcal L(y,f)\|\le 2C_{h'}(\bar\beta C_h+ \bar L )\rho_I.$$
Taking expectation over $I$ and $x$,
$$\|\nabla_\phi J(\theta,\phi)-\nabla_\phi F(\theta,\phi)\| \le 2C_{h'}(\bar\beta C_h+ \bar L ) \E_x\E_I[\rho_I].$$
Using $\E_I[\rho_I]\le(m-k)e^{-\gamma_{\min}}+(m-k)e^{-\gamma_{\min}/\tau},$ we obtain
$$\|\nabla_\phi J(\theta,\phi)-\nabla_\phi F(\theta,\phi)\|\le c_{\phi}(m-k)\left(e^{-\gamma_{\min}}+e^{-\gamma_{\min}/\tau}\right), \quad c_{\phi}:=2C_{h'}(\bar\beta C_h+ \bar L ).$$

\end{proof}

\begin{lemma}[Truncation bias]
\label{lem:truncation-bias}
We combine the $\theta$-gradient and $\phi$-gradient truncation bias and have
$$ \|\nabla_w J - \nabla_w F\|^2 = \| \nabla_{\theta}J - \nabla_{\theta}F \|^2 + \| \nabla_{\phi}J - \nabla_{\phi}F \|^2$$
and thus
$$ \epsilon := \sqrt{\epsilon_{\theta}^2 + \epsilon_{\phi}^2} \leq \epsilon_{\theta} + \epsilon_{\phi} $$
\end{lemma}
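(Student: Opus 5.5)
The argument has two parts. The first is the exact orthogonal decomposition of the full gradient $\nabla_w$ into its $\theta$- and $\phi$-blocks. The second is an elementary norm inequality.

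Since $w=(\theta,\phi)$ is a concatenation of two disjoint parameter blocks, the gradient stacks as $\nabla_w G=(\nabla_\theta G,\nabla_\phi G)$ for any differentiable $G$. Applying this to $G=J-F$ by linearity gives $\nabla_w J-\nabla_w F=(\nabla_\theta J-\nabla_\theta F,\;\nabla_\phi J-\nabla_\phi F)$. Both $J$ and $F$ are differentiable on $\Omega$ by the smoothness lemma.

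The Euclidean norm of a concatenated vector is the sum of the squared norms of its blocks, so
\begin{align*}
\|\nabla_w J-\nabla_w F\|^2=\|\nabla_\theta J-\nabla_\theta F\|^2+\|\nabla_\phi J-\nabla_\phi F\|^2 .
\end{align*}
This is the first displayed identity.

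Next I would invoke the two preceding lemmas, the $\theta$-gradient truncation bias and the $\phi$-gradient truncation bias. They give $\|\nabla_\theta J-\nabla_\theta F\|\le\epsilon_\theta$ and $\|\nabla_\phi J-\nabla_\phi F\|\le\epsilon_\phi$ uniformly along the trajectory in $\Omega$, where $\epsilon_\phi:=c_\phi(m-k)(e^{-\gamma_{\min}}+e^{-\gamma_{\min}/\tau})$. Substituting yields $\|\nabla_w J-\nabla_w F\|\le\sqrt{\epsilon_\theta^2+\epsilon_\phi^2}=:\epsilon$. This $\epsilon$ is exactly the constant that enters the bias term $b_t^\lambda$ in the convergence bound for $F$.

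Finally, $\sqrt{a^2+b^2}\le a+b$ for $a,b\ge0$, since squaring the right side gives $a^2+b^2+2ab\ge a^2+b^2$. Hence $\epsilon\le\epsilon_\theta+\epsilon_\phi=(c_\theta+c_\phi)(m-k)(e^{-\gamma_{\min}}+e^{-\gamma_{\min}/\tau})$.

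There is no real obstacle here. The only point needing care is that both earlier bounds hold for every $w\in\Omega$ (rather than only in expectation at a particular iterate), so that the single constant $\epsilon$ can be used uniformly in the master descent lemma.
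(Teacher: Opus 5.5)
Your proposal is correct and follows the same route the paper takes: the Pythagorean identity for the concatenated $(\theta,\phi)$ blocks, then the two preceding block-wise truncation-bias lemmas to obtain $\|\nabla_w J-\nabla_w F\|\le\sqrt{\epsilon_\theta^2+\epsilon_\phi^2}$, and finally $\sqrt{a^2+b^2}\le a+b$ for $a,b\ge 0$. The paper gives no separate proof beyond the statement, so your write-up just spells out these immediate steps.
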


With the above lemmas derived, we prove Theorem \ref{thm:convergence} as follows.

\begin{proof}

Recall that relative to the true objective $F$, $g_t^{\lambda}=\nabla_w F(w_t)+ b_t^{\lambda}+\xi_t$, where
$$b_t^{\lambda}=\underbrace{\nabla_w J(w_t)-\nabla_w F(w_t)}_{\text{truncation bias}}+\underbrace{\bigl(-(1-\lambda)P_\theta(w_t),0\bigr)}_{\text{policy gradient suppression bias}}.$$

Define $\epsilon := \sup_t
\|\nabla_w J(w_t)-\nabla_w F(w_t)\| $ and $R_{\max} := \sup_t\|P_\theta(w_t)\|$,
then $ \|b_t^{\lambda}\| \le \epsilon+(1-\lambda)R_{\max}.$ With $\epsilon$ derived in Lemma~\ref{lem:truncation-bias} and $R_{\max}$ derived in Lemma \ref{lem:derived-constants}, we apply the master descent lemma with with $\phi=F$, $\sigma^2=2\sigma_S^2+2\lambda^2\sigma_P^2$, and $ \|b_t^{\lambda}\|^2
\leq \left(\epsilon+(1-\lambda)R_{\max}\right)^2 $. With step size $\eta=\frac{1}{\beta\sqrt{T}}$ and $ \eta\leq \frac{1}{2\beta}$ (i.e. $T\geq 4$),
we have
$$\frac{1}{T}\sum_{t=1}^T\mathbb{E}\|\nabla F(w_t)\|^2 \leq \frac{2\beta(F(w_0)-F^\star)+2\sigma_S^2+2\lambda^2\sigma_P^2}{\sqrt{T}} + 3\left(\epsilon+(1-\lambda)R_{\max}\right)^2 .$$
For any fixed $T$, the bound is minimized at
\begin{align*}
    \lambda^\star(T) &= \arg\min_{\lambda\in[0,1]} \left[ \frac{2\lambda^2\sigma_P^2}{\sqrt{T}} + 3(\epsilon+(1-\lambda)R_{\max})^2 \right] \\
                    & = \min\left\{1,~ \frac{3R_{\max}(\epsilon+R_{\max})} {2\sigma_P^2/\sqrt{T}+3R_{\max}^2}\right\}.
\end{align*}
\end{proof}

The gradient of our objective provides a smooth bias-variance tradeoff controlled by $\lambda$: Increasing $\lambda$ reduces the bias $ b_t^{\lambda}=\nabla_w J(w_t)-\nabla_w F(w_t)-\bigl((1-\lambda)P_\theta(w_t),0\bigr)$ and therefore lowers the asymptotic bias floor in the stationarity bound; At the same time, increasing $\lambda$ also increases the variance of the estimator through the policy gradient term, which enters the bound as $ 2\lambda^2\sigma_P^2/\sqrt{T}$. 
\section{Experiments} \label{sec:app:exp}

In the following, we detail the data, baselines, hyperparameters, and ablation results for each application. In Figure \ref{fig:convergence}, we visualize the convergence of all methods in different applications. Our experiments are conducted on a cluster containing A40, L40S, and A100 GPUs. 

\subsection{Regression Task} \label{sec:app:toy}

\paragraph{Data generation.} Recall that $\{v_1,\ldots,v_m\}$ are $m$ candidate vectors of dimension $d$, where we take $d=128,k=3$, and $m\in\{64,128,256,512\}$. We sample each coordinate of $x\in\mathbb{R}^d$ following $\mathcal{N}(0,1)$, each coordinate of $v$ following $\mathcal{N}(0,1/d)$, and each element of $\theta^{\star}$ following $\mathcal{N}(0,1/d)$. Denote $S$ as the set that contains the top-3 elements. Then we have
$$y = \sum_{i\in S} p_i v_i, \quad v_i\sim \mathcal{N}(0,I_h/d).$$
Conditioned on the top-3 weights $p$, we use $\mathbb{E}[\|v_i\|^2]=1$ and obtain
$$\mathbb{E}[\|y\|^2|p] = \sum_{i\in S} p_i^2 \mathbb{E}[\|v_i\|^2] = \sum_{i\in S} p_i^2.$$
Therefore,
$$\mathbb{E}[\|y\|^2]  = \mathbb{E}\left[\sum_{i\in S} p_i^2\right] =  \frac{1}{3} + \frac{1}{9 \log m} + o \left(\frac{1}{\log m}\right)$$
where the second equality is due to Lemma \ref{lem:top3}. That is, we show that as $m$ increases, the expected squared norm of $y$ only slightly decreases. Therefore, our observation that MSE loss increases as $m$ increases indicates the increasing optimization difficulty.

\begin{lemma}[Top-3 softmax weight concentration] \label{lem:top3}
Let $z_1,\dots,z_ m$ be i.i.d. sampled from $\mathcal{N}(0,1)$ and denote the top-3 as $z_{(1)} \ge z_{(2)} \ge z_{(3)}.$ Define the top-3 softmax weights
$$
p_i = \frac{e^{z_{(i)}}}{\sum_{j=1}^3 e^{z_{(j)}}}, \quad i=1,2,3.$$
Then as $ m \to \infty$,
$$ \mathbb{E} \left[\sum_{i=1}^3 p_i^2\right] = \frac{1}{3} + \frac{1}{9 \log  m} + o \left(\frac{1}{\log  m}\right).$$
\end{lemma}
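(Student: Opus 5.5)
The plan is to reduce the statement to second moments of the two top spacings of the Gaussian sample, and then use extreme-value asymptotics for those spacings.

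\textbf{Step 1: reduction to the spread.} Set $\bar z=\frac13\sum_{j\le 3}z_{(j)}$ and $u_i=z_{(i)}-\bar z$, so that $\sum_i u_i=0$. Let $S:=\sum_{i=1}^3 u_i^2$. Since $\sum_i p_i^2=\sum_i e^{2u_i}/(\sum_j e^{u_j})^2$ is invariant under shifts, I would Taylor expand around $u=0$. The linear terms cancel because $\sum u_i=0$, which gives
\begin{align*}
\sum_{i=1}^3 p_i^2=\frac{3+2S+O(|u|^3)}{\bigl(3+\tfrac12 S+O(|u|^3)\bigr)^2}=\frac13+\frac{S}{9}+R,\qquad |R|\le C\min\bigl(S^{3/2},1\bigr).
\end{align*}
The global bound $|R|\le C$ holds because $\sum p_i^2\in[\frac13,1]$. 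It therefore suffices to show
\begin{align*}
\mathbb E[S]=\frac{1}{\log m}+o\Bigl(\frac{1}{\log m}\Bigr)\quad\text{and}\quad \mathbb E\bigl[S^{3/2}\bigr]=O\bigl((\log m)^{-3/2}\bigr).
\end{align*}

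\textbf{Step 2: express $S$ in the spacings.} Let $D_1=z_{(1)}-z_{(2)}$ and $D_2=z_{(2)}-z_{(3)}$. A direct computation gives
\begin{align*}
S=\tfrac23\bigl(D_1^2+D_1D_2+D_2^2\bigr).
\end{align*}
Put $a_m=\sqrt{2\log m}$. By the classical extreme-value result for the Gaussian (Gumbel domain), $a_m(z_{(j)}-b_m)$ converges jointly to $-\log\Gamma_j$, where $\Gamma_j=E_1+\dots+E_j$ are the arrival times of a unit-rate Poisson process. Hence
\begin{align*}
(a_mD_1,a_mD_2)\Rightarrow\Bigl(\log\tfrac{\Gamma_2}{\Gamma_1},\log\tfrac{\Gamma_3}{\Gamma_2}\Bigr)\stackrel{d}{=}\Bigl(E'_1,\tfrac12E'_2\Bigr),
\end{align*}
with $E'_1,E'_2$ i.i.d.\ $\mathrm{Exp}(1)$. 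This is the R\'enyi representation: $j$ times the $j$-th log-spacing is $\mathrm{Exp}(1)$, independently across $j$.

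\textbf{Step 3: the limiting moment.} From the limit law,
\begin{align*}
\mathbb E\bigl[E_1'^2+\tfrac12E'_1E'_2+\tfrac14E_2'^2\bigr]=2+\tfrac12+\tfrac12=3,
\end{align*}
so $a_m^2\,\mathbb E[S]\to\frac23\cdot 3=2$. This gives $\mathbb E[S]=\frac{2}{a_m^2}(1+o(1))=\frac{1}{\log m}+o\bigl(\frac1{\log m}\bigr)$. Combined with Step 1, the coefficient is $\frac19\cdot\frac1{\log m}$, as claimed.

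\textbf{Main obstacle: uniform integrability.} Passing from convergence in distribution to convergence of second moments, and bounding $\mathbb E[(a_mD_j)^3]$, requires a uniform bound $\sup_m \mathbb E[(a_mD_j)^{3+\delta}]<\infty$. I would first try to get this by writing $z_{(j)}=\bar\Phi^{-1}(U_{(j)})$, with $U_{(j)}\stackrel{d}{=}\Gamma_j/\Gamma_{m+1}$ the uniform order statistics. Then
\begin{align*}
D_j=\bar\Phi^{-1}(U_{(j)})-\bar\Phi^{-1}(U_{(j+1)})\le \sup_{[U_{(j)},U_{(j+1)}]}\bigl|(\bar\Phi^{-1})'\bigr|\cdot\bigl(U_{(j+1)}-U_{(j)}\bigr).
\end{align*}
The Mills-ratio bound $(\bar\Phi^{-1})'(u)=1/\varphi(\bar\Phi^{-1}(u))\lesssim 1/\bigl(u\sqrt{2\log(1/u)}\bigr)$ for small $u$ yields $a_mD_j\lesssim (\Gamma_{j+1}-\Gamma_j)/\Gamma_j$ on the event $\{\Gamma_3\le \Gamma_{m+1}/m^{1/2}\}$. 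That ratio has all moments finite for $j\ge 2$, but for $j=1$ it involves $1/\Gamma_1$.

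For $j=1$ I would instead use the exact logarithmic form $\log(\Gamma_2/\Gamma_1)$, which has exponential tails, together with the fact that the Mills ratio makes $\bar\Phi^{-1}$ behave like $a_m^{-1}\log(1/u)$ up to $1+o(1)$ factors uniformly for $u\ge m^{-2}$. The complementary events, $U_{(1)}<m^{-2}$ or $\Gamma_3>\Gamma_{m+1}m^{-1/2}$, have probability $O(m^{-1})$. On these events $S\le C\max_i z_i^2$, whose expectation is $O(\log m)$, so they contribute $o(1/\log m)$.
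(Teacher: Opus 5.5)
Your route is the paper's: expand $\sum_i p_i^2$ to second order about equal weights, reduce to the spread $S$ of the top three values, and use Gaussian extreme-value spacings at scale $1/\sqrt{2\log m}$. The paper only asserts the constant $\mathbb{E}[\sum_i(H_i-\bar H)^2]=2$ and takes expectations of $o_p$ statements without justification. Your R\'enyi computation $a_m^2\,\mathbb{E}[S]\to 2$ confirms that constant, and your uniform-integrability step supplies the justification the paper omits.

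Two details need repair.

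First, the bound $|R|\le C\min(S^{3/2},1)$ is false. Since $R=\sum_i p_i^2-\frac{1}{3}-\frac{S}{9}$ behaves like $-S/9$ for large $S$, it is unbounded. What does hold is $|R|\le C S^{3/2}$ for all $S$: use Taylor for $S\le 1$, and $|R|\le \frac{2}{3}+\frac{S}{9}$ otherwise. That is all you actually use.

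Second, the mean-value step fails at $j=2$. Since $\Gamma_j\sim\mathrm{Gamma}(j,1)$ has $\mathbb{E}[\Gamma_j^{-p}]<\infty$ only for $p<j$, the ratio $(\Gamma_{j+1}-\Gamma_j)/\Gamma_j$ has finite $p$-th moment only for $p<j$. So your bound controls neither the second nor the $(3+\delta)$-th moment of $a_mD_2$.

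The logarithmic route you reserve for $j=1$ handles both indices. It can also be made exact, with no $1+o(1)$ comparison (such a comparison in fact holds only up to constant factors on $[m^{-2},m^{-1/2}]$). Use the Mills inequality $\varphi(x)\ge x\,\bar\Phi(x)$ for $x>0$, together with $\bar\Phi^{-1}(u)\ge z_{(j+1)}>0$ on the range of integration:
\[
D_j=\int_{U_{(j)}}^{U_{(j+1)}}\frac{du}{\varphi\bigl(\bar\Phi^{-1}(u)\bigr)}\le\frac{1}{z_{(j+1)}}\int_{U_{(j)}}^{U_{(j+1)}}\frac{du}{u}=\frac{1}{z_{(j+1)}}\log\frac{\Gamma_{j+1}}{\Gamma_j}.
\]
Here $\log(\Gamma_{j+1}/\Gamma_j)\stackrel{d}{=}\mathrm{Exp}(1)/j$ exactly, for every $m$. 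Hence on $\{z_{(3)}\ge a_m/2\}$ you get $a_mD_j\le 2\log(\Gamma_{j+1}/\Gamma_j)$ for $j=1,2$, with all moments bounded uniformly in $m$.

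The complement $B$ of this event has probability $o(m^{-1})$. On $B$, use
\[
\mathbb{E}[S^q\mathbf{1}_B]\le\bigl(\mathbb{E}[S^{2q}]\,\mathbb{P}(B)\bigr)^{1/2}=O\bigl((\log m)^q m^{-1/2}\bigr).
\]
Do not multiply $\mathbb{E}[\max_i z_i^2]$ by $\mathbb{P}(B)$: that is not valid, since $S$ and $B$ are dependent.
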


\begin{proof}

Define gaps
$\Delta_2 = z_{(1)} - z_{(2)}$ and $\Delta_3 = z_{(1)} - z_{(3)}$. By shift invariance of softmax,
$$p_1 = \frac{1}{1 + e^{-\Delta_2} + e^{-\Delta_3}}, \quad
p_2 = \frac{e^{-\Delta_2}}{Z}, \quad
p_3 = \frac{e^{-\Delta_3}}{Z},$$
where $Z = 1 + e^{-\Delta_2} + e^{-\Delta_3}$. Denote $\Phi$ as the standard normal CDF, and let
$$b_m = \Phi^{-1} \left(1 - \frac{1}{m}\right) \sim \sqrt{2\log m},$$
then the classical extreme value theory \citep{evt1,evt2} gives $z_{(1)} = b_m + o_p(1)$, and the spacings satisfy
$$\Delta_2 = \frac{H_2}{b_m} + o_p \left(\frac{1}{b_m}\right), \quad \Delta_3 = \frac{H_3}{b_m} + o_p \left(\frac{1}{b_m}\right),$$
where $(H_2,H_3)=O_p(1)$. Therefore, $\Delta_i = O_p(1/b_m)$. Let $\delta_1=0$, $\delta_2=\Delta_2$, $\delta_3=\Delta_3$. We have
$$ p_i = \frac{e^{-\delta_i}}{\sum_{j=1}^3 e^{-\delta_j}}. $$

For small $\delta_i$, expand $ e^{-\delta_i} = 1 - \delta_i + \frac{\delta_i^2}{2} + O(\delta_i^3)$. Let $ \bar{\delta} = \frac{1}{3} \sum_{i=1}^3 \delta_i$, then a standard expansion gives
$$ p_i = \frac{1}{3} - \frac{1}{3}(\delta_i - \bar{\delta}) + O(\|\delta\|^2). $$
Using the expansion,
$$ p_i^2 = \frac{1}{9} - \frac{2}{9}(\delta_i - \bar{\delta}) + \frac{1}{9}(\delta_i - \bar{\delta})^2 + O(\|\delta\|^3). $$

Summing over $i$, we have
$$ \sum_{i=1}^3 p_i^2 = \frac{1}{3} + \frac{1}{9}\sum_{i=1}^3 (\delta_i - \bar{\delta})^2 + O(\|\delta\|^3).$$

By $\delta_i = O_p(1/b_m)$, we have
$$ \sum_{i=1}^3 (\delta_i - \bar{\delta})^2 = \frac{1}{b_m^2} \sum_{i=1}^3 (H_i - \bar{H})^2 + o_p \left(\frac{1}{b_m^2}\right).$$

Therefore, taking expectation yields
$$\mathbb{E} \left[\sum_{i=1}^3 p_i^2\right] = \frac{1}{3}+ \frac{1}{9 b_m^2} \mathbb{E} \left[\sum_{i=1}^3 (H_i - \bar{H})^2\right] + o \left(\frac{1}{b_m^2}\right).$$

For Gaussian extremes, we have
$$\mathbb{E} \left[\sum_{i=1}^3 (H_i - \bar{H})^2\right] =2.$$
Therefore,
$$ \mathbb{E} \left[\sum_{i=1}^3 p_i^2\right] = \frac{1}{3} + \frac{2}{9 b_m^2} + o \left(\frac{1}{b_m^2}\right) = \frac{1}{3} + \frac{1}{9 \log m} + o \left(\frac{1}{\log m}\right)$$
due to $b_m^2 \sim 2\log m$.
\end{proof}


\begin{table}[!h]
\setlength{\tabcolsep}{2.5pt}
\small\centering
\caption{Performance on the regression task.} 
\label{tab:regression_result}
\begin{tabular}{lcccccccc}
\toprule
\multirow{2}{*}{Method} & \multicolumn{4}{c}{MSE Loss ($\times$1e-3, $\downarrow$)} &  \multicolumn{4}{c}{Top-1 Selection Accuracy ($\%,\uparrow$)} \\
 & $m=64$ & $m=128$ & $m=256$ & $m=512$ & $m=64$ & $m=128$ & $m=256$ & $m=512$ \\
\midrule
Deterministic & 1.01 & 2.02 & 5.73 & 5.99 & 89.0 & 66.7 & 4.0 & 0.6 \\
Gaussian & 1.39 & 2.15 & 5.14 & 6.22 & 80.8 & 64.5 & 14.3 & 1.0 \\
DefaultMoE & 1.20 & 1.44 & 1.87 & 3.29 & 80.1 & 68.6 & 53.5 & 41.4 \\
SparseMixer & 1.39 & 1.92 & 3.11 & 4.30 & 86.6 & 73.9 & 46.8 & 27.0 \\
Supervised-only & 0.99 & 1.43 & 1.91 & 4.67 & 89.9 & 80.0 & 61.2 & 11.9 \\
\midrule
Ours & \textbf{0.96} & \textbf{1.17} & \textbf{1.51} & \textbf{2.86} & \textbf{91.0} & \textbf{84.5} & \textbf{66.8} & \textbf{47.6} \\
\bottomrule
\end{tabular}
\end{table}

\begin{figure}[h!]
    \centering
    \includegraphics[width=\linewidth, valign=t]{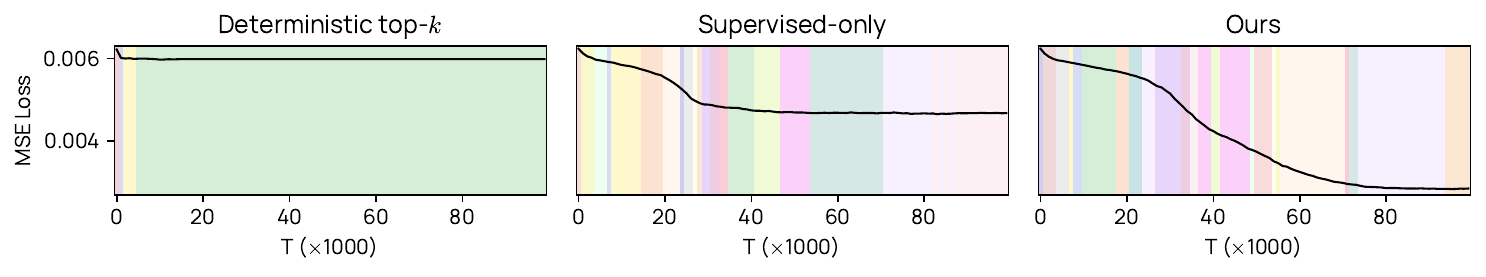}
    \caption{Decrease of test loss and the evolution of selection decisions (on two test samples) during optimization from $\theta_0$ to $\theta_{T}$ under different methods. The line chart represents loss decrease, and each background color represents one selection decision on two test samples. We observe that the selection module explores more selection decisions with our method.}
    \label{fig:1D_landscape}
\end{figure}

\begin{figure}[h!]
    \centering
    \includegraphics[width=\linewidth, valign=t]{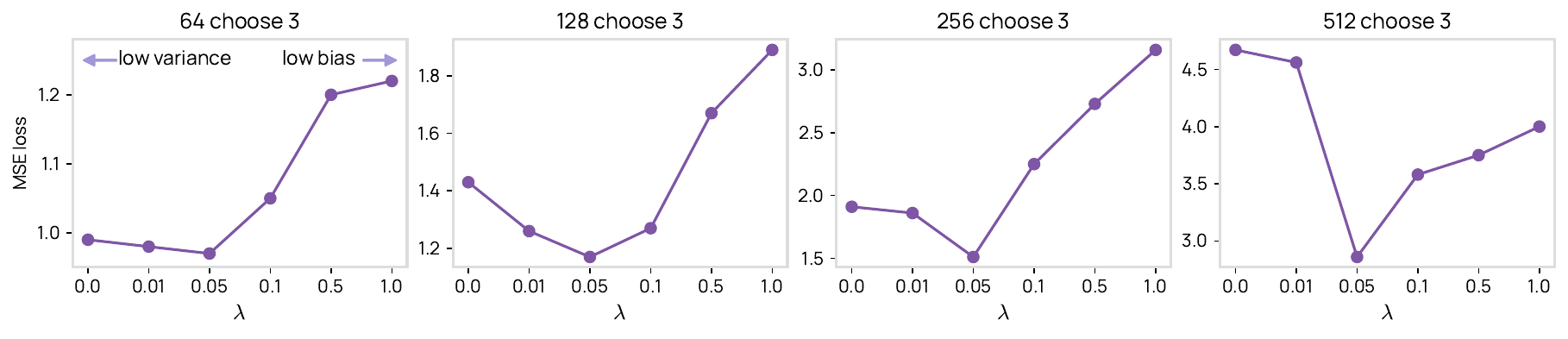}
    \caption{Bias/variance tradeoff adjusted by $\lambda$, showing that the optimal tradeoff is obtained at $\lambda\in(0,1)$ under different $m$'s.}
    \label{fig:regression_ablation}
\end{figure}


\paragraph{Hyperparameters.} For all the methods, we tune the learning rate in the grid \{1e-5, 5e-5, 1e-4, 5e-4, 1e-3\} and use full batch training. For all the methods that involve sampling from the multinomial distribution defined by the softmax of logits, we tune the temperature $\tau\in\{0.02, 0.05, 0.1, 0.5, 1.0, 2.0\}$. For our method, we tune $\lambda\in$\{0.01, 0.05, 0.1, 0.5\}. For the DefaultMoE baseline, we tune $\beta\in$\{0.6, 0.7, 0.8, 0.9, 0.99\}.

\subsection{Retrieval-Augmented Generative Models} \label{sec:app:rag}


\paragraph{Prediction module.} Note that  $p(y|x)$ can be decomposed in terms of latent variables, such as an item $i$ or a subset $I$: $ p(y|x) = \sum_{i} p(y|x,i) \cdot p(i|x) = \sum_{I} p( y|x,I) \cdot p(I|x)$. We denote $ w_i(x)=e^{s(i|x)}$ and $p(i|x)=\frac{w_i(x)}{\sum_{j=1}^m w_j(x)}$. Since we generate an ordered sequence $L=(L_1, \ldots, L_k)$ without replacement and then discard the order and obtain $I=\{L_1,\ldots,L_k\}$, with
$$p(L|x)=\prod_{t=1}^k \frac{w_{L_t}(x)}{\sum_{l\notin \{L_1,\ldots,L_{t-1}\}}w_l(x)}, \quad p(I|x)=\sum_{L:\{L_1,\ldots,L_k\}=I}p(L|x).$$
In the following, we show that taking
$$p(y|x,I) = \sum_{i\in I} a_i(x,I) \cdot p(y|x,i), \quad a_i(x,I) = \frac{\sum_{\substack{L:\{L_1,\ldots,L_k\}=I \\ L_1=i}} p(L|x)}{p(I|x)} $$
makes our objective an upper bound of the true objective. First, the weights $a_i(x,I)$ form a probability distribution within every set since $\sum_{i\in I} a_i(x,I) = 1$. Additionally,
\begin{align*}
    \sum_{I: i\in I} p(I|x) a_i(x,I) & = \sum_{I: i\in I} \sum_{\substack{L:\{L_1,\ldots,L_k\}=I \\ L_1=i}} p(L|x) \\
    & = \sum_{I: i\in I} \mathrm{Pr}(I, L_1=i|x) \\
    & = \mathrm{Pr}(L_1=i|x) \\
    & = p(i|x),
\end{align*}
because the first draw follows our original softmax distribution. Therefore,
\begin{align*}
    \sum_{I} p(I|x) p( y|x,I) &= \sum_I p(I|x) \sum_{i\in I} a_i(x,I) p(y|x,i) \\
    &= \sum_I \sum_{i\in I} p(I|x) a_i(x,I) p(y|x,i) \\
    & = \sum_i p(i|x) p(y|x,i) \\
    & = p(y|x).
\end{align*}
That is, $a_i(x,I), i\in I$ are the correct aggregation weights for $p(y|x,i)$ so that $ p(y|x) = \sum_{i} p(y|x,i) \cdot p(i|x) = \sum_{I} p( y|x,I) \cdot p(I|x)$ is valid.

Next, with the model parameters $\theta$ and $\phi$, denote $\pi_{\theta}(I|x)$ as the probability of sampling the subset $I$ given $x$, and denote $f_{\theta,\phi}(I,x)$ as the probability of outputting $y$ given $x$ and selecting subset $I$. By Jensen's inequality, since $\mathcal{L} = -\log(\cdot)$ which is convex, then for each sample $x$, we have 
\begin{align*}
\mathcal{L} \left(y, \sum_{i=1}^m p_{\theta}(i|x) \cdot  p(y|i,x) \right) & = \mathcal{L} \left(y, \sum_{I} f_{\theta,\phi}(I,x) \cdot \pi_{\theta}(I|x) \right) \\ & \overset{\text{Jensen}}{\leq} \sum_{I} \mathcal{L} (y, f_{\theta,\phi}(I,x)) \cdot \pi_{\theta}(I|x) \\
& = \mathbb{E}_{I\sim\pi_{\theta}} [\mathcal{L} (y, x,I;\theta,\phi) ].
\end{align*}

The left-most expression is the true objective, and the right-most is our proposed objective. Therefore, our objective upper-bounds the true objective when (1) $\mathcal{L}$ is convex and (2) the model output is probability $p(y|I,x)$.

\paragraph{Baselines.} In our RAG experiments, we compare with the following classic baselines. One classic objective to train a reranker \citep{in-context,lewis2020retrieval,realm} uses
\begin{align}
     -\log p(y|x) \approx -\log \left(\sum_{d\in\text{top-}k} p_{\theta}(d|x) \cdot p_{\text{LM}}(y|d,x) \right) \label{eq:classic_rag}\\ 
     \text{where} \quad p_{\theta}(d|x) = \frac{\exp(s_{\theta}(d|x)/\tau)}{\sum_{d'\in\text{top-}k}\exp(s_{\theta}(d'|x)/\tau)}.
\end{align}
Furthermore, REPLUG \citep{replug} proposes minimizing
\begin{align}
    \text{KL} (p_{\theta}(d|x) \Vert q(d|x,y)) \label{eq:replug}\quad  \text{where}
\end{align}
\begin{align}
    p_{\theta}(d|x) = \frac{\exp(s_{\theta}(d|x)/\tau_1)}{\sum_{d'\in \text{top-}k} \exp(s_{\theta}(d'|x)/\tau_1)} \quad \text{and} \quad q(d|x,y) = \frac{\exp(p_{\text{LM}}(y|d,x)/\tau_2)}{\sum_{d'\in \text{top-}k} \exp(p_{\text{LM}}(y|d',x)/\tau_2)}.
\end{align}
In addition, \citet{basu2024statistical} proposes an objective that also takes expectation over the selected documents. However, our objective models a set’s distribution, and their objective focuses on an item’s distribution. They propose two gradient estimators of their objective, and we compare with the superior one, which minimizes the per-sample loss
\begin{align}
    \sum_{d\in S} [- \log p_{\text{LM}}(y|x,d)-b] \cdot \log p_{\theta}(d|x) \label{eq:statistical_framework}
\end{align}
where $S$ contains $k$ i.i.d. sampled docs following $p_{\theta}(\cdot|x)$.

\paragraph{Hyperparameters.} For all the methods, we tune the learning rate in the grid \{5e-6, 1e-5, 5e-5\} and fix the batch size to 64. For all the methods that involve sampling from the multinomial distribution defined by the softmax of logits, we tune the temperature $\tau\in\{0.02, 0.05, 0.1, 0.5, 1.0, 2.0\}$. For REPLUG, we tune $\tau_1$ in the previously mentioned search range and let $\tau_2=1$. For our method, we tune $\lambda\in$\{1e-3, 1e-4\} in all the experiments. 

\paragraph{Setup.} We train on Wikitext-103, NQ, and TriviaQ all for maximally 3000 steps for all the methods. We experiment with $m\in\{8,16,32\}$ with $k=3$.

\begin{figure}[h!]
    \centering
    \includegraphics[width=0.65\linewidth]{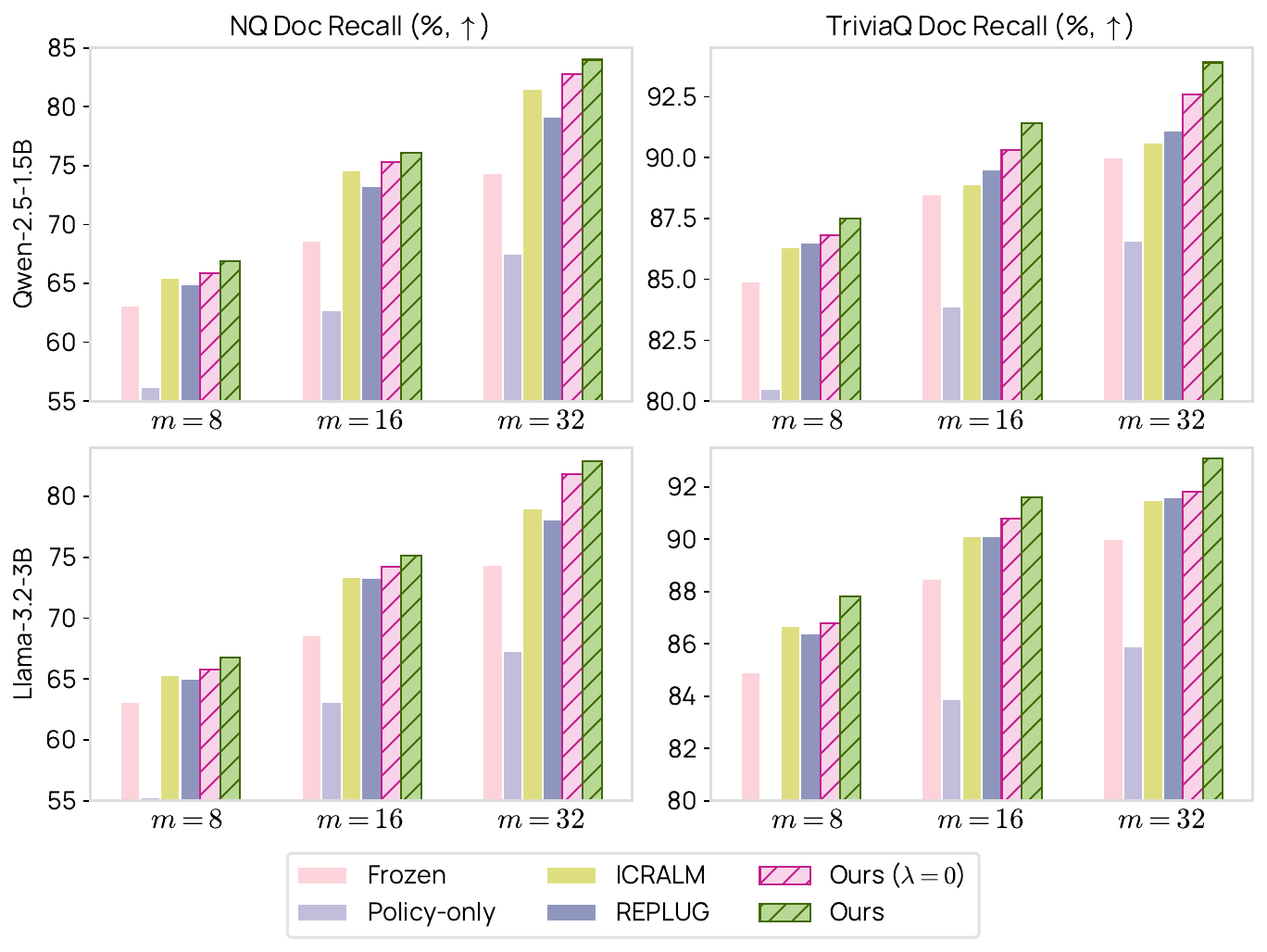}
    \caption{Document recall of rerankers trained via different methods.}
    \label{fig:rag_recall}
\end{figure}

\begin{table}[!h]
\small\centering
\caption{Perplexity on Wikitext-103 ($\downarrow$).} 
\label{tab:rag_wiki}
\begin{tabular}{lcccccc}
\toprule
\multirow{2}{*}{Method} & \multicolumn{3}{c}{Qwen-2.5-1.5B} & \multicolumn{3}{c}{Llama-3.2-3B} \\
 & $m=8$ & $m=16$  & $m=32$ & $m=8$ & $m=16$ & $m=32$   \\
\midrule
Frozen & 9.41 & 10.52 & 11.14 & 7.70 & 8.60 & 9.11 \\
Policy-only & 7.45 & 7.59 & 7.89 & 6.70 & 7.12 & 7.46 \\
ICRALM & 6.61 & 6.99 & 7.64 & 5.37 & 5.56 & 6.09 \\
REPLUG & 6.54 & 6.84 & 7.61 & 5.33 & 5.59 & 5.98 \\
Supervised-only & 6.52 & 6.79 & 7.26 & 5.31 & 5.56 & 5.89 \\
\midrule
Ours & \textbf{6.39} & \textbf{6.68} & \textbf{7.15} & \textbf{5.27} & \textbf{5.47} & \textbf{5.78} \\
\bottomrule
\end{tabular} 
\end{table}

\begin{table}[!h]
\small\centering
\caption{Exact match accuracy on NQ ($\uparrow$).} 
\label{tab:rag_nq_acc}
\begin{tabular}{lcccccc}
\toprule
\multirow{2}{*}{Method} & \multicolumn{3}{c}{Qwen-2.5-1.5B} & \multicolumn{3}{c}{Llama-3.2-3B} \\
 & $m=8$ & $m=16$  & $m=32$ & $m=8$ & $m=16$ & $m=32$   \\
\midrule
Frozen & 23.7 & 26.2 & 26.5 & 28.3 & 30.4 & 31.4 \\
Policy-only & 25.7 & 29.7 & 30.8 & 29.8 & 31.9 & 33.2 \\
ICRALM & 27.5 & 29.7 & 30.8 & 30.8 & 34.1 & 35.0 \\
REPLUG & 26.9 & 29.7 & 31.1 & 31.2 & 33.5 & 34.5 \\
Supervised-only & 27.7 & 31.3 & 32.4 & 31.8 & 34.6 & 35.4 \\
\midrule
Ours & \textbf{28.3} & \textbf{32.2} & \textbf{33.6} & \textbf{32.4} & \textbf{35.1} & \textbf{36.9} \\
\bottomrule
\end{tabular} 
\end{table}


\begin{table}[!h]
\small\centering
\caption{Document recall on NQ ($\uparrow$).} 
\label{tab:rag:nq_recall}
\begin{tabular}{lcccccc}
\toprule
\multirow{2}{*}{Method} & \multicolumn{3}{c}{Qwen-2.5-1.5B} & \multicolumn{3}{c}{Llama-3.2-3B} \\
 & $m=8$ & $m=16$  & $m=32$ & $m=8$ & $m=16$ & $m=32$   \\
\midrule
Frozen & 63.1 & 68.6 & 74.4 & 63.1 & 68.6 & 74.4\\
Policy-only & 56.2 & 62.7 & 67.5 & 55.3 & 63.1 & 67.3\\
ICRALM & 65.5 & 74.6 & 81.5 & 65.3 & 73.4 & 79.0 \\
REPLUG & 64.9 & 73.3 & 79.2 & 65.0 & 73.3 & 78.1 \\
Supervised-only & 65.9 & 75.3 & 82.8 & 65.8 & 74.2 & 81.8 \\
\midrule
Ours & \textbf{66.9} & \textbf{76.1} & \textbf{84.0} & \textbf{66.8} & \textbf{75.1} & \textbf{82.9} \\
\bottomrule
\end{tabular} 
\end{table}

\begin{table}[!h]
\small\centering
\caption{Exact match accuracy on TriviaQ ($\uparrow$).} 
\label{tab:rag_tq_acc}
\begin{tabular}{lcccccc}
\toprule
\multirow{2}{*}{Method} & \multicolumn{3}{c}{Qwen-2.5-1.5B} & \multicolumn{3}{c}{Llama-3.2-3B} \\
 & $m=8$ & $m=16$  & $m=32$ & $m=8$ & $m=16$ & $m=32$   \\
\midrule
Frozen & 55.7 & 58.3 & 59.3 & 71.4 & 72.0 & 73.0 \\
Policy-only & 56.8 & 58.3 & 61.0 & 70.4 & 71.2 & 72.8 \\
ICRALM & 60.3 & 61.1 & 61.9 & 72.1 & 74.5 & 74.6 \\
REPLUG & 60.3 & 61.6 & 63.3 & 72.7 & 74.6 & 75.2 \\
Supervised-only & 61.2 & 62.6 & 65.2 & 73.4 & 74.9 & 75.7\\
\midrule
Ours & \textbf{62.3} & \textbf{63.8} & \textbf{66.2} & \textbf{73.9} & \textbf{75.7} & \textbf{76.8} \\
\bottomrule
\end{tabular} 
\end{table}

\begin{table}[!h]
\small\centering
\caption{Document recall on TriviaQ ($\uparrow$).} 
\label{tab:rag_tq_recall}
\begin{tabular}{lcccccc}
\toprule
\multirow{2}{*}{Method} & \multicolumn{3}{c}{Qwen-2.5-1.5B} & \multicolumn{3}{c}{Llama-3.2-3B} \\
 & $m=8$ & $m=16$  & $m=32$ & $m=8$ & $m=16$ & $m=32$   \\
\midrule
Frozen & 84.9 & 88.5 & 90.0 & 84.9 & 88.5 & 90.0 \\
Policy-only & 80.5 & 83.9 & 86.6 & 80.1 & 83.9 & 85.9 \\
ICRALM & 86.3 & 88.9 & 90.6 & 86.7 & 90.1 & 91.5 \\
REPLUG & 86.5 & 89.5 & 91.1 & 86.4 & 90.1 & 91.6 \\
Supervised-only & 86.8 & 90.3 & 92.6 & 86.8 & 90.8 & 91.8 \\
\midrule
Ours & \textbf{87.5} & \textbf{91.4} & \textbf{93.9} & \textbf{87.8} & \textbf{91.6} & \textbf{93.1} \\
\bottomrule
\end{tabular} 
\end{table}

\begin{figure}[h!]
    \centering
    \includegraphics[width=0.8\linewidth, valign=t]{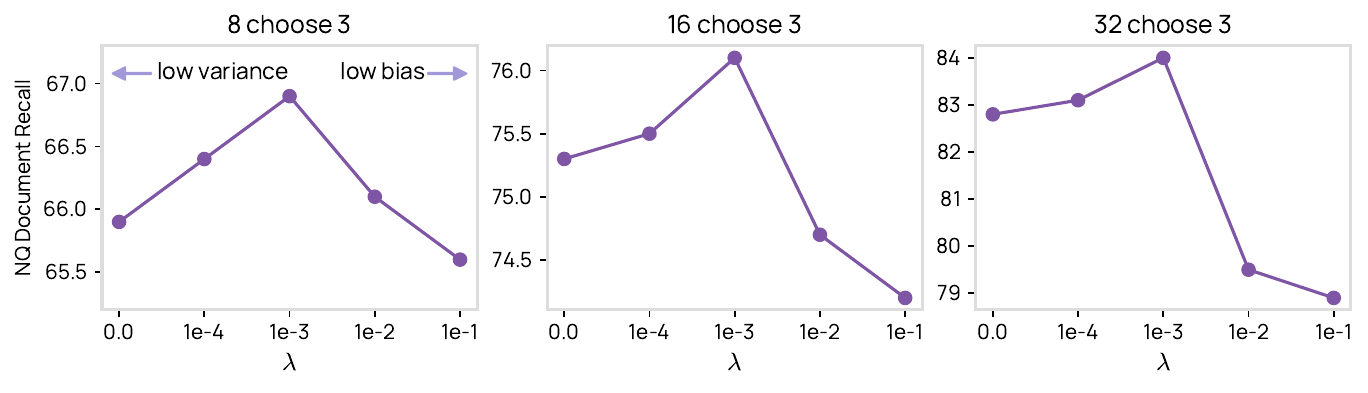}
    \caption{Bias/variance tradeoff of Qwen-2.5-1.5B adjusted by $\lambda$, whose optimal tradeoff is obtained at $\lambda\in(0,1)$.}
    \label{fig:rag_ablation}
\end{figure}

\subsection{Mixture-of-Experts} \label{sec:app:moe}

\paragraph{Hyperparameters.} For all the methods, we tune the learning rate in the grid \{1e-3, 2e-3, 5e-3, 1e-2\} and fix the batch size to 128. For all the methods that involve sampling from the multinomial distribution defined by the softmax of logits, we tune the temperature $\tau\in\{0.02, 0.05, 0.1, 0.5, 1.0, 2.0\}$. For our method, we tune $\lambda\in$\{1e-1, 1e-2, 1e-3, 1e-4\} in all the experiments. For the DefaultMoE baseline, we tune $\beta\in$\{0.6, 0.7, 0.8, 0.9, 0.99\}.

\paragraph{Setup.} For all the methods, we train for 25000 steps. Our model architecture uses n\_embed 64, context window 256, n\_head 8, and $m\in\{8,16,32,64,128\}$ with $k=3$.


\begin{table}[!h]
\setlength{\tabcolsep}{2.5pt}
\small\centering
\caption{Perplexity on Wikitext-103 by MoE ($\downarrow$).} 
\label{tab:moe_wiki}
\begin{tabular}{lccccccccc}
\toprule
\multirow{2}{*}{Method} & \multicolumn{5}{c}{One Layer} & \multicolumn{4}{c}{Two Layers} \\
 & $m=8$ & $m=16$  & $m=32$ & $m=64$ & $m=128$ & $m=8$ & $m=16$  & $m=32$ & $m=64$ \\
\midrule
Deterministic & 86.33 & 84.44 & 86.66 & 86.06 & 85.82 & 71.07 & 69.45 & 68.56 & 69.21 \\
Gaussian & 85.29 & 83.11 & 82.75 & 81.90 & 83.26 & 69.47 & 67.17 & 68.06 & 66.14 \\
DefaultMoE & 87.52 & 89.42 & 87.37 & 86.99 & 84.94 & 70.80 & 70.59 & 69.65 & 67.76 \\
SparseMixer & 87.87 & 86.00 & 86.89 & 86.19 & 86.05 & 71.37 & 68.74 & 68.70 & 69.95 \\
Supervised-only & 84.80 & 80.99 & 77.97 & 74.73 & 74.39 & 69.02 & 64.94 & 62.72 & 59.67 \\
\midrule
Ours & \textbf{84.12} & \textbf{80.09} & \textbf{76.58} & \textbf{73.05} & \textbf{71.84} & \textbf{68.56} & \textbf{64.18} & \textbf{61.58} & \textbf{58.65} \\
\bottomrule
\end{tabular}
\end{table}

\begin{figure}[h!]
    \centering
    \includegraphics[width=\linewidth, valign=t]{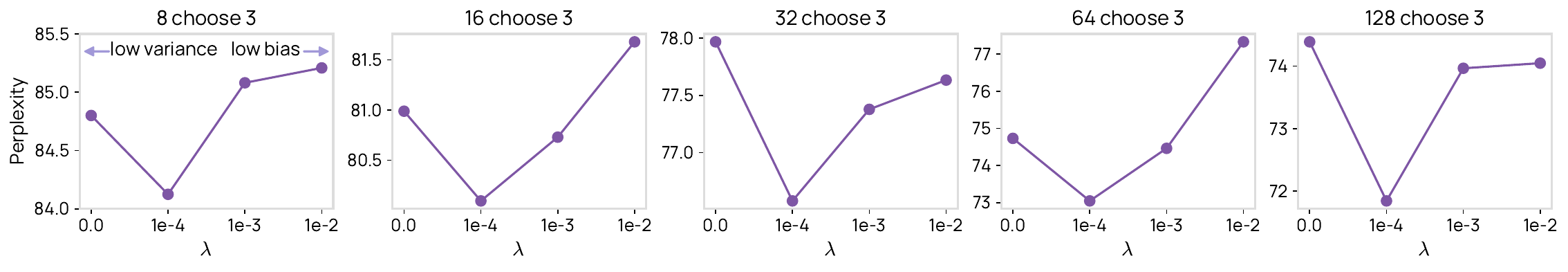}
    \includegraphics[width=\linewidth, valign=t]{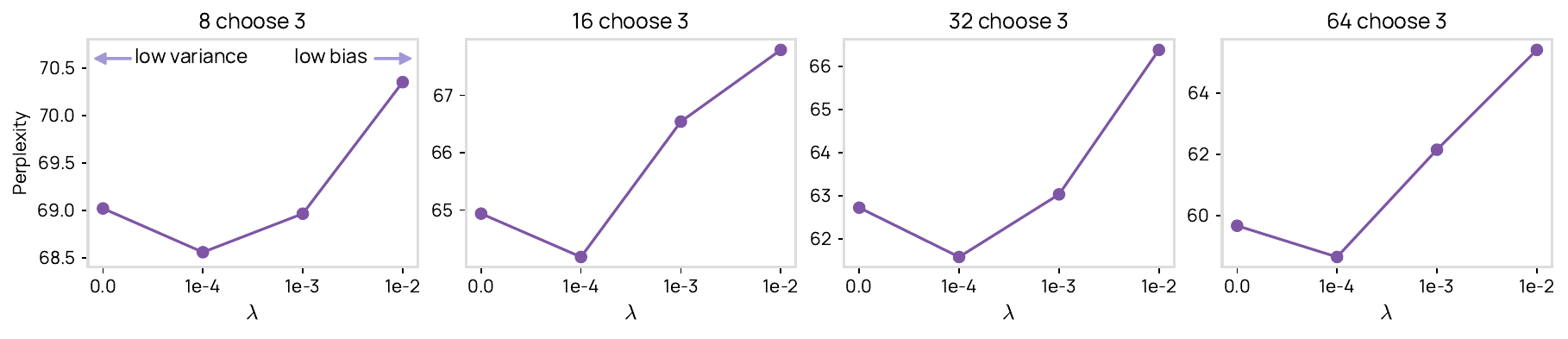}
    \caption{Bias/variance tradeoff adjusted by $\lambda$, showing that the optimal tradeoff is obtained at $\lambda\in(0,1)$ under different $m$'s. The upper row shows the results for 1 MoE layer, and the lower row shows the results for 2 MoE layers.}
    \label{fig:moe_ablation}
\end{figure}

\begin{figure}[h!]
    \centering
    \includegraphics[width=0.27\linewidth, valign=t]{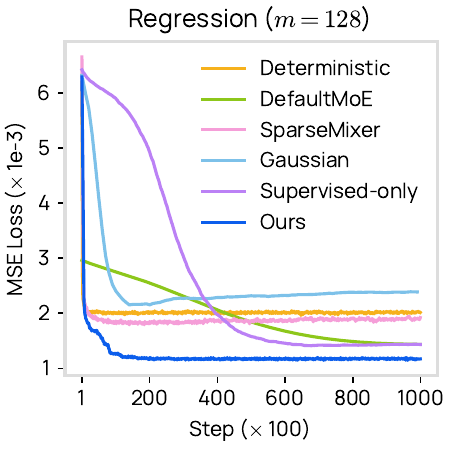}
    \includegraphics[width=0.27\linewidth, valign=t]{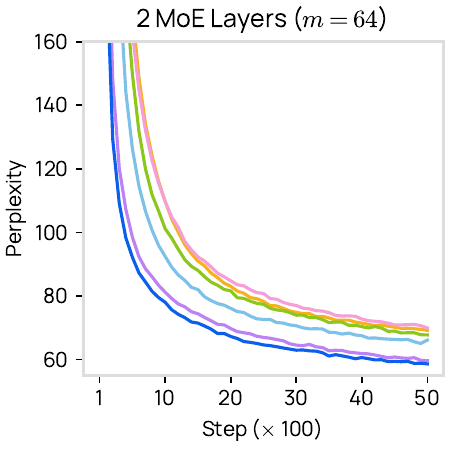}
    \includegraphics[width=0.27\linewidth, valign=t]{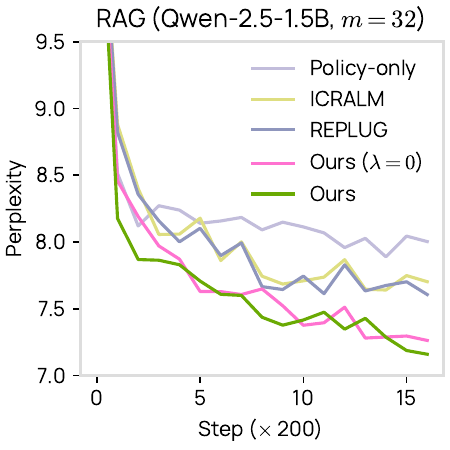}
    \caption{Convergence of difference methods for regression, MoE, and RAG applications.}
    \label{fig:convergence}
\end{figure}

\end{document}